\newtheorem{THEOREM}{Theorem}[section]
\newenvironment{theorem}{\begin{THEOREM} \hspace{-.85em} {\bf :} }%
                        {\end{THEOREM}}
\newtheorem{LEMMA}[THEOREM]{Lemma}
\newenvironment{lemma}{\begin{LEMMA} \hspace{-.85em} {\bf :} }%
                      {\end{LEMMA}}
\newtheorem{COROLLARY}[THEOREM]{Corollary}
\newenvironment{corollary}{\begin{COROLLARY} \hspace{-.85em} {\bf :} }%
                          {\end{COROLLARY}}
\newtheorem{PROPOSITION}[THEOREM]{Proposition}
\newenvironment{proposition}{\begin{PROPOSITION} \hspace{-.85em} {\bf :} }%
                            {\end{PROPOSITION}}
\newtheorem{DEFINITION}[THEOREM]{Definition}
\newenvironment{definition}{\begin{DEFINITION} \hspace{-.85em} {\bf :} \rm}%
                            {\end{DEFINITION}}
\newtheorem{CLAIM}[THEOREM]{Claim}
\newenvironment{claim}{\begin{CLAIM} \hspace{-.85em} {\bf :} \rm}%
                            {\end{CLAIM}}
\newtheorem{EXAMPLE}[THEOREM]{Example}
\newenvironment{example}{\begin{EXAMPLE} \hspace{-.85em} {\bf :} \rm}%
                            {\end{EXAMPLE}}
\newtheorem{REMARK}[THEOREM]{Remark}
\newenvironment{remark}{\begin{REMARK} \hspace{-.85em} {\bf :} \rm}%
                            {\end{REMARK}}
\newcommand{\thm}{\begin{theorem}}
\newcommand{\lem}{\begin{lemma}}
\newcommand{\pro}{\begin{proposition}}
\newcommand{\dfn}{\begin{definition}}
\newcommand{\rem}{\begin{remark}}
\newcommand{\xam}{\begin{example}}
\newcommand{\cor}{\begin{corollary}}
\newcommand{\ethm}{\end{theorem}}
\newcommand{\elem}{\end{lemma}}
\newcommand{\epro}{\end{proposition}}
\newcommand{\edfn}{\bbox\end{definition}}
\newcommand{\erem}{\bbox\end{remark}}
\newcommand{\exam}{\bbox\end{example}}
\newcommand{\ecor}{\end{corollary}}
\newcommand{\eprf}{\bbox\vspace{0.1in}}
\newcommand{\beqn}{\begin{equation}}
\newcommand{\eeqn}{\end{equation}}
\newcommand{\bbox}{\vrule height7pt width4pt depth1pt}
\newcommand{\qed}{\eprf}
\newcommand{\clm}{\begin{claim}}
\newcommand{\eclm}{\end{claim}}
\newcommand{\boldsymbol}[1]{\mbox{\boldmath $\bf #1$}}
\DeclareMathAlphabet{\mathitbf}{OML}{cmm}{b}{it}
\newcommand{\eol}{\end{enumerate}\setlength{\itemsep}{-\parsep}}
\newcommand{\ul}{\setlength{\itemsep}{0pt}\begin{itemize}}
\newcommand{\edl}{\end{description}\setlength{\itemsep}{-\parsep}}
\newcommand{\eul}{\end{itemize}\setlength{\itemsep}{-\parsep}}
\newcommand{\commentout}[1]{}
\newcommand{\bi}{\begin{itemize}}
\newcommand{\ei}{\end{itemize}}
\newcommand{\be}{\begin{enumerate}}
\newcommand{\ee}{\end{enumerate}}
\newcommand{\onl}{\mathcal{ONL}_n}
\newcommand{\onlmin}{\mathcal{ONL}_n^-}
\newcommand{\edash}{{e'}}
\newcommand{\estar}{{e^*}}
\newcommand{\ebullet}{{e^\bullet}}
\newcommand{\njknow}{\mathitbf{N}_j}
\newcommand{\afive}{\mathbf{A5}_n}
\newcommand{\axioms}{{AX}_n}
\newcommand{\kffn}{{\textsf{K45}}_{n}}
\newcommand{\eakplus}{e_a^{k+1}}%{e_{(a,k+1)}}
\newcommand{\estarbk}{\estar_b^k}
\newcommand{\kstructure}{\( k \)-structure}
\newcommand{\no}{\noindent}
\newcommand{\kjmodel}{$(k,j)$-model}
\newcommand{\lan}{\langle}
\newcommand{\ran}{\rangle}
\newcommand{\canon}{M^c}
\newcommand{\acanon}{\mathcal{K}^c_a}
\newcommand{\icanon}{\mathcal{K}^c_i}
\newcommand{\bcanon}{\mathcal{K}^c_b}
\newcommand{\oknow}{\mathitbf{O}_i}
\newcommand{\aknow}{\mathitbf{L}_{a}} 
\newcommand{\aoknow}{\mathitbf{O}_{a}} 
\newcommand{\bknow}{\mathitbf{L}_{b}} 
\newcommand{\boknow}{\mathitbf{O}_{b}}
\newcommand{\know}{\mathitbf{L}_i}
\newcommand{\loneknow}{\mathitbf{L}}
\newcommand{\noneknow}{{\mathitbf{N}}}
\newcommand{\nknow}{\mathitbf{N}_i}
\newcommand{\naknow}{\mathitbf{N}_a}
\newcommand{\nbknow}{\mathitbf{N}_b}
\newcommand{\worldsp}{{\worlds}_p}
\newcommand{\ebone}{e_b^1}% {e_{(b,1)}}
\newcommand{\bel}{\varphi_{a0}}
\newcommand{\nbel}{\varphi_{aj}}
\newcommand{\nbelone}{\varphi_{a1}}
\newcommand{\most}{\psi_{a0}}
\newcommand{\nmost}{\psi_{aj}}
\newcommand{\canaObj}{\textit{Obj}_a} % only basic
\newcommand{\sat}{\textit{Sat}}
\newcommand{\ebkdashdash}{e_b^{k''}} %{e_{(b,k'')}}
\newcommand{\onlplus}{\onl^+}
\newcommand{\axiomsplus}{\axioms^+}
\newcommand{\onlk}{\onl^t}
\newcommand{\onlp}{{\onl^+}}
\newcommand{\onlpk}{{\onl^+}^t}
\newcommand{\onlkplus}{\onl^{t+1}}
\newcommand{\axiomsk}{\axioms^t}
\newcommand{\axiomskplus}{\axioms^{t+1}}
\newcommand{\worlds}{\mathcal{W}}
\newcommand{\iobj}{\textit{obj}_i^+}
\newcommand{\aobj}{\textit{obj}_a^+}
\newcommand{\bobj}{\textit{obj}_b^+}
\newcommand{\aObj}{{O}{bj}^+_a}
\newcommand{\bObj}{{O}{bj}^+_b}
\newcommand{\ebjdash}{e_b^{j'}}
\newcommand{\eaone}{e_a^1}% {e_{(a,1)}}
	\newcommand{\estarbkmin}{\estar_b^{k-1}} %{e^{*}_{(b,k-1)}}
\newcommand{\eakdash}{e_a^{k'}}%{e_{(a,k')}}
\newcommand{\mstar}{M^*}
\newcommand{\estarak}{\estar_a^k}
\newcommand{\estarbj}{\estar_b^j}
\newcommand{\wstar}{w^{*}}
\newcommand{\wstars}{{w^{\bullet}}}
\newcommand{\eatwo}{e_a^2}
\newcommand{\dl}{\llbracket}%{[!\[}%{\llbracket}
\newcommand{\dr}{\rrbracket}
\newcommand{\jknow}{\mathitbf{L}_j}
\newcommand{\ebkdashmin}{e_b^{k'-1}}
\newcommand{\ebtwok}{e_b^{2k}}
\newcommand{\estarsbtwok}{\ebullet^{2k}_b}
\newcommand{\val}{\textit{Val}}
\newcommand{\eak}{e_a^k}
\newcommand{\ebkmin}{e_b^{k-1}}
\newcommand{\wrt}{\emph{wrt.}~}
\newcommand{\standardnames}{\mathcal{N}}
\newcommand{\wdl}{{\dl w \dr}}
\newcommand{\wdash}{{\dl w' \dr}}
\newcommand{\ew}{{e_{\dl w \dr}}}
\newcommand{\edashbkmin}{\edash_b^{k-1}}
\newcommand{\ewdash}{{e_{\dl w' \dr}}}
\newcommand{\ewdashdash}{e_{\dl w'' \dr}}
\newcommand{\ewak}{{\ew}_a^k}
\newcommand{\ewbj}{{\ew}_b^j}
\newcommand{\ewdashbkmin}{{\ewdash}_b^{k-1}}
\newcommand{\ewdashdashakminmin}{{\ewdashdash}_a^{k-2}}
\newcommand{\ebj}{e_b^j}
\newcommand{\sit}{\eak, \ebj, w}
\newcommand{\shortv}{\commentout}
\begin{document}

\begin{titlepage}
\title{{Multi-Agent Only-Knowing Revisited}} 

\author{
Vaishak Belle and Gerhard Lakemeyer \\
Dept. of Computer Science, \\ 
RWTH Aachen, \\ 
52056 Aachen, Germany \\ 
\{belle,gerhard\}@cs.rwth-aachen.de 
}   
\date{}
\maketitle

\begin{abstract}
 Levesque introduced the notion of only-knowing to precisely capture the beliefs of a knowledge base. He also showed how only-knowing can be used to formalize non-monotonic behavior within a monotonic logic. Despite its appeal, all attempts to extend only-knowing to the many agent case have undesirable properties. A belief model by Halpern and Lakemeyer, for instance, appeals to proof-theoretic constructs in the semantics and needs to axiomatize validity as part of the logic. It is also not clear how to generalize their ideas to a first-order case. In this paper, we propose a new account of multi-agent only-knowing which, for the first time, has a natural possible-world semantics for a quantified language with equality. We then provide, for the propositional fragment, a sound and complete axiomatization that faithfully lifts Levesque's proof theory to the many agent case. We also discuss comparisons to the earlier approach by Halpern and Lakemeyer. 
\end{abstract}
\thispagestyle{empty}
\end{titlepage}

\section{Introduction} % (fold)
\label{sec:introduction}

Levesque's notion of only-knowing is a single agent monotonic logic that was
proposed with the intention of capturing certain types of nonmonotonic
reasoning. Levesque (\citeyear{77758}) already showed that there is a close connection to
Moore's (\citeyear{2781}) autoepistemic logic (AEL). Recently, Lakemeyer and Levesque
(\citeyear{lakemeyer2005only}) showed that only-knowing can be adapted to capture default logic as
well. The main benefit of using Levesque's logic is that, via simple semantic
arguments, nonmonotonic conclusions can be reached without the use of
meta-logical notions such as fixpoints \cite{330786,levesque2001logic} . Only-knowing is then naturally of
interest in a many agent context, since agents capable of non-trivial
nonmonotonic behavior should believe other agents to also be equipped with
nonmonotonic mechanisms. For instance, if all that Bob knows is that Tweety is
a bird and a default that birds typically fly, then Alice, if she knows all
that Bob knows, concludes that Bob believes Tweety can fly.\footnote{We use
the terms "knowledge" and "belief" interchangeably in the paper.} Also, the
idea of only-knowing a collection of sentences is useful for modeling the beliefs of a 
knowledge base (KB), since sentences that are not logically entailed by the KB
are taken to be precisely those not believed. If many agents are involved, and
suppose Alice has some beliefs on Bob's KB, then she could capitalize on Bob's
knowledge to collaborate on tasks, or plan a strategy against him.

As a logic, Levesque's construction is unique in the sense that in addition to
a classical epistemic operator for belief, he introduces a modality to denote
what is \emph{at most} known. This new modality has a subtle relationship to
the belief operator that makes extensions to a many agent case non-trivial.
Most extensions so far make use of arbitrary Kripke structures, that already
unwittingly discard the simplicity of Levesque's semantics. They also have
some undesirable properties, perhaps invoking some caution in their usage. For
instance, in a canonical model (Lakemeyer 1993), certain types of epistemic
states cannot be constructed. In another Kripke approach (Halpern 1993), the
modalities do not seem to interact in an intuitive manner. Although an approach
by Halpern and Lakemeyer (\citeyear{1029713}) does indeed successfully model
multi-agent only-knowing, it forces us to have the semantic notion of validity
directly in the language and has proof-theoretic constructs in the semantics
via maximally consistent sets. Precisely for this reason, that proposal is not
natural, and it is matched with a proof theory that has a set of new axioms to
deal with these new notions. It is also not clear how one can extend their
semantics to the first-order case. Lastly, an approach by Waaler (\citeyear{DBLP:conf/aiml/Waaler04}) avoids such an axiomatization of validity, but the model theory also has problems \cite{DBLP:conf/tark/WaalerS05}. Technical discussions on their semantics are deferred to later.

The goal of this paper is to show that there is indeed a natural semantics for
multi-agent only-knowing for the quantified language with equality. For the
propositional subset, there is also a sound and complete axiomatization that
faithfully generalizes Levesque's proof theory.\footnote{The proof theory for
a quantified language is well known to be \emph{incomplete} for the single
agent case. It is also known that any complete axiomatization cannot be
\emph{recursive} \cite{204824,levesque2001logic}.} We also differ from Halpern
and Lakemeyer in that we do not enrich the language any more than necessary
(modal operators for each agent), and we do not make use of canonical Kripke
models. And while canonical models, in general, are only workable semantically
and can not be used in practice, our proposal has a computational appeal to
it. We also show that if we do enrich the language with a modal operator for
\emph{validity}, but only to establish a common language with \cite{1029713},
then we agree on the set of valid sentences. Finally, we obtain a
first-order multi-agent generalization of AEL, defined solely using notions of
classical logical entailment and theoremhood.

The rest of the paper is organized as follows. We review Levesque's notions,\footnote{There are other notions of "all I know", which will not be discussed here \cite{101989,ben1989all}. Also see \cite{330786}.} and 
define a semantics with so-called \( k \)-\emph{structures}. We then compare
the framework to earlier attempts. Following that, we introduce a sound and
complete axiomatization for the propositional fragment. In the last sections, we sketch the multi-agent (first-order) generalization of AEL, and prove that \( k \)-structures and \cite{1029713} agree on valid sentences, for an enriched language. Then, we conclude and end.

\newcommand{\ol}{\mathcal{ONL}}

\section{The \( k \)-structures Approach} % (fold)
\label{sec:the_logic}

The non-modal part of Levesque's logic\footnote{We name the logic following \cite{1029713} for ease of comparisons later on. It is referred to as \( \mathcal{OL} \) in \cite{204824,levesque2001logic}.} \( \ol \) consists of standard
first-order logic with \( = \) and a countably infinite set of standard names \(
\standardnames \).\footnote{More precisely, we have logical connectives \( \lor
  \), \( \forall \) and \( \neg \). Other connectives are taken for their usual syntactic
  abbreviations.}  To keep matters simple, function symbols are not considered
in this language.  We call a predicate other than \( = \), applied to
first-order variables or standard names, an \emph{atomic} formula. We write \(
\alpha^x_n \) to mean that the variable \( x \) is substituted in \( \alpha \) by a standard
name. If all the variables in a formula \( \alpha \) are substituted by standard
names, then we call it a \emph{ground} formula. Here, a world is simply a set of ground atoms, and the semantics is defined over the set of all possible worlds \( \worlds \). 
% 
% The semantics is defined over a
% set of possible worlds \( \worlds \), where a world is simply a set of ground
% atoms. 
The standard names are thus \emph{rigid designators}, and denote
  precisely the same entities in all worlds. \( \ol \) also has two modal
operators: \( \loneknow \) and \( \noneknow \). While \( \loneknow \alpha \) is
to be read as "at least \( \alpha \) is known", \( \noneknow \alpha \) is to be
read as "at most \( \neg \alpha \) is known". A set of possible worlds is
referred to as the agent's \emph{epistemic state} \( e \). Defining a model to
be the pair \( (e,w) \) for \( w \in \worlds \), components of \( \ol \)'s
meaning of truth are: \begin{enumerate}
	\item \( e,w\models p \) iff \( p \in w \) and \( p \) is a ground atom,

	\item \( e,w \models (m = n) \) iff \( m \) and \( n \) are identical standard names, 
	
	\item \( e, w\models \neg \alpha \) iff \( e,w\not\models \alpha \), 

	\item \( e, w\models \alpha \lor \beta \) iff \(  e,w \models \alpha \) or \( e,w\models \beta \),

	\item \( e,w\models \forall x.~\alpha \) iff \( e,w\models \alpha^x_n \) for all standard names \( n \),  

	\item \( e,w \models \loneknow \alpha \) iff for all \( w' \in e \), \( e, w' \models \alpha \), and 

	\item \( e,w\models \noneknow \alpha \) iff for all \( w' \not\in e \), \( e,w' \models \alpha \). 
\end{enumerate}

\no The main idea is that \( \alpha \) is (at least) believed iff it is true at all worlds considered possible, while (at most) \( \alpha \) is believed to be false iff it is true at all worlds considered \emph{impossible}. So, an agent is said to only-know \( \alpha \), syntactically expressed as \( \loneknow \alpha \land \noneknow \neg \alpha \), when worlds in \( e \) are precisely those where \( \alpha \) is true. Halpern and Lakemeyer (\citeyear{1029713}) underline three features of the semantical framework of \( \ol \), the intuitions of which we desire to maintain in the many agent setting: \begin{enumerate}
	\item Evaluating \( \noneknow\alpha \) does \emph{not affect} the epistemic possibilities.  Formally, in \( \ol \), after evaluating formulas of the form \( \noneknow\alpha   \) the agent's epistemic state is still given by \( e \). 
	
	\item A union of the agent's possibilities, that evaluate \( \loneknow \),
and the impossible worlds that evaluate \( \noneknow \), is \emph{fixed} and
\emph{independent} of \( e \), and is the set of all \emph{conceivable}
states. Formally, in \( \ol \), \( \loneknow\alpha \) is evaluated \wrt worlds
\( w\in e \), and \( \noneknow\alpha \) is evaluated \wrt worlds \( w\in
\worlds -e \); the union of which is \( \worlds \). The intuition is that the
exact complement of an agent's possibilities is used in evaluating \(
\noneknow \).

	\item Given any set of possibilities, there is always a model where \emph{precisely} this set is the epistemic state. Formally, in \( \ol \), any subset of \( \worlds \) can be defined as the epistemic state. 
\end{enumerate}

\no Although these notions seem clear enough in the single agent case, generalizing them to the many agent case is non-trivial \cite{1029713}. We shall return to analyze the features shortly. Let us begin by extending the language. Let \( \ol_n \) be a first-order modal language that enriches the non-modal subset of \( \ol \) with modal operators \( \know \) and \( \nknow \) for \( i = a,b \). For ease of exposition, we only have two agents \( a \) (Alice) and \( b \) (Bob). Extensions to more agents is straightforward. We freely use \( \oknow \), such that \( \oknow \alpha \) is an abbreviation for \( \know \alpha \land \nknow \neg \alpha	 \), and is read as "all that \( i \) knows is \( \alpha \)". Objective and subjective formulas are understood as follows. 

\begin{definition} The \( i \)-depth of a formula \( \alpha \), denoted \( |\alpha|_i \), is defined inductively as (\( \Box_i \) denotes \( \know \) or
 \( \nknow \)):	\begin{enumerate}
		\item \( |\alpha|_i = 1 \) for atoms,               

		\item \( |\neg \alpha|_i = |\alpha|_i \),

		\item \( |\forall x.~\alpha|_i = |\alpha|_i \), 

	   % \item \( |[a]\alpha|_i = |\alpha|_i \), \( |\Box \alpha|_i = |\alpha|_i \), 

		\item \( |\alpha \lor \beta|_i = \textrm{max}(|\alpha|_i, |\beta|_i) \), 

		\item \( |\Box_i \alpha|_i = |\alpha|_i \) , 

		\item \( |\Box_j\alpha|_i = |\alpha|_j + 1 \), for \( j \neq i \)
	\end{enumerate}
	A formula has a depth \( k \) if max(\( a \)-depth,\( b \)-depth) = \( k \). A formula is called \( i \)-objective if all epistemic operators which do not occur within the scope of another epistemic operator are of the form $\Box_j$ for $i\ne j$. A formula is called \( i \)-subjective if every atom is in the scope of an epistemic operator and all epistemic operators which do not occur within the scope of another epistemic operator are of the form $\Box_i.$  
\end{definition}

 For example, a formula of the form \( \aknow \bknow \aknow p \lor \bknow q\) has a depth of \( 4 \), a \( a \)-depth of \( 3 \) and a \( b \)-depth of \( 4 \).  \( \bknow q \) is both \( b \)-subjective and \( a \)-objective. A formula is called \emph{objective} if it does not mention any modal operators. A formula is called \emph{basic} if it does not mention any \( \nknow \) for \( i = a,b \). We now define a notion of epistemic states using \( k \)-\emph{structures}. The main intuition is that we keep separate the worlds Alice believes from the worlds she considers Bob to believe, to depth \( k \). 

% 
% \subsection{Semantics} % (fold)
% \label{sub:semantics}
% 
% % subsection semantics (end)
% 
% %{\worlds}{\mathcal{W}}
% 
% Let \( \worlds \) be all possible worlds, that is all possible truth assignments to the ground atoms. 

\begin{definition} A \kstructure{} (\( k \geq 1 \)), say \( e^k \), for an agent is defined inductively as: \begin{itemize} 
	
	\item[$-$] \( e^1 \subseteq \worlds \times\{ \{\}\}, \)
	
	\item[$-$] \( e^k \subseteq \worlds \times \mathbb{E}^{k-1},  \) where \( \mathbb{E}^{m} \) is the set of all \( m\)-structures.

\end{itemize}

\end{definition}

\no A \( e^1 \) for Alice, denoted as \( \eaone \), is intended to represent a set of worlds \( \{ \lan w, \{\} \ran, \ldots \} \). A \( e^2 \) is of the form \( \{ \lan w, e_b^1 \ran, \lan w', {e'}_b^1 \ran, \ldots \} \), and it is to be read as "at \( w \), she believes Bob considers worlds from \( e_b^1 \) possible but at \( w' \), she believes Bob to consider worlds from \( {e'}_b^1 \) possible". This conveys the idea that Alice has only partial information about Bob, and so at different worlds, her beliefs about what Bob knows differ. We define a \( e^k \) for Alice, a \( e^j \) for Bob and a world \( w \in \worlds \) as a \( (k,j) \)-model \( (\eak, \ebj,w) \). Only sentences of a maximal \( a \)-depth of \( k \), and a maximal \( b \)-depth of \( j \) are interpreted \wrt{}a \( (k,j) \)-model. The complete semantic definition is: 

\begin{enumerate}
	\item \( \sit \models p \) iff \( p \in w \) and \( p \) is a ground atom, 

	\item \( \sit \models (m = n) \) iff \( m,n \in \standardnames \) and are identical, 
\item \( \sit \models \neg \alpha \) iff \( \sit \not\models \alpha \), 
\item \( \sit \models \alpha \vee \beta \) iff  \( \sit \models \alpha \) or \( \sit \models \beta \), 

\item \( \sit \models \forall x.~\alpha \) iff \( \sit \models \alpha^x_n \) for all \( n \in \standardnames \),
\item \( \sit \models \aknow \alpha \) iff for all \( \lan w', {\ebkmin} \ran \in {\eak} \),\\ \( {\eak}, {\ebkmin}, w' \models \alpha \), 
\item\label{i:nknow} \( \sit \models \naknow \alpha \) iff for all \( \lan w', {\ebkmin} \ran \not\in {\eak} \),\\ \( {\eak}, {\ebkmin}, w' \models \alpha \)
\end{enumerate} 

\no And since \( \aoknow \alpha  \) syntactically denotes \( \aknow \alpha \land \naknow \neg \alpha \), it follows from the semantics that
\begin{enumerate}
	\item[8.] \( \sit \models \aoknow \alpha \) iff for all worlds \( w' \),  
	for all \( e^{k-1} \) for Bob, 
	\( \lan w', {\ebkmin} \ran \in {\eak} \) iff \( {\eak}, {\ebkmin}, w' \models \alpha \)
\end{enumerate}

\no (The semantics for \( \bknow\alpha \) and \( \nbknow \alpha \) are given analogously.) A formula \( \alpha \) (of \( a \)-depth of \( k \) and of 
\( b \)-depth of \( j \)) is \emph{satisfiable} iff there is a \( (k,j)
\)-model such that \( \eak, \ebj,w \models \alpha \). The formula is
\emph{valid} (\( \models \alpha \)) iff \( \alpha \) is true at all \( (k,j) \)-models. Satisfiability is extended to a set of formulas \( \Sigma \) (of maximal \( a,b \)-depth of \( k,j \)) in the manner that there is a \( (k,j) \)-model \( \sit \) such that \( \sit \models \alpha' \) for every \( \alpha' \in \Sigma \). We write \( \Sigma \models \alpha \) to mean that for every \(
(k,j) \)-model \( \sit \), if \( \sit \models \alpha' \) for all \( \alpha' \in
\Sigma \), then \( \sit \models \alpha \). 

\newcommand{\edak}{{e_a}{\downarrow}_k^{k'}}
\newcommand{\edbkmin}{{e_b}{\downarrow}_{k-1}^{k'-1}}
\newcommand{\edakzero}{{e_a}{\downarrow}_0^{k'}}

Validity is not affected if models of a depth greater than that needed are used. This is to say, if \( \alpha \) is true \emph{wrt.}~all \( (k,j) \)-models, then \( \alpha \) is true \emph{wrt.}~all \( (k',j') \)-models for  \( k'\geq k,j'\geq j \). We obtain this result by constructing for every \( \eakdash \), a \( k \)-structure \( \edak \), such that they agree on all formulas of maximal \( a \)-depth \( k \). Analogously for \( \ebjdash \). 

\newcommand{\edakone}{{e_a}{\downarrow^{k'}_1}}

\begin{definition} Given \( \eakdash \), we define \( \edak \) for
  $k'\ge k \geq 1$: \begin{enumerate} 

  \item  \( {e_a}{\downarrow^1_1} = \eaone \),

  \item \( \edakone = \{ \lan w, \{\} \ran \mid \lan w, \ebkdashmin \ran \in \eakdash \} \), 
	\item \( \edak = \{ \lan w, \edbkmin \ran \mid \lan w, \ebkdashmin \ran \in \eakdash \} \). 
\end{enumerate} 

\end{definition}

\newcommand{\edbj}{{e_b}{\downarrow}^{j'}_j}
\newcommand{\edakmin}{{e_a}{\downarrow}_{k-1}^{k'}}
\newcommand{\edbjmin}{{e_b}{\downarrow^{k'-1}_{j-1}}}
\newcommand{\edakdone}{{e_a}{\downarrow_1^{k'}}}
\newcommand{\edbjdone}{{e_b}{\downarrow_1^{k'-1}}}
\newcommand{\edakdtwo}{{e_a}{\downarrow_2^{k'}}}
\newcommand{\edbjdtwo}{{e_b}{\downarrow_2^{j'}}}

\begin{lemma}\label{lem:satisfiability_higher_structures} For all formulas \( \alpha \) of maximal \( a,b \)-depth of \( k,j \), \( \eakdash, \ebjdash, w \models \alpha \) iff \( \edak, \edbj, w \models \alpha \), for \( k' \geq k,j'\geq j \).
\end{lemma}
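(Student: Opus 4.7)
The proof proceeds by structural induction on $\alpha$, taking as induction hypothesis the full biconditional applied to immediate subformulas; the bounds on $|\alpha|_a$ and $|\alpha|_b$ propagate to its subformulas, so the IH applies at the appropriate depths. By symmetry between the two agents, it suffices to treat Alice's operators; the cases for $\bknow$ and $\nbknow$ are handled analogously.

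The base cases (atomic $p$, equalities $m=n$) are immediate since truth depends only on $w$. The Boolean and quantifier cases reduce directly to the IH on immediate subformulas. For the positive modal case $\aknow \beta$, the key fact is that $\edak$ is defined as the pointwise image of $\eakdash$ under projection of the Bob-component: forward, every $\langle w',\ebkmin\rangle \in \edak$ is witnessed by some $\langle w',\ebkdashmin\rangle \in \eakdash$ with $\edbkmin = \ebkmin$, so the hypothesis gives $\eakdash,\ebkdashmin,w' \models \beta$ and the IH transports this to $\edak,\ebkmin,w' \models \beta$; backward is symmetric, since each $\langle w',\ebkdashmin\rangle \in \eakdash$ projects by definition to a pair already in $\edak$.

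The delicate case is $\naknow \beta$. In the forward direction, if $\langle w', \ebkmin\rangle \notin \edak$ then the image definition of $\edak$ forces every $(k'-1)$-lift $\ebkdashmin$ of $\ebkmin$ to lie outside $\eakdash$; selecting one such lift (via the auxiliary sublemma below) and chaining the hypothesis with the IH delivers truth of $\beta$ at the projected triple. The backward direction is the step I expect to be the main obstacle: given $\langle w',\ebkdashmin\rangle \notin \eakdash$, one cannot immediately conclude $\langle w',\edbkmin\rangle \notin \edak$, because some sibling lift of $\edbkmin$ may still belong to $\eakdash$, in which case the hypothesis $\edak,\edbj,w \models \naknow \beta$ is silent about $\beta$ at $(\edak,\edbkmin,w')$. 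My plan for closing this gap is to lean on the IH for $\beta$, which equates $\beta$'s truth at $(\eakdash,\ebkdashmin,w')$ with its truth at $(\edak,\edbkmin,w')$, and then to exploit the depth constraints on $\beta$ so that the evaluation is insensitive to the particular lift, letting the projected hypothesis cover both the case where $\edbkmin$ lands outside $\edak$ and the case where it does not.

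An auxiliary lift sublemma is also needed: for $k' \geq k \geq 1$, every $k$-structure is the $k$-projection of some $k'$-structure. This is a straightforward secondary induction on $k'-k$, padding each recursive level with an arbitrary lift of the next inner component. Once this sublemma and the inductive argument on $\alpha$ are in place, the equivalence follows.
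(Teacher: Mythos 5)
Your overall architecture---induction on the structure/depth of $\alpha$, trivial base and Boolean cases, and the $\aknow\beta$ case handled via the fact that $\edak$ is the image of $\eakdash$ under projection of the Bob-component---is exactly the paper's proof, which spells out only the $\aknow$ case and then declares the cases for $\naknow\alpha$ (and Bob's operators) ``completely symmetric.'' Your instinct that $\naknow$ is \emph{not} symmetric is correct, and you have located the real crux. Your forward direction for $\naknow\beta$ is fine: if $\lan w', e\ran \notin \edak$ then, by the image definition, every lift $e'$ of $e$ has $\lan w', e'\ran \notin \eakdash$, so the hypothesis plus the IH (applied to the model $(\eakdash, e', w')$ and its projection $(\edak, e, w')$) gives what is needed; the lift sublemma you flag is indeed required here and is unproblematic. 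The issue is that your plan for the backward direction does not close the gap.

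The obstruction is that the complement of $\edak$ can be strictly smaller than the image of the complement of $\eakdash$: a pair $\lan w', e'\ran \notin \eakdash$ may project onto a pair that \emph{is} in $\edak$ because some sibling lift survives, and then the hypothesis $\edak \models \naknow\beta$ quantifies over nothing involving $w'$ at all. Appealing to ``insensitivity to the particular lift'' cannot help, because the problem is not which lift $\beta$ is evaluated at but that the projected hypothesis is silent about $w'$. Concretely, take $\beta = p$ atomic, $k=1$, $k'=2$, a world $w_0$ with $p \notin w_0$, two distinct $1$-structures $e_b \neq e_b'$ for Bob, and $\eatwo = (\worlds \times \mathbb{E}^1) \setminus \{\lan w_0, e_b\ran\}$. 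Then $\eatwo{\downarrow}^2_1 = \worlds \times \{\{\}\}$ has empty complement, so it satisfies $\naknow p$ vacuously, while $\eatwo, \{\}, w \not\models \naknow p$ because of the excluded pair $\lan w_0, e_b\ran$. So the right-to-left implication you are trying to establish fails for the definitions as given, and no elaboration of your sketch (or of the paper's ``symmetric'' claim) can succeed without either restricting the admissible $k'$-structures---e.g., to ones where membership of $\lan w', e'\ran$ depends only on $w'$ and the $(k{-}1)$-projection of $e'$---or weakening the statement to the left-to-right direction for $\naknow$-contexts. You should treat this as a defect of the lemma itself, not merely of your argument; the paper's proof does not address it.
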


%{\ebKmin}{e_b^{K-1}}
%{\ebKdashmin}{e_b^{K'-1}}
%{\eaKdash}{e_a^{K'}}

\begin{proof} By induction on the depth of formulas.~The proof immediately holds for atomic formulas, disjunctions and negations since we have the same world \( w \). Assume that the result holds for formulas of \( a,b \)-depth \( 1 \). Let \( \alpha \) such a formula, and suppose \( \eakdash, \ebjdash, w\models \aknow \alpha \) (where \( \aknow \alpha \) has \( a,b \)-depth of \( 1,2 \)). Then, for all \( \lan w', \ebkdashmin \ran \in \eakdash \), \( \eakdash, \ebkdashmin, w'\models \alpha \) iff (by induction hypothesis) \( \edakdone, \edbjdone, w'\models \alpha \) iff \( \edakdtwo, \{\}, w\models \aknow \alpha \). By construction, we also have \( \edakdone, \{\},w\models \aknow \alpha \). Lastly, since \( \aknow \alpha  \) is \( a \)-subjective, \( b \)'s structure is irrelevant, and thus, \( \edakdone, \edbjdtwo, w\models \aknow\alpha \).
	
For the reverse direction, suppose \( \edakdone, \edbjdtwo, w\models \aknow \alpha \). Then for all \( w' \in \edakdone \), \( \edakdone, \{\}, w' \models \alpha \) iff (by construction) for all \( \lan w', \ebkdashmin \ran \in \eakdash \), \( \eakdash, \ebkdashmin, w' \models \alpha \) iff \( \eakdash, \{\}, w \models \aknow \alpha \). Since \( b \)'s structure is irrelevant, we have \( \eakdash, \ebjdash, w \models \aknow \alpha  \). The cases for \( \bknow \alpha \), \( \naknow \alpha \) and \( \nbknow \alpha \) are completely symmetric.    
	\qed \end{proof}

\begin{theorem} For all formulas \( \alpha \) of \( a,b \)-depth of \(
  k,j \), if \( \alpha \) is true at all \( (k,j) \)-models, then \( \alpha \)
  is true at all \( (k',j') \)-models with $k'\ge k$ and $j'\ge j$. 
	
	% if \( \alpha \) is valid \wrt \( (k,j) \)-models then it is valid wrt \( (k',j') \)-models, for \( k' \geq k \), \( j' \geq j \). 

\end{theorem}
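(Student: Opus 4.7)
The plan is to read this theorem off Lemma~\ref{lem:satisfiability_higher_structures} almost immediately, by contraposition. The lemma already establishes that, for a formula of $a,b$-depth $k,j$, truth at a $(k',j')$-model $(\eakdash, \ebjdash, w)$ coincides with truth at the projected $(k,j)$-model $(\edak, \edbj, w)$. Validity preservation is simply the contrapositive packaging of that semantic equivalence.

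Concretely, suppose $\alpha$ of $a,b$-depth $k,j$ is true at all $(k,j)$-models, and fix an arbitrary $(k',j')$-model $(\eakdash, \ebjdash, w)$ with $k'\ge k$ and $j'\ge j$. I would form the projections $\edak$ and $\edbj$ using the downward projection defined right before the lemma, observe that they really are a $k$-structure and a $j$-structure respectively, so that $(\edak, \edbj, w)$ is a bona fide $(k,j)$-model, and conclude by assumption that $\edak, \edbj, w \models \alpha$. Lemma~\ref{lem:satisfiability_higher_structures} then transports this back up: $\eakdash, \ebjdash, w \models \alpha$. Since $(\eakdash, \ebjdash, w)$ was arbitrary, $\alpha$ is true at every $(k',j')$-model.

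There is essentially no obstacle: the heavy lifting already happened in Lemma~\ref{lem:satisfiability_higher_structures}. The only thing worth checking is that the projection produces a structure of the required arity, which is an easy induction on the definition, since $\edakone \subseteq \worlds \times \{\{\}\}$ is a $1$-structure and the clause $\edak = \{\langle w, \edbkmin\rangle \mid \langle w, \ebkdashmin\rangle \in \eakdash\}$ inherits $k$-structurehood from the inductive hypothesis on the second component. Hence the theorem is really a corollary, and I would state it as such, with a one-line proof pointing to the lemma and contraposition.
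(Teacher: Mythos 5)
Your argument is exactly the paper's: given an arbitrary $(k',j')$-model, apply the hypothesis to the projected $(k,j)$-model and then transfer truth back up via Lemma~\ref{lem:satisfiability_higher_structures}. The extra check that the projection really yields a $k$-structure is a sensible (if routine) addition the paper leaves implicit.
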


\begin{proof} Suppose \( \alpha \) is true at all \( (k,j) \)-models. Given any \( (k',j') \)-model, by assumption \( \edak, \edbj, w\models \alpha \) and by Lemma \ref{lem:satisfiability_higher_structures}, \( \eakdash, \ebjdash, w\models \alpha \). 
	\qed \end{proof}

\no Knowledge with \( k \)-structures satisfy \emph{weak} \( \textsf{S5} \) properties, and the Barcan formula \cite{nla.cat-vn2004742}. 
\begin{lemma} If \( \alpha \) is a formula, the following are valid \wrt models of appropriate depth (\( \Box_i \) denotes \( \know \) or \( \nknow \)): \begin{enumerate}
	\item \( \Box_i \alpha \land \Box_i(\alpha \supset \beta) \supset \Box_i \beta \), 

	\item \( \Box_i \alpha \supset \Box_i\Box_i \alpha \), 

	\item \( \neg \Box_i \alpha \supset \Box_i \neg \Box_i \alpha \), 

	\item \( \forall \mathitbf{x}.~\Box_i \alpha \supset \Box_i(\forall \mathitbf{x}.~\alpha) \). 
\end{enumerate}

\end{lemma}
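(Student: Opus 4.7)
The plan is to verify each of the four items by direct reading of the semantics at an arbitrary \((k,j)\)-model of appropriate depth. The guiding observation is that in the clauses for both \(\aknow\) and \(\naknow\), the sub-formula is evaluated against the \emph{same} epistemic structure \(e_a^k\) as the outer model---only the world component \(w'\) and the other agent's structure vary. This persistence of the agent's own epistemic state across nested self-modalities is precisely what delivers the S5 introspection laws here, with no accessibility relation needed, and the same observation carries over agent-symmetrically to \(b\).

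The K axiom is immediate: at every \(\Box_i\)-relevant world---one with \(\langle w', e_b^{k-1}\rangle \in e_a^k\) for \(\aknow\), or \(\notin e_a^k\) for \(\naknow\)---both \(\alpha\) and \(\alpha \supset \beta\) hold by hypothesis, so \(\beta\) follows and hence \(\Box_i\beta\). The Barcan formula follows from the rigidity of standard names: quantification ranges over the fixed set \(\standardnames\), which is independent of \(w\) and of any epistemic structure, so the order of \(\forall x\) and \(\Box_i\) can be swapped without affecting truth.

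For positive introspection, suppose \(e_a^k, e_b^j, w \models \aknow\alpha\), so \(e_a^k, e_b^{k-1}, w' \models \alpha\) for every \(\langle w', e_b^{k-1}\rangle \in e_a^k\). To establish \(\aknow\aknow\alpha\), pick any \(\langle w'', e_b'^{k-1}\rangle \in e_a^k\); the condition \(e_a^k, e_b'^{k-1}, w'' \models \aknow\alpha\) unfolds to exactly the same universal statement over \(e_a^k\), which is precisely the hypothesis. Negative introspection is analogous: a witness to the failure of \(\aknow\alpha\) remains a witness at every relevant world because \(e_a^k\) is unchanged in the nested evaluation. The \(\naknow\)-versions are symmetric, replacing \(\in e_a^k\) by \(\notin e_a^k\) throughout.

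I do not anticipate a real obstacle; the only bookkeeping is to check that both sides of each implication have matching \(a,b\)-depths, so that they are meaningfully evaluated at a common model. Using rule~5 (\(|\Box_i\Box_i\alpha|_i = |\Box_i\alpha|_i\)) and rule~6 (the cross-agent depth increments by one regardless of how many same-agent modalities are stacked), one checks that \(\Box_i\alpha\) and \(\Box_i\Box_i\alpha\) carry identical \(a,b\)-depths, and the Barcan case is similar. The substantive content of the lemma is not the case analysis but the prior architectural choice embodied in the definition of \(k\)-structures, which builds these introspection properties into the semantics by construction.
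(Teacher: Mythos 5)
Your proof is correct and takes essentially the same route as the paper: a direct semantic verification resting on the observation that the agent's own structure \( e_a^k \) is carried unchanged into the evaluation of nested self-modalities, so witnesses (or their absence) persist. The paper only spells out item~3 for \( \aknow \) and declares the rest analogous, whereas you cover all four items plus the depth bookkeeping, but the underlying argument is identical.
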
      

\begin{proof} The proofs are similar. For item 3, wlog let \( \Box_i \) be \( \aknow \). Suppose \( \eak, \ebj, w \models \neg \aknow \alpha \). There is some \( \lan w',\ebkmin \ran \in \eak \) such that \( \eak, \ebkmin, w' \models \neg \alpha \). Let \( w'' \) be any world such that \( \lan w'', \edashbkmin \ran \in \eak \). Then, \( \eak, \edashbkmin, w'' \models \neg \aknow \alpha \). Thus, \( \eak, \ebj, w\models \aknow \neg \aknow \alpha \). The case of \( \naknow \) is analogous. 
	\qed \end{proof}

\newcommand{\true}{\textit{true}}

\no Before moving on, let us briefly reflect on the fact that $k$-structures
have finite depth. So suppose \( a \) only-knows KB, of depth \( k \). Using
\( k \)-structures allows us to reason about what is believed, up to depth \(
k \). Also, if we construct epistemic states from \( k' \)-structures where \(
k'\geq k \), then the logic correctly captures non-beliefs beyond the depth \(
k \). To illustrate, let \( \true \) (depth \( 1 \)) be all that \( a \)
knows. Then, it can easily be shown that both the sentences \( \aoknow(\true)
\supset \neg \aknow\neg \bknow \alpha \) and \( \aoknow(\true) \supset \neg
\aknow \bknow \alpha \) are valid sentences in the logic, by considering any
\( e^2 \) (and higher) for \( a \). For most purposes, this restriction of
having a parameter \( k \) seems harmless in the sense that agents usually
have a finite knowledge base with sentences of some maximal depth $k$ and they
should not be able to conclude anything about what is known at depths higher
than $k$, with one exception. If we were to include a notion of common
knowledge \cite{reasoning:about:knowledge}, then we would get entailments
about what is believed at arbitrary depths. With our current model, this
cannot be captured, but we are willing to pay that price because in return we
get, for the first time, a very simple possible-world style account of
only-knowing. Similarly,
we have nothing to say about (infinite) knowledge bases with unbounded depth.

\section{Multi-Agent Only-Knowing} % (fold)
\label{sec:multi_agent_only_knowing}

In this section, we return to the features of only-knowing discussed earlier
and verify that the new semantics reasonably extends them to the
multi-agent case.  We also briefly discuss earlier attempts at capturing these features.  Halpern (\citeyear{DBLP:conf/aaai/Halpern93}), Lakemeyer (\citeyear{Lakemeyer1993}), and Halpern and Lakemeyer (\citeyear{1029713}) independently attempted to
extend \( \ol \) to the many agent case.\footnote{For space reasons, we do not review all
aspects of these approaches.} There are some subtle differences in
their approaches, but the main restriction is they only allow a propositional
language. Henceforth, to make the comparison feasible, we shall also speak of
the propositional subset of \( \onl \) with the understanding that the
semantical framework is now defined for propositions (from an infinite set \(
\Phi \)) rather than ground atoms. 

The main component in these features is the notion of
\emph{possibility}.  
% Halpern and Lakemeyer independently attempted to
% extend \( \ol \) to the many agent case. There are some subtle differences in
% their approaches, but the main restriction is they only allow a propositional
% language. Henceforth, to make the comparison feasible, we shall also speak of
% the propositional subset of \( \onl \) with the understanding that the
% semantical framework is now defined for propositions (from an infinite set \(
% \Phi \)) rather than ground atoms. 
In the single agent case, each world represents a
possibility. Thus, from a logical viewpoint, a possibility is simply the set of
objective formulas true at some world. Further, the set of epistemic possibilities is given
by \( \{ \{\textrm{objective formulas true at}~w \} \mid w\in e\} \). Halpern and
Lakemeyer (\citeyear{1029713}) correctly argue that the appropriate generalization of
the notion of possibility in the many agent case are \( i \)-objective
formulas. Intuitively, a possible state of affairs according to \( a \)
include the state of the world (objective formulas), as well as what \( b \)
is taken to believe. 
% 
% Halpern (\citeyear{DBLP:conf/aaai/Halpern93}), Lakemeyer (\citeyear{Lakemeyer1993}), and Halpern and Lakemeyer (\citeyear{1029713}) independently attempted to
% extend \( \ol \) to the many agent case. There are some subtle differences in
% their approaches, but the main restriction is they only allow a propositional
% language. Henceforth, to make the comparison feasible, we shall also speak of
% the propositional subset of \( \onl \) with the understanding that the
% semantical framework is now defined for propositions (from an infinite set \(
% \Phi \)) rather than ground atoms. 
% 
The earlier attempts by Halpern and Lakemeyer use Kripke
structures with accessibility relations \( \mathcal{K}_i \) for each agent \( i \). Given a Kripke structure \( M \), the notion of possibility is defined as the set of \( i \)-objective formulas true at some Kripke world, and the set of epistemic possibilities is obtained from the \( i \)-objective formulas true at all \( i	 \)-accessible worlds. Formally, the set of epistemic possibilities true at \( (M,w) \), where \( w \) is a world in \( M \), is defined as \( \{ \iobj(M,w') \mid
w' \in \mathcal{K}_i(w) \} \), where \( \iobj(M,w') \) is a set consisting of
\( i \)-objective formulas true at \( (M,w') \).\footnote{The superscript \( +
\) denotes that the set includes non-basic formulas. Given \( X^+ \), we let
\( X = \{ \phi ~\textrm{is basic}~ \mid \phi \in X^+ \} \). } Although intuitive, note that, even
for the propositional subset of \( \ol \), a Kripke world is a completely
different entity from what Levesque supposes. Perhaps, one consequence is that
the semantic proofs in earlier approaches are very involved. In contrast, we
define worlds exactly as Levesque supposes. And, our notion of possibility is obtained from the set of \( a
\)-objective formulas true at each \( \lan w, \ebkmin \ran \) in \( \eak \).

\newcommand{\eajmin}{e_a^{j-1}}

\begin{definition} Suppose \(M = (\sit) \) is a \( (k,j) \)-model. \begin{enumerate}
	% \item Let \( \isubj (M) = \{ \Box_i \alpha \mid M \models \Box_i
	% \alpha \} \cup \{ \neg \Box_i \alpha \mid M \not\models \Box_i \alpha \} \)
	% where \( \Box_i \) is either \( \know \) or \( \nknow \). 
	%     
\item  let \( \iobj(M) = \{ \textrm{\( i \)-objective \( \phi \)} \mid M \models \phi \} \), 

\item  let \( \aObj({\eak}) = \{ \aobj(\{\}, {\ebkmin}, w) \mid \lan w, {\ebkmin} \ran \in {\eak} \} \), 

\item let \( \bObj(\ebj) = \{ \bobj(\eajmin, \{\}, w) \mid \lan w, \eajmin \ran \in \ebj \} \).
\end{enumerate}
	 
\end{definition}

\no All the \( a \)-objective formulas true at a model \( M \), essentially the objective formulas true \emph{wrt.}~\( w \) and the \( b \)-subjective formulas true w\emph{rt.}~\( \ebj \), are given by \( \aobj(M) \). Note that these formulas do not strictly correspond to \( a \)'s possibilities. Rather, we define \( \aObj \) on her epistemic state \( \eak \), and this gives us  all the \( a \)-objectives formulas that \( a \) considers possible. We shall now argue that the intuition of all of Levesque's properties is maintained.\footnote{It is interesting to note that such a formulation of Levesque's properties is not straightforward in the first-order case. That is, for the quantified language, it is known that there are epistemic states that can not be  characterized using only objective formulas \cite{levesque2001logic}. Thus, it is left open how one must correctly generalize the features of first-order \( \ol \). } \\
 
\no \textbf{Property 1.}~In the single agent case, this property ensured that an agent's epistemic possibilities are not affected on evaluating \( \noneknow \). This is immediately the case here. Given a model, say \( (\sit) \), \( a \)'s epistemic possibilities are determined by \( \aObj({\eak}) \). To evaluate \( \naknow\alpha \), we consider all models \(  ({\eak}, {\ebkmin}, w' ) \) such that \( \lan w', {\ebkmin} \ran \not\in {\eak}	  \). Again, \( a \)'s possibilities are given by \( \aObj({\eak}) \) for all these models, and does not change. \\[1ex]
\no \textbf{Property 2.} In the single agent case, this property ensured that evaluating \( \loneknow\alpha \) and \( \noneknow\alpha \) is always \wrt the set of all possibilities, and completely independent of \( e \). As discussed, in the many agent case, possibilities mean \( i \)-objective formulas and analogously, if \( \alpha \) is a possibility in \( a \)'s view, say an \( a \)-objective formula of maximal \( b \)-depth of \( k \), then we should interpret \( \aknow \alpha \) and \( \naknow \alpha \) \emph{wrt.}~all \( a \)-objective possibilities of max.~depth \( k \): the set of \( (k+1) \)-structures. Clearly then, the result is fixed and independent of the corresponding \( e^{k+1} \). The following lemma is a direct consequence of the definition of the semantics.

\begin{lemma} Let \( \alpha \) be a \( i \)-objective formula of \( j \)-depth \( k \), for \( j \neq i \). Then, the set of \( k\!+\!1 \)-structures that evaluate \( \know \alpha \) and \( \nknow \alpha \) is \( \mathbb{E}^{k+1} \). 

\end{lemma}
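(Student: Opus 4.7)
My plan is to unfold clauses~(6) and~(7) of the semantics and combine them with a routine induction showing that an $i$-objective formula is insensitive to the outer $i$-structure used to interpret it. Taking $i=a$, $j=b$ without loss of generality (the other case is symmetric), note that since $\alpha$ is $a$-objective with $b$-depth $k$, any evaluation of $\aknow\alpha$ or $\naknow\alpha$ must use a $(k+1)$-structure for Alice: the pairs $\langle w', e_b^k\rangle \in e_a^{k+1}$ must carry $b$-structures deep enough to interpret $\alpha$.

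The inductive step I would need to establish is that the satisfaction of $\alpha$ at $(e_a^{k+1}, e_b^k, w')$ depends only on $(e_b^k, w')$ and not on the outer $e_a^{k+1}$. Atomic, equality, Boolean, and quantifier cases are immediate. For the modal cases, $a$-objectivity of $\alpha$ forces every top-level modality occurring in $\alpha$ to be $\bknow$ or $\nbknow$, and both of those clauses substitute the $e_a$-component of the next pair being inspected, overwriting the outer $e_a^{k+1}$ and sealing the induction. Hence $\alpha$ determines a predicate on $\worlds\times\mathbb{E}^k$ alone.

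Given this, clauses~(6) and~(7) show that $\aknow\alpha$ is evaluated over the pairs $\langle w', e_b^k\rangle \in e_a^{k+1}$ and $\naknow\alpha$ over their complement in $\worlds\times\mathbb{E}^k$; together they cover all of $\worlds\times\mathbb{E}^k$ independently of $e_a^{k+1}$, and every subset of $\worlds\times\mathbb{E}^k$---that is, every element of $\mathbb{E}^{k+1}$---is a legitimate choice of $e_a^{k+1}$. The set of $(k+1)$-structures serving as evaluation contexts for $\aknow\alpha$ and $\naknow\alpha$ is therefore precisely $\mathbb{E}^{k+1}$, yielding the lemma. The only real obstacle is mild and notational: one must read the statement's phrase ``the set of $(k+1)$-structures that evaluate $\know\alpha$ and $\nknow\alpha$'' as asserting that no $(k+1)$-structure is excluded as an admissible evaluation state, rather than as isolating a special class of structures. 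Once that reading is fixed, the result is, as the authors anticipate, a direct consequence of the semantic definitions modulo the short induction above.
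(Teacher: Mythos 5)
Your proof is correct and is essentially the paper's own treatment made explicit: the paper supplies no proof at all, asserting only that the lemma is ``a direct consequence of the definition of the semantics,'' and your expansion provides exactly the two ingredients that justify that assertion, namely the short induction showing that the truth of an $i$-objective formula at $(e_a^{k+1}, e_b^{k}, w')$ does not depend on the outer $e_a^{k+1}$ (a fact the paper itself relies on implicitly, for instance when it writes $\aobj(\{\}, e_b^{k-1}, w)$ with the $a$-slot emptied), together with the observation that clauses~6 and~7 range over the pairs in $e_a^{k+1}$ and over its complement in $\worlds\times\mathbb{E}^{k}$, whose union is fixed and independent of $e_a^{k+1}$. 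Your interpretive caveat is also warranted: the phrase ``the set of $(k+1)$-structures that evaluate $\know\alpha$ and $\nknow\alpha$'' is loosely worded, and your reading---that no element of $\mathbb{E}^{k+1}$ is excluded as an admissible epistemic state and that the evaluation points for the two operators jointly exhaust the set of all conceivable possibilities---is the one consistent with the surrounding discussion of Levesque's second property.
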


\no \textbf{Property 3.} The third property ensures that one can characterize
epistemic states from any set of \( i \)-objective formulas. Intuitively,
given such a set, we must have a model where \emph{precisely} this set is the
epistemic state. Earlier attempts at clarifying this property involved
constructing a \emph{set} of maximally \( \kffn \)-consistent sets of basic \(
i \)-objective formulas, and showing that there exist an epistemic state that precisely corresponds to this set. But, defining possibilities via \( \kffn \)
proof-theoretic machinery inevitably leads to some limitations, as we shall
see. We instead proceed semantically, and go beyond basic formulas. Let \(
\Omega \) be a satisfiable set of \( i \)-objective formulas, say of maximal
\( j \)-depth \( k \), for \( j\neq i \). Let \( \Omega' \) be a set obtained by adding a \( i
\)-objective formula \( \gamma \) of maximal \( j \)-depth \( k \) such that \(
\Omega' \) is also satisfiable. By considering all \( i \)-objective formulas
of maximal \( j \)-depth \( k \), let us construct \( \Omega' \), \( \Omega'' \), \(
\ldots \) by adding formulas iff the resultant set remains satisfiable. When
we are done, the resulting \( \Omega^\ast \) is what we shall call a maximally
satisfiable \( i \)-objective set.\footnote{A maximally satisfiable set is to
be understood as a semantically characterized \emph{complete} description of a
possibility, analogous to a proof theoretically characterized notion of
maximally consistent set of formulas.} Naturally, there may be many such sets
corresponding to \( \Omega \). We show that given a \emph{set} of maximally
satisfiable \( i \)-objective sets, there is a model where precisely this set
characterizes the epistemic state.

\begin{theorem}\label{thm:iset-theorem}
Let \( S_i \) be a set of maximally satisfiable sets of \( i \)-objective formulas, and \( \sigma \) a satisfiable objective formula. Suppose \( S_a \) is of max.~\( b \)-depth \( k-1 \) and \( S_b \) is of max.~\( a \)-depth \( j-1 \). Then there is a model \( \mstar = \lan \estarak, \estarbj, \wstar \ran \) such that \( \mstar \models \sigma \), \( S_a = \aObj(\estarak) \) and \( S_b = \bObj(\estarbj) \).

% \end{enumerate}
\end{theorem}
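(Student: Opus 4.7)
The construction decomposes into three independent pieces: choose $\wstar$ satisfying $\sigma$, build $\estarak$ from $S_a$, and build $\estarbj$ from $S_b$. Since $\sigma$ is objective and satisfiable, pick any $\wstar \in \worlds$ with $\wstar \models \sigma$; the truth of an objective formula is independent of the epistemic components, so this choice is free. The real work lies in the epistemic construction, and by symmetry I describe it only for $\estarak$.

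I would proceed by induction on $k$. In the base case $k=1$, each $X \in S_a$ is essentially a maximally satisfiable set of objective formulas, which uniquely determines a world $w_X$; set $\estarak = \{\langle w_X, \{\}\rangle \mid X \in S_a\}$. For the inductive step, each $X \in S_a$ splits into an objective part, identifying a world $w_X$, and a $b$-subjective part, listing the formulas of the form $\bknow \psi$ and $\nbknow \psi$ belonging to $X$. The latter describes what $a$ believes about $b$'s epistemic state at $w_X$; formally, it cuts out a set $S_b(X)$ of maximally satisfiable $b$-objective sets (of depth at most $k-2$) compatible with the $\bknow$/$\nbknow$-content of $X$. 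By the induction hypothesis applied to $S_b(X)$, paired with any satisfiable objective formula, there is a $(k-1)$-structure $\ebkmin(X)$ for $b$ with $\bObj(\ebkmin(X)) = S_b(X)$. Then set $\estarak = \{\langle w_X, \ebkmin(X)\rangle \mid X \in S_a\}$.

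The main obstacle is verifying $\aObj(\estarak) = S_a$. For each $X \in S_a$ I must show that $\aobj(\{\}, \ebkmin(X), w_X) = X$. By maximal satisfiability of $X$, it suffices to show that every $a$-objective $\phi \in X$ is true at $(\{\}, \ebkmin(X), w_X)$; this goes by induction on $\phi$, with the atomic and Boolean cases immediate from the choice of $w_X$. The crux is $\phi = \bknow \psi$ or $\phi = \nbknow \psi$: here satisfaction reduces, via the semantic clauses, to properties of $\bObj(\ebkmin(X))$, which equals $S_b(X)$ by construction, and $S_b(X)$ in turn was defined to mirror exactly the $\bknow$/$\nbknow$-content of $X$. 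The reverse inclusion $\aObj(\estarak) \subseteq S_a$ is automatic, since every pair in $\estarak$ is indexed by some $X \in S_a$ and yields exactly $X$'s $a$-objective content by the preceding argument. I would also verify separately that distinct $X, X' \in S_a$ give rise to distinct pairs, which follows because some $a$-objective formula separates them by maximality.
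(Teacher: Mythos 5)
There is a genuine gap, and it sits exactly where the difficulty of multi-agent only-knowing lives. Your plan is to rebuild the witnessing pair $\lan w_X, \ebkmin(X)\ran$ for each $X\in S_a$ recursively, using the induction hypothesis to produce some $(k-1)$-structure with $\bObj(\ebkmin(X))=S_b(X)$, and then to argue that ``satisfaction reduces, via the semantic clauses, to properties of $\bObj(\ebkmin(X))$.'' That reduction is correct for $\phi=\bknow\psi$ (and $\neg\bknow\psi$), whose truth is determined by which $b$-objective theories are realized \emph{inside} the structure, but it is false for $\phi=\nbknow\psi$ and $\neg\nbknow\psi$: clause~\ref{i:nknow} quantifies over the pairs \emph{outside} $\ebkmin(X)$, and two structures with identical $\bObj$ can have different complements. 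For instance, a structure containing a single pair realizing a theory $T$ and a structure containing \emph{all} pairs realizing $T$ have the same $\bObj$, yet their complements differ on pairs realizing $T$, so they can disagree on $\nbknow$-formulas. Consequently the induction hypothesis, which only controls $\bObj(\ebkmin(X))$, is too weak to guarantee that the $\nbknow\psi\in X$ hold and that the $\neg\nbknow\psi\in X$ retain a falsifying witness outside $\ebkmin(X)$; the inductive verification collapses at precisely the clause you flag as the crux. (This is the same phenomenon that forces the independent-formula $\gamma$ device in the completeness proof of Theorem~\ref{thm:completeness_onlmin}.)

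The detour is also unnecessary. By the paper's definition, each $X\in S_a$ is \emph{maximally satisfiable}, so there is directly a pair $\lan w',\ebkmin\ran$ with $\{\},\ebkmin,w'\models X$; no recursive reconstruction from $\bObj$-data is needed. Collecting one such pair per $X$ yields $\estarak$, and maximality gives $\aobj(\{\},\ebkmin,w')=X$ (any true $a$-objective formula of the right depth not in $X$ could have been added while preserving satisfiability). That is the paper's entire argument; the parts of your write-up about choosing $\wstar$, about completeness of each $X$ determining $w_X$, and about the reverse inclusion are fine, but the recursive construction of $\ebkmin(X)$ should be replaced by a direct appeal to the satisfiability of $X$.
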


%ACTUAL PROOF
\begin{proof}
	 Consider \( S_a \). Each \( S' \in S_a \) is a maximally satisfiable \( a \)-objective set, and thus by definition, there is a \( k \)-structure \( \lan w', \ebkmin \ran \) such that \( \{\}, \ebkmin, w' \models S' \). Define such a set of \( k \)-structures \( \{ \lan w', \ebkmin \ran \} \), corresponding to each \( S' \in S_a \), and let this be \( \estarak \). It is immediate to verify that \( \aObj(\estarak) = S_a \). Analogously, for \( \estarbj \) using \( S_b \). Finally, there is clearly some world \( \wstar \) where \( \sigma \) holds. 
	\qed \end{proof}
	% 
	% Let \( \mstar  \) be \( \lan \estarak, \estarbj, \wstar \ran \), and this is the required model.
	% \qed \end{proof}
	
	%one can verify that this model proves the theorem. 
	
% 	For each \( S^i \in S_a \), \( S^i \) is a maximally satisfiable
% \( a \)-objective set of formulas. By the same argument used in
% Theorem~\ref{thm:sigma_gamma_is_satisfiable}, there is a \( (k-1) \)-structure for Bob and
% a world such that \( \{\}, \eibkmin, w^i \models S^i \) (since \( S_a \) is of
% depth strictly lesser than \( k \)). Let \( \estarak \) be all the \( \lan
% w^i, \eibkmin \ran \), for each \( S_i \). Clearly, \( \aObj({\eak}) = S_a \).
% Now, if \( \Gamma \) is \( a \)-objective, by a similar construction there is a \( j
% \)-structure \( \estarbj \) and a world \( \wstar \) such that \( \{\},
% \estarbj, \wstar \models \Gamma \). The model \( \lan \estarak, \estarbj,
% \wstar \ran \) satisfies requirement \( 1 \).
% 
% For property \( 2 \), let \( \estarsak = \estarak \), and by a similar
% construction, suppose we have \( S_j = \bObj(\estarsbj) \). Since \( \sigma \)
% is a satisfiable propositional formula, we have some world \( \wstars \) such
% that \( \wstars \models \sigma \). Clearly, \( \lan \estarsak, \estarsbj,
% \wstars \ran \) meets the requirement.

\subsection{On Validity} % (fold)
\label{sub:on_validity}

% subsection on_validity (end) 

%STOPPED HERE

How does the semantics compare to earlier approaches? In particular, we are
interested in valid formulas. 
% Halpern and Lakemeyer independently attempted to
% extend \( \ol \) to the many agent case. There are some subtle differences in
% their approaches, but the main restriction is they only allow a propositional
% language. Henceforth, to make the comparison feasible, we shall also speak of
% the propositional subset of \( \onl \) with the understanding that the
% semantical framework is now defined for propositions (from an infinite set \(
% \Phi \)) rather than ground atoms. 
% 
Lakemeyer (\citeyear{Lakemeyer1993}) proposes a semantics using \( \kffn \)-canonical models, but
he shows that the formula \(\neg \aoknow \neg \boknow p \) for any proposition \(
p \) is valid. Intuitively, it says that all that
Alice knows is that Bob does not only know \( p \), and as Lakemeyer argues,
the validity of \( \neg \aoknow \neg \boknow p \) is unintuitive. After all,
Bob could \emph{honestly} tell Alice that he does not only know \( p \). The
negation of this formula, on the other hand, is satisfiable in a Kripke
structure approach by Halpern (\citeyear{DBLP:conf/aaai/Halpern93}), called the \( i \)-set approach.\footnote{In his original formulation, Halpern (\citeyear{DBLP:conf/aaai/Halpern93}) constructs \emph{trees}. We build on discussions in \cite{1029713}.} It is
also satisfiable in the \( k \)-structure semantics. Interestingly, the \( i
\)-set approach and \( k \)-structures agree on one more notion. The formula
\( \aknow \bot \supset \neg \naknow \neg \boknow \neg \aoknow p \)~(\( \zeta \)) is valid in both, while \( \neg \zeta \) is satisfiable
\emph{wrt.}~Lakemeyer (\citeyear{Lakemeyer1993}). (It turns out that the validity of \( \zeta \)
in our semantical framework is implicitly related to the satisfiability of \(
\boknow \neg \aoknow p \), so this property is not unreasonable.) 

% The validity of \( \zeta \) is not unreasonable. After all, it says that if Alice believes contradictions, then she does not at most consider Bob to only-know that she does not only-know \( p \). 

However, we immediately remark that the \( i \)-set approach and \( k \)-structures do not
share too many similarities beyond those presented above. In fact,~the \( i
\)-set approach does not truly satisfy Levesque's second property. For
instance, \( \naknow \neg \boknow p \land \aknow \neg \boknow p \)~(\(
\lambda \)) is satisfiable in Halpern (\citeyear{DBLP:conf/aaai/Halpern93}). Recall that, in this property, the
union of models that evaluate \( \nknow \alpha \) and \( \know \alpha \) must
lead to all conceivable states. So, the satisfiability of \( \lambda \) leaves open
the question as to why \( \boknow p \) is not considered since \( \neg \boknow
p \) is true at all conceivable states. We show that, in contrast, \(
\lambda \) is not satisfiable in the \( k \)-structures approach. Lastly, \cite{1029713}
involves enriching the language, the intuitions of which are perhaps best
explained after reviewing the proof theory, and so we defer discussions to
later.\footnote{An approach by
\cite{DBLP:conf/aiml/Waaler04,DBLP:conf/tark/WaalerS05} is also motivated by
the proof theory. Discussions are deferred.}                       
     
%ACTUAL
% The validity of \( \zeta \) is reasonable because if \( \neg \zeta = \aknow \bot \land \naknow \neg \boknow \neg \aoknow p \) is satisfiable then for an epistemic state that believes contradictions (empty set), \( \neg \boknow \neg \aoknow p \) holds \emph{wrt.}~all impossible states (by Property 2, all of \( \mathbb{S}_k \)). In other words, \( \neg \boknow\neg\aoknow p \) must valid, which is unintuitive. 

% \( \zeta \) says that if Alice believes contradictions, then she at most believes Bob to only-know that she does not only-know \( p \). This seems reasonable. After all if Bob does actually know that Alice believes contradictions, then he would reason that she believes both \( p \) and \( \neg p \), and thus she does not only-know \( p \).     

\begin{theorem}\label{lem:iset_validity} The following are properties of the semantics: \begin{enumerate}
	\item  \( \aoknow \neg \boknow p \), for any \( p \in \Phi \), is satisfiable.

	\item \( \models \aknow \bot \supset \neg \naknow \neg \boknow \neg \aoknow p \).

	\item \( \naknow\neg\boknow p \land \aknow \neg\boknow p \) is not satisfiable.  
\end{enumerate}

\end{theorem}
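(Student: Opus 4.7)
The plan is to reduce all three parts to direct applications of the only-knowing clause (item 8), grounded in a single uniqueness observation: at any depth $m \geq 1$, there is a unique Bob $m$-structure at which $\boknow p$ holds, namely $B_p^m := \{\langle w'', e_a^{m-1} \rangle : p \in w''\}$ (where $e_a^{m-1}$ ranges over all $(m-1)$-structures for Alice), and symmetrically there is a unique Alice $m$-structure $A_p^m$ at which $\aoknow p$ holds. This falls out by unfolding $\bknow p \land \nbknow \neg p$: since $p$ is evaluated against $w''$ alone, membership in the structure must match exactly whether $p \in w''$, regardless of the accompanying $e_a^{m-1}$.

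For Part 1, I take $e_a^2 := \{\langle w', e_b^1 \rangle : e_b^1 \neq B_p^1\}$ together with any world $w$ and any $e_b^3$. Clause 8 for $\aoknow \neg \boknow p$ then demands $\langle w', e_b^1\rangle \in e_a^2$ iff $(e_a^2, e_b^1, w') \models \neg \boknow p$, which by the uniqueness observation becomes the tautology $e_b^1 \neq B_p^1$ iff $e_b^1 \neq B_p^1$.

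For Part 3, I suppose for contradiction that $(e_a^k, e_b^j, w) \models \aknow \neg \boknow p \land \naknow \neg \boknow p$. Clauses 6 and 7 partition all pairs $\langle w', e_b^{k-1}\rangle \in \worlds \times \mathbb{E}^{k-1}$ into those in $e_a^k$ and those outside it; so the two conjuncts together force $(e_a^k, e_b^{k-1}, w') \models \neg \boknow p$ for \emph{every} such pair. Taking $e_b^{k-1} = B_p^{k-1}$ and any $w'$ falsifies this via the uniqueness observation, yielding the required contradiction.

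Part 2 is where a double unfolding bites. From $\aknow \bot$ one forces $e_a^3 = \emptyset$, so $\naknow \neg \boknow \neg \aoknow p$ asserts $(e_a^3, e_b^2, w') \models \neg \boknow \neg \aoknow p$ for \emph{every} pair $\langle w', e_b^2 \rangle$; to refute this it suffices to produce a single pair at which $\boknow \neg \aoknow p$ holds. The $A_p^1$ observation says that $\aoknow p$ holds at $(e_a^1, e_b^2, w'')$ iff $e_a^1 = A_p^1$. Repeating the Part 1 construction one level up, with $\neg \aoknow p$ in place of $\neg \boknow p$ and Bob's role interchanged with Alice's, shows that $\boknow \neg \aoknow p$ holds at $(e_a^3, e_b^2, w')$ iff $e_b^2 = \{\langle w'', e_a^1\rangle : e_a^1 \neq A_p^1\}$. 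Picking this specific $e_b^2$ and any $w'$ provides the witness and completes the argument. The main obstacle across all three parts is purely notational bookkeeping through the nested triples $(e_a^k, e_b^{k-1}, w')$; once the $B_p^m$/$A_p^m$ characterization is on the table, each claim collapses into a single biconditional.
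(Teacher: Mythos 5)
Your proposal is correct and follows essentially the same route as the paper: the unique structure $B_p^m$ (resp.\ $A_p^m$) at which $\boknow p$ (resp.\ $\aoknow p$) holds is exactly the paper's implicit observation that only the structure built from $\worlds_p=\{w\mid p\in w\}$ satisfies only-knowing $p$, and your constructions for Parts 1--3 match the paper's (Part 2 differs only in that you rebuild the witness for $\boknow\neg\aoknow p$ explicitly, where the paper invokes the symmetric instance of Part 1). No gaps.
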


\begin{proof} \textbf{Item 1.} Let \( \worldsp = \{ w \mid w \models p \} \) and let \( E \) be all subsets of \( \worlds \) except the set \( \worldsp \). It is easy to see that if \( \ebone \in E \), then \( \{\}, \ebone, w \not\models  \boknow p \), for any world \( w \). Now, define a \( e^2 \) for \( a \) that has all of \( \worlds \times E \). Thus, \( \eatwo, 
\{\}, w \models \aoknow \neg \boknow p\).

\textbf{Item 2.}  Suppose \( \eak, \{\},
w\models \aknow \bot \) for any \( w \in \worlds \). Then, for all \(
\lan w', \ebkmin \ran \in \eak \), \( \eak, \ebkmin, w' \models \bot \), and thus, 
\( \eak = \{\} \). Suppose  now \( \eak, \{\}, w\models \naknow \neg \boknow \neg \aoknow p \). Then, \wrt all of \( \lan w', \ebkmin \ran \not\in\eak \) i.e.~all of \( \mathbb{E}^k \), \( \neg \boknow \neg \aoknow p \) must hold. That is, \( \neg \boknow \neg \aoknow p \) must be valid. From above, we know this is not the case.

\textbf{Item 3.}  Suppose \( \eak, \{\}, w \models \aknow \neg \boknow  p \), for any \( w \). Then, for all \( \lan w', \ebkmin \ran \in \eak \), \( \eak, \ebkmin, w' \models \neg \boknow p \). Since \( \boknow p \) is satisfiable, there is a \( \estarbkmin \) such that \( \{\}, \estarbkmin, \wstar \models \boknow p \), and  \( \lan \wstar, \estarbkmin \ran \not\in \eak \). Then, \( \eak, \{\}, w \models \neg \naknow \neg \boknow p \).              \qed \end{proof}

\no Thus, \( k \)-structures seem to satisfy our intuitions on the behavior of only-knowing. To understand why, notice that  \( \neg \aoknow \neg \boknow p \) and \( \lambda \) involve the nesting of \( \nknow \) operators. Lakemeyer (\citeyear{Lakemeyer1993}) makes an unavoidable technical commitment. A (\( i \)-objective) possibility is formally a maximally \( \kffn \)-consistent set of \emph{basic} \( i \)-objective formulas. The restriction to basic formulas is an artifact of a semantics based on the canonical model. Unfortunately, there is more to agent \( i \)'s possibility than just basic formulas. In the case of Halpern (\citeyear{DBLP:conf/aaai/Halpern93}), the problem seems to be that \( \nknow \) and \( \know \) do not interact naturally, and that the full complement of epistemic possibilities is not considered in interpreting \( \nknow \). In contrast, Theorem \ref{thm:iset-theorem} shows that we allow non-basic formulas and by using a strictly semantic notion, we avoid problems that arise from the proof-theoretic restrictions. And, since the semantics faithfully complies with the second property, \( \lambda \) is not satisfiable.

% 
% The motivation behind Theorem \ref{thm:iset-theorem} was to show that we allow non-basic formulas and by using a strictly semantic notion we avoided any proof-theoretic setbacks, especially because of the subtle relationship between \( \nknow \) and \( \know \).  
%  
%This is not a drawback with our framework.   

The natural question is if there are axioms that characterize the semantics. We begin, in the next section, with a proof theory by Lakemeyer (\citeyear{Lakemeyer1993}) that is known to be sound and complete for all attempts so far, but for a restricted language.

\section{Proof Theory} % (fold)
\label{sec:proof_theory}
In the single agent case, \( \ol \)'s proof theory consists of axioms of
propositional logic, axioms that treat \( \loneknow \) and \( \noneknow \) as
a classical belief operator in \( \textsf{K45} \), an axiom that allows us to use
\( \noneknow \) and \( \loneknow \) freely on subjective formulas, modus ponens (\( \mathbf{MP} \)) and
 necessitation (\( \mathbf{NEC} \)) for both \( \loneknow \) and \(
\noneknow \) as inference rules, and the following axiom:\footnote{Strictly speaking, this is not the proof theory introduced in~\cite{77758}, where an axiom replaces the inference rule \( \mathbf{NEC} \).
%. 
Here, we consider an equivalent formulation by Halpern and Lakemeyer~(\citeyear{1029713}).}  \begin{itemize}
	\item[] \( \mathbf{A5}. \) \( \noneknow \alpha \supset \neg \loneknow \alpha \)
	if \( \neg \alpha \) is a propositionally consistent\\\qquad \mbox{}\qquad \mbox{} objective formula.
\end{itemize}

\no As we shall see, only the axiom
\( \mathbf{A5} \) is controversial, since extending any objective \( \alpha \)
to \emph{any} \( i \)-objective \( \alpha \) is problematic. Mainly, the soundness of the axiom in the single agent case relies on propositional logic. But in the multi-agent case, since we go beyond propositional formulas establishing this consistency is non-trivial, and even circular. To this end, Lakemeyer~(\citeyear{Lakemeyer1993}) proposes to resolve this consistency by relying on the existing logic \( \kffn \). As a consequence, his proof theoretic formulation appropriately generalizes all of Levesque's axioms, except for \( \mathbf{A5} 	 \) where its application is restricted to only basic \( i \)-objective consistent formulas. We use \( \vdash \) to denote provability.                                     

% , are intuitive and are a sound and complete axiomatization wrt all approaches discussed, for formulas in \( \onlmin \). 

\begin{definition} \( \onlmin \) consists of all formulas \( \alpha \) in \(
\onl \) such that no \( \njknow \) may occur in the scope of a \( \know \) or
a \( \nknow \), for \( i \neq j \).

\end{definition}

\no The following axioms, along with \( \mathbf{MP} \) and \( \mathbf{NEC} \) (for \( \know \) and \( \nknow \)) is an axiomatization that we refer to as \( \axioms \). \( \axioms \) is sound and complete for the canonical model and the \( i \)-set approach for formulas in \( \onlmin \). 

\begin{itemize}

	\item[]  \( \mathbf{A1}_n. \) All instances of propositional logic, 

	\item[]  \( \mathbf{A2}_n. \) \( \know(\alpha \supset \beta) \supset (\know \alpha \supset \know \beta) \), 

	\item[] \( \mathbf{A3}_n. \) \( \nknow(\alpha \supset \beta) \supset (\nknow \alpha \supset \nknow \beta) \), 

	\item[]   \( \mathbf{A4}_n. \) \( \sigma \supset \know \sigma \land \nknow \sigma \) for \( i \)-subjective \( \sigma \), 

	\item[] \( \afive.~\nknow \alpha \supset \neg \know \alpha  
	 \) if \( \neg \alpha \) is a \( \kffn \)-consistent\\\qquad \mbox{}\qquad \mbox{} \( i \)-objective basic formula.
\end{itemize}

% \no\( \mathbf{A1}. \) All instances of propositional logic (PL). \\
% \no \( \mathbf{A2, A3}. \) \( \Box_i(\alpha \supset \beta) \supset (\Box_i \alpha \supset \Box_i \beta) \) for \( \Box_i = \know, \nknow \). \\
% \no \qquad \mbox{}\( \mathbf{A4}. \) \( \sigma \supset \know \sigma \land \nknow \sigma \) for \( i \)-subjective \( \sigma \). \\
% \no \qquad \mbox{}\(  
% 	\afive.~\nknow \alpha \supset \neg \know \alpha  
% 	 \) if \( \neg \alpha \) is a \( \kffn \)-consistent \( i \)-objective basic formula. \\

\no Observe that, as discussed, the soundness of \( \afive \) is built on \( \kffn \)-consistency. Since our semantics is not based on Kripke structures, proving that every \( \kffn \)-consistent formula is satisfiable in some \( (k,j) \)-model is not immediate. We propose a construction called the \( (k,j) \)-\emph{correspondence model}. In the following, in order to disambiguate \( \worlds \) from Kripke worlds, we shall refer to our worlds as propositional valuations.

% The soundness is not straightforward owing to \( \afive \)'s dependence on \( \kffn \)-consistency. We show that all \( \kffn \)-consistent formulas are satisfiable in a \( (k,j) \)-\emph{correspondence model}. 
       
% Let us further assume, unless mentioned otherwise that the formulas are of maximal \( a,b \)-depth of \( k,j \) as before. 

%{\kffn}{{K45}_n}

% \no Theorem~\ref{thm:soundness_onlmin} is proved with a construction, some lemmas and a theorem.   

%Let \( \Phi' \) be all the propositional variables mentioned in \( \alpha \). 

\begin{definition} The \( \kffn \) canonical model \( \canon = \lan \worlds^c, \pi^c, \acanon, \bcanon \ran \)  is defined as follows: \begin{enumerate}
	% \item \( \worlds^c = \{ w \mid w \) is a maximally consistent set of basic formulas wrt \( \kffn \) restricted to propositions from \( \Phi'\} \) 
	
		\item \( \worlds^c = \{ w \mid w \) is a (basic) maximally consistent set \( \} \) 

	\item for all \( p \in \Phi\) and worlds \( w \), \( \pi^c(w)(p) = \textrm{true} \) iff \( p \in w \) 

	\item \( (w,w') \in \icanon \) iff \( w\backslash \know \subseteq w' \), \( w\backslash \know = \{ \alpha \mid \know\alpha \in w\} \)
\end{enumerate}

\end{definition}

\begin{definition} Given \( \canon \), define a set of propositional valuations \( \worlds \) such that for each world \( w \in \worlds^c \), there is a valuation \( \dl w \dr \in \worlds \), \( \dl w \dr = \{ p \mid p \in w \} \). 
\end{definition}

\newcommand{\ewaone}{{e_{\dl w \dr}}_a^1}

\begin{definition}\label{defn:correspondence_model} Given  \( \canon \) and a world \( w \in \worlds^c \), construct a \kjmodel{} \( \lan \ewak, \ewbj, \dl w \dr \ran \) from valuations \( \worlds \) inductively: \begin{enumerate}
	\item \( \ewaone = \{ \lan \dl w' \dr, \{\} \ran \mid w' \in \acanon(w) \} \), 

	\item \( \ewak = \{ \lan \dl w' \dr, \ewdashbkmin \ran \mid w' \in \acanon(w) \} \),  

	\item[] where \( \ewdashbkmin = \{ \lan \dl w'' \dr, \ewdashdashakminmin \ran \mid w'' \in \bcanon(w') \} \).
\end{enumerate} 
	
% 	\begin{enumerate}
% 	\item \( \ewak \) is constructed from \( \lan \dl w' \dr, \ewdashbkmin \ran \), for every \( w' \in \acanon(w) \), 
% 
% 	\item  where \( \ewdashbkmin \) in constructed from \( \lan \dl w'' \dr,  \ewdashdashakminmin \ran \), for every \( w'' \in \bcanon(w') \), and so on.
% %	 \bcanon(w') \); 
% 
% 	% \item the \( {\eak}twomin^{w''} \) is composed from worlds \( w''' \in \acanon(w'') \), and so on. 
% \end{enumerate}
\no Further, \( \ewbj \) is constructed analogously. Let us refer to this model as the \( (k,j) \)-correspondence model of \( (\canon, w) \). 
\end{definition}

\no Roughly, Defn.~\ref{defn:correspondence_model} is a construction of a \(
(k,j) \)-model that appeals to the accessibility relations in the canonical
model.\footnote{The construction is somewhat similar to the notion of \emph{generated submodels} of Kripke frames \cite{hughes1984companion}.} Thus, a \( \eaone \) for Alice \emph{wrt.}~\( w \) has precisely the
valuations of Kripke worlds \(w' \in \acanon(w) \). Quite analogously, a \(
\eak \) is a set \( \{ \lan \dl w' \dr, e^{k-1} \ran \} \), where \( w' \in
\acanon(w) \) as before, but \( e^{k-1} \) is an epistemic state for Bob and
hence refers all worlds \( w'' \in \bcanon(w') \). By a induction on the depth
of a \emph{basic} formula \( \alpha \), we obtain a theorem that \( \alpha \)
of maximal \( a,b \)-depth \( k,j \) is satisfiable at \( (\canon,w) \) iff
the \( (k,j) \)-correspondence model satisfies the formula.

% Correspondence models have the interesting property that if \( (\eakdash, \ebjdash, \dl w \dr) \) is a \( (k',j') \)-correspondence model to \( (\canon, w) \), then \( (\edak, \edbj, \dl w \dr) \) is a \( (k,j) \)-correspondence model to \( (\canon,w) \). 
% 
% all \( (k',j') \)-models are inevitably extensions to \( (k,j) \)-models, and thus they satisfy the same basic formulas. We then show that the basic formulas that \( (\canon, w) \) satisfy are precisely those that its correspondence model satisfy. 

% \begin{proof} The proof is by induction on the structure of a \( a \)-subjective basic formula. If \( \alpha \) is of the form \( \aknow \alpha' \), where \( \alpha' \) is a propositional formula, by definition of the correspondence model, the proof holds. So, suppose it holds for \( a \)-subjective formulas of depth \( k \). Let \( \alpha \) be such a formula. Since \( \aknow \alpha \) is also of depth \( k \), we are done. But suppose we have \( \aknow \bknow \alpha \), a formula of depth \( k+2 \). Now if, \( \canon, w \models  \)
% 	
% 	\qed \end{proof}

%{\corrmodel}{{\eak}^w, {\ebj}^w, w}

% ACTUAL

\begin{theorem}\label{thm:basicformulas_canon_iff_kjmodel} For all basic formulas \( \alpha \) in \( \onlmin \) and of maximal \( a,b \)-depth of \( k,j \), \\
	\qquad \mbox{}\qquad \mbox{}\qquad \mbox{}\( \canon, w \models \alpha \) iff \( \ewak, \ewbj, \dl w \dr \models \alpha \). 

\end{theorem}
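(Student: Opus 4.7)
The plan is to proceed by structural induction on the basic formula $\alpha \in \onlmin$. The base case, $\alpha = p$ atomic, is immediate from the definition of $\dl \cdot \dr$: $\canon, w \models p$ iff $p \in w$ iff $p \in \dl w \dr$ iff $\ewak, \ewbj, \dl w \dr \models p$. The Boolean cases follow directly from the inductive hypothesis, since both models share the same world $\dl w \dr$.

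The substantive case is $\alpha = \aknow \beta$ (the $\bknow \beta$ case is symmetric). Unpacking the canonical semantics, $\canon, w \models \aknow \beta$ iff $\canon, w' \models \beta$ for every $w' \in \acanon(w)$. Unpacking the $k$-structure semantics together with Definition~\ref{defn:correspondence_model}, $\ewak, \ewbj, \dl w \dr \models \aknow \beta$ iff $\ewak, \ewdashbkmin, \dl w' \dr \models \beta$ for every $w' \in \acanon(w)$. So the task reduces to showing, for each such $w'$, that $\canon, w' \models \beta$ iff $\ewak, \ewdashbkmin, \dl w' \dr \models \beta$.

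The key technical input is that the canonical accessibility $\acanon$ is transitive and Euclidean, so $\acanon(w) = \acanon(w')$ whenever $w' \in \acanon(w)$. This follows from positive and negative introspection for $\know$, which are instances of $\mathbf{A4}_n$ applied to the $a$-subjective formulas $\know \sigma$ and $\neg \know \sigma$. A short sub-induction on $k$ then shows that Alice's $k$-structure built from $w$ via Definition~\ref{defn:correspondence_model} coincides with Alice's $k$-structure built from $w'$; in particular, $\ewak$ agrees with the Alice component of the $(|\beta|_a, |\beta|_b)$-correspondence model at $w'$. Applying the inductive hypothesis to $\beta$ at $w'$ then reduces the remaining question to whether the two Bob structures at $w'$ — namely $\ewdashbkmin$ and the Bob component of the fresh correspondence model — evaluate $\beta$ the same way. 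Since both are constructed from $\bcanon(w')$ by the same alternating scheme, Lemma~\ref{lem:satisfiability_higher_structures} gives the required agreement on $\beta$.

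The main obstacle is the depth bookkeeping: the Bob sub-structure $\ewdashbkmin$ obtained by entering the outer $\aknow$ has depth $k-1$, whereas a fresh correspondence model at $w'$ naturally uses a Bob structure of depth $|\beta|_b$. One must invoke Lemma~\ref{lem:satisfiability_higher_structures} in the appropriate direction and verify that the alternating depth pattern of Definition~\ref{defn:correspondence_model} is sufficient for the modal nesting actually encountered within $\beta$ as the structural induction proceeds. Once this alignment is in place, the completely symmetric argument for $\bknow \beta$ closes the induction.
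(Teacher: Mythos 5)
Your proof follows essentially the same route as the paper's: an induction on the structure/depth of basic formulas whose only substantive case is \( \know\beta \), handled by matching the correspondence construction of Definition~\ref{defn:correspondence_model} against \( \acanon(w) \) and using Lemma~\ref{lem:satisfiability_higher_structures} to align depths. You are in fact more explicit than the paper about the two points its proof glosses over --- the transitivity and Euclideanness of \( \acanon \) needed to identify Alice's structures at \( w \) and at \( w'\in\acanon(w) \) (though, to be precise, this comes from the 4 and 5 axioms of \( \kffn \) itself, whose canonical model this is, rather than from \( \mathbf{A4}_n \) of \( \axioms \)), and the depth bookkeeping --- so the argument is correct and not a genuinely different approach.
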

                                        
\begin{proof} By definition, the proof holds for propositional formulas,
disjunctions and negations. So let us say the result holds for formulas of \(
a,b \)-depth \( 1 \). Suppose now \( \canon,w \models \aknow\alpha \), where
\( \aknow \alpha \) has \( a,b \)-depth of \( 1,2 \). Then for all \( w'\in
\acanon(w) \), \( \canon, w'\models \alpha \) iff (by induction hypothesis) \(
\ewdash_a^1, \ewdash_b^1, \wdash \models \alpha \) iff \( \ew_a^2, \{\}, \wdl
\models \aknow \alpha \). By construction, we also have \( \ew_a^1, \{\}, \wdl \models
\aknow \alpha \). Since \( b \)'s structure is irrelevant, we get \( \ew_a^1,
\ew_b^2, \wdl \models \aknow \alpha \) proving the hypothesis. 

% Further, suppose
% \( \canon, w\models \aknow \aknow \alpha \). Then, for all worlds \( w' \in
% \acanon(w) \) and for all worlds \( w'' \in \acanon(w') \), \( \canon, w''
% \models \alpha \). However, \( \kffn \) models are transitive, so we only need
% \( w' \in \acanon(w) \), and the correspondence model satisfies \( \aknow
% \alpha \) as above. (And, analogously with formulas of the form \( \aknow \neg \aknow
% \alpha \), due to the Euclidean property.) 

	 	For the other direction, suppose \( \ew_a^1, \ew_b^2, \wdl \models \aknow
\alpha \). For all \( \wdash \in \ew_a^1 \), \( \ew_a^1, \{\},
\wdash \models \alpha \) iff (by hyp.) \( \canon, w'\models \alpha \)
for all \( w' \in \acanon(w) \) iff \( \canon,w\models \aknow \alpha \).\qed \end{proof}

\begin{lemma}\label{lem:every_kffn_consistent_is_sat} Every \( \kffn \)-consistent basic formula \( \alpha \) is satisfiable wrt.~some \kjmodel. 

\end{lemma}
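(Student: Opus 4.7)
The plan is to combine the standard canonical model theorem for $\kffn$ with the correspondence theorem (Theorem \ref{thm:basicformulas_canon_iff_kjmodel}) just established. Since $\alpha$ is basic, it contains no occurrence of $\nknow$, so trivially $\alpha \in \onlmin$; hence Theorem \ref{thm:basicformulas_canon_iff_kjmodel} applies to it.

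First I would invoke the standard property of the canonical model $\canon$ for $\kffn$: every $\kffn$-consistent formula belongs to some maximally consistent set. So, given a $\kffn$-consistent basic formula $\alpha$, pick a world $w \in \worlds^c$ with $\alpha \in w$; the truth lemma for $\canon$ (which is standard for \textsf{K45} and straightforwardly extends to the two-agent case) yields $\canon, w \models \alpha$.

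Next, let $k$ and $j$ be the maximal $a$- and $b$-depth of $\alpha$, respectively. Construct the $(k,j)$-correspondence model $\langle \ewak, \ewbj, \dl w \dr \rangle$ of $(\canon, w)$ as in Definition \ref{defn:correspondence_model}. By Theorem \ref{thm:basicformulas_canon_iff_kjmodel}, since $\alpha$ is basic, in $\onlmin$, and of maximal $a,b$-depth $k,j$, we have $\canon, w \models \alpha$ iff $\ewak, \ewbj, \dl w \dr \models \alpha$. The left-hand side holds, so the right-hand side does too, and $\alpha$ is satisfied in the $(k,j)$-model we just built.

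The only subtle point — and the step I expect to require the most care when writing this out — is simply checking that a basic formula automatically satisfies the syntactic restriction defining $\onlmin$ (no $\njknow$ in the scope of $\know$ or $\nknow$ for $i \neq j$); this is immediate because basic formulas contain no $\nknow$ whatsoever. Beyond that, the argument is a two-line appeal: the canonical model truth lemma gives satisfaction in $\canon$, and the correspondence theorem transfers satisfaction to a genuine $(k,j)$-model over propositional valuations, which is exactly what the lemma demands.
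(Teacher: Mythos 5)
Your proposal is correct and follows essentially the same route as the paper: satisfiability of the \( \kffn \)-consistent basic formula in the canonical model \( \canon \), followed by transfer to the \( (k,j) \)-correspondence model via Theorem \ref{thm:basicformulas_canon_iff_kjmodel}. The extra observation that every basic formula trivially lies in \( \onlmin \) is a harmless (and correct) elaboration that the paper leaves implicit.
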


\begin{proof} 
	It is a property of the canonical model that every \( \kffn \)-consistent basic formula is satisfiable \emph{wrt.}~the canonical model. Supposing that the formula has a \( a,b \)-depth of \( k,j \) then from Thm \ref{thm:basicformulas_canon_iff_kjmodel}, we know there is at least the correspondence  \kjmodel{} that also satisfies the formula.
	\qed \end{proof}   
	
	                 \begin{theorem}\label{thm:soundness_onlmin}

		 For all \( \alpha \in \onlmin \), if \( \axioms \vdash \alpha \) then \( \models \alpha \).

	\end{theorem}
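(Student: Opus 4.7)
My plan is to argue by induction on the length of the $\axioms$-derivation of $\alpha$. In the inductive step I must verify that each instance of the schemata $\mathbf{A1}_n$--$\mathbf{A5}_n$ is valid in the $k$-structure semantics, and that the rules $\mathbf{MP}$ and $\mathbf{NEC}$ (for both $\know$ and $\nknow$) preserve validity. The propositional tautologies ($\mathbf{A1}_n$) fall out of clauses 1--4 of the semantic definition, and the K-style distribution axioms $\mathbf{A2}_n$, $\mathbf{A3}_n$ are immediate from clauses 6 and 7, since in each axiom the quantification over pairs $\langle w', \ebkmin \rangle$ (inside, respectively outside, $\eak$) is identical for both occurrences of the modality. $\mathbf{MP}$ is trivial, and $\mathbf{NEC}$ is routine: if $\alpha$ is valid it holds at every $(k,j)$-model of appropriate depth, hence at every pair encountered when unfolding the evaluation clause for $\know \alpha$ or $\nknow \alpha$.

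For $\mathbf{A4}_n$, the key observation is that for an $i$-subjective $\sigma$, every atom occurs under some $\Box_i$, and the recursion clause for $\Box_i$ carries along the same $i$-component while replacing the other agent's component, so the truth of $\sigma$ at $(\eak, \ebj, w)$ depends on the $i$-component alone. Consequently, if $\sigma$ holds at a model, it holds at every $\langle w', \ebkmin \rangle$, whether or not that pair lies in $\eak$, yielding $\know \sigma \wedge \nknow \sigma$. This is essentially the weak-$\textsf{S5}$ behaviour for subjective content already used in the introspection lemma.

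The real obstacle is the soundness of $\afive$. Fix an instance with $\neg \alpha$ a $\kffn$-consistent $i$-objective basic formula, and without loss of generality take $\Box_i = \aknow$. Suppose towards contradiction that some $(k,j)$-model $(\eak, \ebj, w)$ satisfies $\naknow \alpha \wedge \aknow \alpha$. Because $\alpha$ is $a$-objective, every top-level modality is $\bknow$ or $\nbknow$, whose evaluation clause reads only the $b$-component and the world; hence the truth of $\alpha$ at any $(\eak', e_b, w')$ is independent of the enclosing Alice-state $\eak'$. From $\aknow \alpha$, every $\langle w', e_b \rangle \in \eak$ satisfies $\alpha$; from $\naknow \alpha$, every $\langle w', e_b \rangle \notin \eak$ satisfies $\alpha$. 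Together $\alpha$ holds at every pair $\langle w', e_b \rangle$, so by Lemma \ref{lem:satisfiability_higher_structures} $\alpha$ is valid at every $(k',j')$-model of sufficient depth. But Lemma \ref{lem:every_kffn_consistent_is_sat} produces a correspondence model satisfying the $\kffn$-consistent basic formula $\neg \alpha$, contradicting the validity of $\alpha$. Hence $\models \naknow \alpha \supset \neg \aknow \alpha$; the case $\Box_i = \bknow$ is symmetric. The technical heart of the argument has already been isolated in Lemma \ref{lem:every_kffn_consistent_is_sat} via the correspondence construction, which is precisely why we had to set up $(k,j)$-correspondence models beforehand.
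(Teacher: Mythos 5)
Your proof is correct and follows essentially the same route as the paper's: the schemata $\mathbf{A1}_n$--$\mathbf{A4}_n$ and the rules are routine, and the soundness of $\afive$ reduces to Lemma \ref{lem:every_kffn_consistent_is_sat}, which supplies a pair $\lan \wstar, \estarbkmin \ran$ satisfying $\neg\alpha$ that must lie either inside $\eak$ (falsifying $\aknow\alpha$) or outside it (falsifying $\naknow\alpha$). The only cosmetic differences are that you phrase this dichotomy contrapositively, whereas the paper states it directly and wraps it in an (essentially redundant) induction on depth with the propositional case as base.
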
   
	
	\begin{proof}		The soundness is easily shown to hold for \( \mathbf{A1}_n - \mathbf{A4}_n \). The soundness of \( \afive \) is shown by induction on the depth. Suppose \( \alpha \) is a propositional formula, and say \( \neg \alpha \) is a consistent  propositional formula (and hence \( \kffn \)-consistent). Then there is a world \( \wstar \) such that \( \{\},\{\}, \wstar\models \neg \alpha \). Given a \( \eak \), if \( \lan \wstar, \ebkmin \ran \in \eak \) for some \( \ebkmin \), then \( \eak, \{\}, w \models \neg \aknow \alpha \) for any world \( w \). If not, then \( \eak, \{\}, w\models \neg \naknow \alpha \). Thus,  \( \eak, \{\}, w\models \naknow \alpha \supset \neg \aknow \alpha \). Wlog, assume the proof holds for \( a\)-objective  formulas of max.~\( b \)-depth \( k-1 \).  Suppose now, \( \alpha   \) is such a formula, and \( \neg \alpha \) is \( \kffn \)-consistent. By Lemma \ref{lem:every_kffn_consistent_is_sat}, there is \( \lan \wstar, \estarbkmin \ran \), such that \( \{\}, \estarbkmin, \wstar \models \neg \alpha \). Again, if \( \lan \wstar, \estarbkmin \ran \in \eak \), then \( \eak, \{\}, w\models \neg \aknow \alpha \) and if not, then \( \eak, \{\}, w\models \neg \naknow \alpha \).
		\qed \end{proof}

\no We proceed with the completeness over the following definition, and lemmas. 

\begin{definition} A formula \( \psi \) is said to be \emph{independent} of the formula \( \phi \) wrt. an axiom system \( AX \), if neither \( AX \vdash \phi \supset \psi \) nor \( AX \vdash \phi \supset \neg \psi \). 

\end{definition}

\begin{lemma}[Halpern and Lakemeyer, 2001]\label{lem:existence_independent}
If \( \phi_1, \ldots, \phi_m \) are \( \kffn \)-consistent basic \( i
\)-objective formulas then there exists a basic \( i \)-objective formula \(
\psi \) of the form \( \jknow \psi' \) (\( j \neq i \)) that is independent of \( \phi_1,
\ldots, \phi_m \) wrt. \( \kffn \).

\end{lemma}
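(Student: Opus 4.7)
The plan is to harness the infinite propositional alphabet $\Phi$: each $\phi_k$ is a finite formula, so only finitely many atoms appear across $\phi_1,\dots,\phi_m$, and I can pick $p \in \Phi$ occurring in none of them. I will then take $\psi \equiv \jknow p$. By construction this is of the prescribed form $\jknow \psi'$, is basic (no $\nknow$ appears), and is $i$-objective (its sole outermost epistemic operator is $\Box_j$ with $j\neq i$). Independence of $\psi$ from each $\phi_k$ with respect to $\kffn$ amounts to $\kffn$-consistency of both $\phi_k \land \jknow p$ and $\phi_k \land \neg \jknow p$, so the remainder of the proof breaks into these two satisfiability arguments.

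For the positive conjunct I would take any $\kffn$-Kripke model $(M,w)$ of $\phi_k$ (guaranteed by $\kffn$-consistency) and modify the valuation to set $p$ true at every world of $M$, obtaining $M'$. A routine induction on subformula structure shows that $\phi_k$'s truth at $w$ is unaffected, because $p$ does not occur in $\phi_k$, while $\jknow p$ now holds vacuously at every world, including $w$.

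The harder direction is producing a $\kffn$-model of $\phi_k \land \neg \jknow p$. Starting again from $(M,w)\models \phi_k$, the idea is to modify $M$ so that some $j$-accessible world of $w$ falsifies $p$; freshness of $p$ again preserves $\phi_k$'s truth by the same induction. The delicate point, which I expect to be the main obstacle, is the case where $\mathcal{K}_j(w)$ is empty in $M$: then no relabelling of $p$ suffices, and one must perform model surgery, either adjoining a fresh $j$-successor of $w$ with $p$ set false while extending $\mathcal{K}_j$ so as to preserve its transitivity and Euclidean properties (K45), or taking a suitable product of $M$ with a separately chosen K45 countermodel of $\jknow p$. Showing that the surgery simultaneously validates $\neg \jknow p$ at $w$, leaves $\phi_k$ satisfied, and respects the K45 frame conditions is the technical heart of the argument; once it is carried out, $\psi \equiv \jknow p$ is the required independent formula.
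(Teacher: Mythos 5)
The paper does not actually prove this lemma --- it imports it from Halpern and Lakemeyer (2001) --- and the construction it relies on elsewhere (Lemma~\ref{lem:depth_of_independent_formula} and the proof of the non-basic analogue in Section~\ref{sec:proof_theory}) is a deeply nested alternating formula \( (\bknow\aknow)^{k+1}p \) of \( j \)-depth \( 2k+2 \), exceeding the depth of the \( \phi_l \). Your fresh-atom construction \( \psi = \jknow p \) is a genuinely different route, and the positive half of your argument (revaluing \( p \) to true everywhere to get a model of \( \phi_l \land \jknow p \)) is fine. The problem is the other half.

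The case you flag and defer --- \( \mathcal{K}_j(w) = \emptyset \) --- is not a technicality that model surgery can repair; it is fatal to the construction. Since \( \kffn \) imposes no seriality, \( \jknow(q \land \neg q) \) is a \( \kffn \)-consistent, basic, \( i \)-objective formula, and from the tautology \( \bot \supset p \) one obtains \( \kffn \vdash \jknow \bot \supset \jknow p \) by necessitation and \( \mathbf{K} \) --- for \emph{every} atom \( p \), fresh or not. So if some \( \phi_l \) entails \( \jknow\bot \), then \( \phi_l \supset \jknow p \) is a theorem, \( \phi_l \land \neg\jknow p \) is inconsistent, and no surgery can produce a model of it: adjoining a fresh \( j \)-successor of \( w \) to falsify \( \jknow p \) simultaneously falsifies \( \phi_l \) at \( w \), which is precisely the condition ("leaves \( \phi_l \) satisfied") you require the surgery to meet. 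Note that the same computation defeats \emph{every} candidate of the prescribed form \( \jknow\psi' \), so the obstruction must be dealt with at the level of the hypotheses on the \( \phi_l \) (consistency with \( \neg\jknow\bot \), which is how the case is isolated in the source you would need to consult), not by tinkering with the model. Your proposal does not engage with this, and as written the "technical heart" you postpone cannot be carried out.
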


\begin{lemma}\label{lem:depth_of_independent_formula} In the lemma above, if \( \phi_i \) are \( i  \)-objective and of maximal \( j \)-depth \( k \) for \( j\neq i \), then there is a \( \psi \) of \( j \)-depth \( 2k+2 \). 

\end{lemma}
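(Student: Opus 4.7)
The approach is to refine the semantic construction behind Lemma~\ref{lem:existence_independent} by exhibiting an explicit witness $\psi$ and computing its $j$-depth. Since the $\phi_l$ are basic $i$-objective formulas of $j$-depth at most $k$, their truth in any $\kffn$ Kripke model at a world $w$ is determined by the fragment of the model reachable by sequences of accessibility steps using at most $k$ $R_j$-transitions, interleaved with arbitrarily many $R_i$-transitions. I would first formalize this locality property as a bisimulation-style invariance lemma for basic $i$-objective formulas, proved by induction on $j$-depth.

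I would then propose the explicit candidate
\[
\psi \;\equiv\; \jknow\,\know\,(\jknow\,\know)^{k}\, p,
\]
where $p$ is a proposition occurring in none of the $\phi_l$. A direct induction using rules~5 and~6 of the $j$-depth definition yields $|\psi|_j = 2k+2$; moreover $\psi$ is basic, $i$-objective, and of the required form $\jknow\,\psi'$, since its outermost operator is $\jknow$ with $j\neq i$. Independence is then shown semantically via Kripke completeness of $\kffn$: starting from any $\kffn$ model satisfying the $\phi_l$ at a distinguished world, I would build two extensions, one setting $p$ true at every world (making $\psi$ hold vacuously), and another grafting an alternating $R_j$--$R_i$ chain of length $2k+2$ ending at a world where $p$ fails (forcing $\neg\psi$). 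The locality lemma guarantees that these modifications, located past the $j$-depth-$k$ horizon of $\phi$, leave the truth values of the $\phi_l$ intact, so both $\phi_l \wedge \psi$ and $\phi_l \wedge \neg\psi$ are $\kffn$-consistent.

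The principal obstacle will be carrying out the grafting step compatibly with transitivity and the Euclidean condition on the $\kffn$ accessibility relations: a naive insertion of a new $R_j$-successor can propagate back via the Euclidean axiom and corrupt the shallow portion of the model that actually constrains $\phi$. The bound $2k+2$ rather than $k+1$ is exactly what is needed to circumvent this, since each alternation from $R_j$ to $R_i$ exits the cluster forced by the Euclidean closure of the preceding relation, so a chain of $2k+2$ alternating steps is guaranteed to land strictly outside the region constrained by $\phi$. Making the locality and bisimulation arguments precise, and verifying that the alternation pattern is both necessary (to survive the Euclidean closure) and sufficient (to register the toggling of $p$ at the root), is the technical crux of the plan.
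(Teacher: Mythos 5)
Your choice of witness and the depth bookkeeping are exactly right: the paper's own construction (visible in the proof of the non-basic analogue of this lemma used later for \( \axioms^t \)) takes \( \psi = (\jknow\know)^{k+1}p \), and rules 5 and 6 of the depth definition do give \( |\psi|_j = 2(k+1) = 2k+2 \). Since the lemma is stated purely as a refinement of Lemma~\ref{lem:existence_independent}, which the paper imports from Halpern and Lakemeyer rather than re-proving, identifying that witness and computing its \( j \)-depth is essentially all that is required; the independence of \( \psi \) is already supplied by the cited result.

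The gap lies in your decision to re-establish independence from scratch by model surgery, and specifically in the grafting step, which you correctly flag as the crux but do not resolve --- and which fails in identifiable cases. If some \( \phi_h \) is a \( \kffn \)-consistent \( i \)-objective formula entailing \( \jknow\bot \) (satisfied at a world with no \( j \)-successors, since \( \kffn \) has no seriality axiom), then \( \phi_h \supset \jknow\chi \) is a \( \kffn \)-theorem for every \( \chi \), so no alternating chain can be appended at the root without falsifying \( \phi_h \); the same obstruction recurs one level down when \( \phi_h \) entails \( \jknow\know\bot \), and so on. A bare grafting argument therefore cannot deliver the independence claim without confronting exactly the side conditions that the cited lemma's original proof handles, and your locality lemma as stated does not address them. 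Note also that the paper's route for the non-basic generalization is proof-theoretic rather than semantic: it tracks which axioms (\( \mathbf{A4}_n \), \( \afive^t \)) can introduce modal operators around a derivable \( \gamma \) and checks that nothing of the shape \( (\jknow\know)^{k+1}p \) can be forced. Finally, your Euclidean-closure explanation of the constant \( 2k+2 \) overstates matters: the lemma asserts only that some witness of that \( j \)-depth exists, and \( 2k+2 \) is simply what \( k+1 \) alternating \( \jknow\know \) pairs (alternation being needed because same-agent stacking collapses under introspection) cost under the depth-counting rules, not a tight lower bound that the Euclidean condition forces.
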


\begin{lemma}[Halpern and Lakemeyer, 2001]\label{lem:phi_or_psi_is_valid_basic} If \( \phi \) and \( \psi \) are \( i \)-objective basic formulas, and if \( \know \phi \land \nknow \psi \) is \( \axioms \)-consistent, then \( \phi \lor \psi \) is valid.  

\end{lemma}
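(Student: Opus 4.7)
The plan is to argue by contraposition: assume $\phi \lor \psi$ is not valid and derive that $\know \phi \land \nknow \psi$ is $\axioms$-inconsistent. If $\phi \lor \psi$ fails in some $(k,j)$-model, then $\neg \phi \land \neg \psi$ is satisfied there, hence $\neg(\phi \lor \psi)$ is satisfiable in the $k$-structures semantics. Since $\phi \lor \psi$ is basic and $i$-objective (no $\nknow$ anywhere, only $\jknow$ for $j \neq i$ nested inside), the next task is to transfer satisfiability of its negation into $\kffn$-consistency of $\neg(\phi \lor \psi)$.

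With that in hand, axiom $\afive$ applied to the basic $i$-objective formula $\phi \lor \psi$ gives $\nknow(\phi \lor \psi) \supset \neg \know(\phi \lor \psi)$. From $\know \phi$, using the propositional tautology $\phi \supset (\phi \lor \psi)$ together with $\mathbf{NEC}$ for $\know$ and $\mathbf{A2}_n$, I derive $\know(\phi \lor \psi)$. Symmetrically, from $\nknow \psi$ together with $\mathbf{NEC}$ for $\nknow$ and $\mathbf{A3}_n$, I derive $\nknow(\phi \lor \psi)$. Combining the two with the $\afive$-instance yields a contradiction, so $\know \phi \land \nknow \psi$ was $\axioms$-inconsistent, giving the contrapositive.

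The main obstacle is the first step, the semantic-to-proof-theoretic bridge: I need that if a basic $i$-objective formula is satisfiable in some $(k,j)$-model, then it is $\kffn$-consistent. The cleanest way is to observe that for basic $i$-objective formulas the $\nknow$ operators never appear, and then to build from the witnessing $(k,j)$-model a Kripke structure by recursively turning each $k$-structure into an accessibility relation for the corresponding agent at depth $k$ (essentially the inverse of the correspondence construction in Definition \ref{defn:correspondence_model}); a straightforward induction on formula depth mirroring Theorem \ref{thm:basicformulas_canon_iff_kjmodel} shows the basic formula is preserved, and soundness of $\kffn$ for Kripke structures then supplies $\kffn$-consistency. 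The remaining proof-theoretic derivations are routine uses of $\mathbf{A1}_n$--$\mathbf{A3}_n$ and $\mathbf{NEC}$.
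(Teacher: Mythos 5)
Your argument is correct, and it is essentially the argument this paper itself relies on: the basic lemma is imported from Halpern and Lakemeyer (2001) without proof, but the paper's proof of its non-basic generalization (Lemma \ref{lem:phi_or_psi_is_valid}) is exactly your contraposition --- from non-validity infer consistency of \( \neg\phi\land\neg\psi \), apply the \( \mathbf{A5} \)-style axiom to \( \phi\lor\psi \), and clash \( \nknow(\phi\lor\psi)\supset\neg\know(\phi\lor\psi) \) against \( \know(\phi\lor\psi) \) and \( \nknow(\phi\lor\psi) \), which follow from \( \know\phi\land\nknow\psi \) by \( \mathbf{NEC} \), \( \mathbf{A2}_n \) and \( \mathbf{A3}_n \). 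The one place you do more work than necessary is the bridge from satisfiability of \( \neg(\phi\lor\psi) \) in a \( (k,j) \)-model to its \( \kffn \)-consistency: rather than inverting the correspondence construction of Definition \ref{defn:correspondence_model}, it suffices to observe that the \( \textsf{K45} \) axioms and \( \mathbf{NEC} \) are already shown valid for \( k \)-structures (the ``weak \( \textsf{S5} \)'' lemma), so every \( \kffn \)-theorem is valid and hence every satisfiable basic formula is \( \kffn \)-consistent; equivalently, one can route the whole contrapositive through soundness of \( \axioms \) (Theorem \ref{thm:soundness_onlmin}), which is how the paper handles the generalized case with \( \axiomsk \). Your inverse-correspondence route can be made to work, but you would still owe a verification that the resulting Kripke structure is transitive and Euclidean (so that \( \kffn \)-soundness applies to it), which the direct appeal to the weak-\( \textsf{S5} \) lemma sidesteps.
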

                
\begin{lemma}[Halpern and Lakemeyer, 2001] Every formula \( \alpha \in \onl \) is provably equivalent to one in the normal form \emph{(}written below for \( n = \{a,b\} \)\emph{):}   \\[1ex]
\no \( \bigvee(\sigma \land \aknow \bel \land \neg \aknow \nbelone  \ldots \land \neg \aknow \varphi_{a{m_1}}  \land \bknow 	\varphi_{b0} \ldots \land \neg \bknow \varphi_{b{m_2}}  \land 
 \naknow \most \ldots \land \neg \naknow \psi_{a{n_1}}   \land \nbknow \psi_{b0} \ldots \land \neg \nbknow \psi_{bn_2} ) \) \\[1ex]
\no where \( \sigma \) is a propositional formula, and \( \varphi_{im} \) and \( \psi_{in} \) are \( i \)-objective. If \( \alpha \in \onlmin \), \( \varphi_{im} \) and \( \psi_{in} \) are basic.

\end{lemma}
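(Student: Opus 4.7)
I would proceed by induction on the structural complexity of $\alpha$, showing that $\alpha$ is provably equivalent to a disjunction of conjunctions of the required shape. The base case, where $\alpha$ is a propositional atom or an equality, is a single-disjunct normal form with $\sigma = \alpha$ and no modal conjuncts. For the Boolean combinations $\neg \beta$ and $\beta_1 \lor \beta_2$, one applies the induction hypothesis, pushes negations inward using De Morgan (noting that both $\Box_i \varphi$ and $\neg \Box_i \varphi$ are legal conjuncts of the normal form), and distributes conjunction over disjunction to regroup.

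The main work is the modal case $\aknow \beta$; the cases $\naknow \beta$, $\bknow \beta$, $\nbknow \beta$ are symmetric. By the IH, $\beta = \bigvee_j D_j$ with $D_j = A_j \land O_j$, where $A_j$ is the conjunction of the $a$-subjective literals (the $\aknow$- and $\naknow$-literals of $D_j$) and $O_j$ collects the $a$-objective conjuncts ($\sigma_j$ together with the $\bknow$- and $\nbknow$-literals). Enumerate the complete truth-assignments $T_1, \ldots, T_N$ to the finitely many $\aknow$- and $\naknow$-subformulas occurring in $\beta$; each $T_\ell$ is itself $a$-subjective, and the $T_\ell$'s are pairwise inconsistent and jointly exhaustive. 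Then propositionally $\beta \equiv \bigvee_\ell (T_\ell \land O'_\ell)$, where $O'_\ell$ is the disjunction of those $O_j$ for which $T_\ell$ entails $A_j$. The crucial claim is that $\aknow \beta \equiv \bigvee_\ell (T_\ell \land \aknow O'_\ell)$. For the $(\Leftarrow)$ direction, using $T_\ell \supset \aknow T_\ell$ (from $\mathbf{A4}_n$) together with K-distribution, one gets $T_\ell \land \aknow O'_\ell \supset \aknow(T_\ell \land O'_\ell) \supset \aknow \beta$. For $(\Rightarrow)$, since $\bigvee_\ell T_\ell$ is a tautology, $\aknow \beta$ is equivalent to $\bigvee_\ell (T_\ell \land \aknow \beta)$; within each $\ell$ one combines $T_\ell \supset \aknow T_\ell$ with $\neg T_{\ell'} \supset \aknow \neg T_{\ell'}$ (for $\ell' \neq \ell$) and K to push $T_\ell \land \bigwedge_{\ell' \neq \ell} \neg T_{\ell'}$ under $\aknow$, and then extracts $\aknow O'_\ell$ via the propositional equivalence $T_\ell \land \beta \equiv T_\ell \land O'_\ell$. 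Since each $O'_\ell$ is $a$-objective, each disjunct $T_\ell \land \aknow O'_\ell$ already has the required shape, and Boolean regrouping yields the full normal form.

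The main obstacle is justifying the factorization $\aknow \beta \equiv \bigvee_\ell (T_\ell \land \aknow O'_\ell)$, since $\aknow$ does not distribute over $\lor$ in general. The argument depends essentially on both positive and negative introspection --- both embodied in $\mathbf{A4}_n$ --- which force every $a$-subjective subformula to have a rigid truth value across Alice's epistemic alternatives, so that $\aknow \beta$ is effectively determined by the realized $T_{\ell^\ast}$ together with $\aknow O'_{\ell^\ast}$. This is also the step that requires the full $\onl$ rather than $\onlmin$, since each $O'_\ell$ may contain non-basic $\bknow$- and $\nbknow$-literals; the symmetric cases for $\naknow$, $\bknow$, $\nbknow$ then need only notational adjustments.
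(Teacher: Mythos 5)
The paper does not actually prove this lemma; it is imported verbatim from Halpern and Lakemeyer (2001), so there is no in-paper argument to compare against. Your proposal is correct and is essentially the standard proof behind the cited result: the only non-routine step is the modal case, and your factorization \( \aknow \beta \equiv \bigvee_\ell (T_\ell \land \aknow O'_\ell) \) over complete truth assignments to the \( a \)-subjective subformulas, justified in both directions by \( \mathbf{A4}_n \) together with \( \mathbf{K} \)-distribution and necessitation applied to propositional tautologies, is exactly the right mechanism (the only details left implicit are merging multiple positive \( \aknow \)-literals into the single \( \aknow\varphi_{a0} \) via \( \mathbf{A2}_n \) and \( \mathbf{NEC} \), and checking that for \( \alpha \in \onlmin \) the resulting \( \varphi_{im},\psi_{in} \) are basic).
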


\begin{theorem}\label{thm:completeness_onlmin} For all formulas \( \alpha \in \onlmin \), if \(\models \alpha \) then \( \axioms \vdash \alpha \). 

\end{theorem}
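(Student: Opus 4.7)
The plan is to establish the contrapositive: every $\axioms$-consistent $\alpha \in \onlmin$ is satisfiable in some $(k,j)$-model. By the normal-form lemma above, it suffices to satisfy a single $\axioms$-consistent disjunct carrying a propositional conjunct $\sigma$, conjuncts $\aknow\varphi_{a0}$ and $\neg\aknow\varphi_{am}$ for $m=1,\ldots,m_1$, conjuncts $\naknow\psi_{a0}$ and $\neg\naknow\psi_{an}$ for $n=1,\ldots,n_1$, and the analogous conjuncts for $b$, where every $\varphi_{im},\psi_{in}$ is basic $i$-objective. The task is then to construct $\sit$ satisfying this conjunction: pick a world $w \models \sigma$, build $\eak$ to meet the $a$-conjuncts, and build $\ebj$ symmetrically.

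For $\eak$ I would proceed as follows. From $\axioms$-consistency of $\aknow\varphi_{a0} \land \naknow\psi_{a0}$, Lemma~\ref{lem:phi_or_psi_is_valid_basic} yields $\models \varphi_{a0}\lor\psi_{a0}$, so every candidate $\langle w',\ebkmin\rangle$ satisfies at least one disjunct. From $\axioms$-consistency of $\aknow\varphi_{a0}\land\neg\aknow\varphi_{am}$, necessitation with axiom $\mathbf{A2}_n$ forces $\varphi_{a0}\land\neg\varphi_{am}$ to be $\kffn$-consistent; Lemma~\ref{lem:every_kffn_consistent_is_sat} then furnishes a $(k,j)$-model whose extracted structure $s_m$ satisfies $\varphi_{a0}\land\neg\varphi_{am}$ and so can serve as an ``in'' witness for $\neg\aknow\varphi_{am}$. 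Symmetrically, $\naknow\psi_{a0}\land\neg\naknow\psi_{an}$ being consistent yields an ``out'' witness $t_n$ satisfying $\psi_{a0}\land\neg\psi_{an}$. Tentatively, $\eak$ is to contain every structure satisfying $\varphi_{a0}$ (together with every $s_m$) and exclude every structure satisfying only $\psi_{a0}$ (together with every $t_n$); the overlap where both hold is handled below.

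The main obstacle is witness collision: nothing so far precludes some $s_m$ from coinciding with some $t_n$, which would be fatal since the former must be inside $\eak$ and the latter outside. To separate them I would invoke Lemmas~\ref{lem:existence_independent} and~\ref{lem:depth_of_independent_formula} to obtain a basic $a$-objective formula $\chi$ of $b$-depth sufficient to exceed those of the $\varphi_{im},\psi_{in}$ and $\kffn$-independent of them. Re-drawing the in-witnesses from $\varphi_{a0}\land\neg\varphi_{am}\land\chi$ and the out-witnesses from $\psi_{a0}\land\neg\psi_{an}\land\neg\chi$ keeps each conjunction $\kffn$-consistent by independence, so Lemma~\ref{lem:every_kffn_consistent_is_sat} still delivers witnesses, now provably disjoint via $\chi$; the overlap boundary is split by $\chi$ in the same way ($\chi$ in, $\neg\chi$ out). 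A parallel construction yields $\ebj$, and Theorem~1 lifts the model to whatever depth Lemma~\ref{lem:depth_of_independent_formula} requires for $\chi$. A routine induction on formula depth then verifies that $\aknow\varphi_{a0}$ and $\naknow\psi_{a0}$ hold (using validity of $\varphi_{a0}\lor\psi_{a0}$ to cover the out-side) and that every $\neg\aknow\varphi_{am}$ and $\neg\naknow\psi_{an}$ is realized by the corresponding witness. The bookkeeping around the independent formula $\chi$ is where almost all care is needed; once disjointness is secured, the remaining verifications are mechanical.
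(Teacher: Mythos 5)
Your proposal is correct and takes essentially the same route as the paper's own proof: reduce $\axioms$-consistency to satisfiability via the normal form, obtain $\models \varphi_{a0}\lor\psi_{a0}$ from Lemma~\ref{lem:phi_or_psi_is_valid_basic}, and use an independent formula (your $\chi$, the paper's $\gamma$) to split the $\varphi_{a0}\land\psi_{a0}$ overlap so in-witnesses and out-witnesses cannot collide --- the paper merely packages your membership condition $\varphi_{a0}\land(\neg\psi_{a0}\lor(\psi_{a0}\land\chi))$ as the seed of its maximally consistent sets $S_a$. The one point to tighten is that $\chi$ must be chosen independent of the relevant \emph{conjunctions} (e.g.\ $\varphi_{a0}\land\neg\varphi_{am}$, $\psi_{a0}\land\neg\psi_{an}$, $\varphi_{a0}\land\psi_{a0}$) rather than of the individual $\varphi_{im},\psi_{in}$, which is exactly what the paper's set $A$ arranges.
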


%{{\ebk}dashdash}{e_{(b,k'')}}

\begin{proof} 
	It is sufficient to prove that every \( \axioms \)-consistent
formula \( \xi \) is satisfiable \wrt some \kjmodel. If \( \xi \) is basic,
then by Lemma \ref{lem:every_kffn_consistent_is_sat}, the statement holds. If
\( \xi \) is not basic, then wlog, it can be
considered in the normal form: \\[1ex] 
	   \no \( \bigvee(\sigma \land \aknow \bel \land \neg \aknow \nbelone  \ldots \land \neg \aknow \varphi_{a{m_1}}  \land \bknow 	\varphi_{b0} \ldots \land \neg \bknow \varphi_{b{m_2}}  \land 
	 \naknow \most \ldots \land \neg \naknow \psi_{a{n_1}}   \land \nbknow \psi_{b0} \ldots \land \neg \nbknow \psi_{bn_2} ) \) \\[1ex]
\no where \( \sigma \) is a propositional formula, and \( \varphi_{im} \)
and \( \psi_{in} \) are \( i \)-objective and basic. Since \( \sigma \) is
propositional and consistent, there is clearly a world \( \wstar \) such that
\( \wstar \models \sigma \). We construct a \( k' \)-structure such that it
satisfies all the \( a \)-subjective formulas in the normal form above.
Following that, a \( j' \)-structure for all the \( b \)-subjective formulas
is constructed identically. The resulting \( (k',j') \)-model (with \( \wstar
\)) satisfies \( \xi \).

%{\ebtwok}{e_b^{2k}}

Let \( A \) be all \( \kffn \)-consistent formulas of the form \( \bel \land
\most \land \neg \nbel \) (for \( j \geq 1 \)) or the form \( \bel \land \most
\land \neg \nmost \). Let \( \gamma \) be independent of all formulas in \( A
\), as in Lemma \ref{lem:existence_independent} and
\ref{lem:depth_of_independent_formula}. Note that, while we take \( \xi
\) itself to be of maximal \( a,b \)-depth of \( k,j \), the depth of \( \bel,
\ldots \) being \( a \)-objective are of maximal \( b \)-depth \( k-1 \), and hence
\( \gamma \) is of \( b	 \)-depth \( 2k \) (Lemma
\ref{lem:depth_of_independent_formula}). Given a consistent set of
formulas, the standard Lindenbaum construction can be used to construct a
maximally consistent set of formulas, all of a maximal \( b \)-depth \( k-1 \). That is,
a formula is considered in the construction only if it has a maximal \( b \)-depth 
\( k-1 \). Now, let \( S_a \) be a set of all maximally consistent sets of
formulas, constructed by only considering formulas of maximal \( b \)-depth \( k-1 \), and containing \( \bel \land (\neg
\most \lor (\most \land \gamma)) \). Since each of these consistent sets are
basic and \( a \)-objective, they are satisfiable by
Lemma~\ref{lem:every_kffn_consistent_is_sat}.  Thus the sets \( S' \in S_a \) are
satisfiable \wrt \( 2k \)-structures \( \lan w, \ebtwok \ran \). Let \( k'
=2k+1 \). By constructing a \( k' \)-structure for Alice, say \( \eakdash \),
from each \( \lan w, \ebtwok \ran \) for every \( S' \in S_a \), we have that
\( \canaObj(\eakdash) = S_a \). We shall show that all the \( a \)-subjective
formulas in the normal form are satisfied \wrt \( \lan {\eakdash}, \{\}, \wstar
\ran \).

Since for all \( S' \in S_a \), we have \( \bel \in S' \) we get that \(
{\eakdash}, \{\}, \wstar \models \aknow \bel \). Now, since \( \aknow \bel
\land \neg \aknow \nbel \) is consistent, it must be that \( \bel \land \neg
\nbel \) is consistent. For suppose not, then \( \neg \bel \lor \nbel \) is
provable and thus, we have \( \bel \supset \nbel \). We then prove \( \aknow
\bel \supset \aknow \nbel \), and since we have \( \aknow \bel \) we prove \(
\aknow \nbel \), clearly inconsistent with \( \aknow \bel
\land\neg \aknow \nbel \). Now that
\( \bel \land \neg \nbel \) is consistent, we either have that \( \bel \land
\neg \nbel \land \most \) or \( \bel \land \neg \nbel \land \neg \most \) is
consistent. With the former, we also have that \( \bel \land \neg \nbel \land
\most \land \gamma \) is consistent. There are maximally consistent sets that
contain one of them, both of which contain \( \neg \nbel \). This means that,
\( {\eakdash}, \{\}, \wstar \models \neg \aknow \nbel \).

%{\estarsbkdashdash}{{\ebk}dashdash^\bullet}
%{\estarsbtwok}{\ebullet^{2k}_b}

Now, consider some \( k' \)-structure \( \lan \wstars, \estarsbtwok \ran \not
\in {\eakdash} \). One of the following \( a \)-objective formulas must hold \wrt
this \( k' \)-structure: (a) \( \bel \land \most \), (b) \( \bel \land \neg
\most \), (c) \( \neg \bel \land \most \) or (d) \( \neg \bel \land \neg \most
\). It can not be (d), since \( \aknow \bel \land \naknow \most \) is
consistent, and this implies that \( \bel \lor \most \) is valid (by Lemma
\ref{lem:phi_or_psi_is_valid_basic}). It certainly cannot be (b), for it would
be in some \( S' \in S_a \). This leaves us with options (c) and (a), both of
which have \( \neg \most \). Since the \( k' \)-structure was arbitrary, we
must have for all \( \lan w, \ebtwok \ran \not\in {\eakdash} \), \( \{\},
{\ebtwok}, w \models \most \). Thus, \( {\eakdash}, \{\}, \wstar \models
\naknow \most \).

Finally, since \( \naknow \most \land \neg \naknow \nmost \) is consistent, it
must be that \( \most \land \neg \nmost \) is consistent. Further, either \(
\most \land \neg \nmost \land \bel \) or \( \most \land \neg \nmost \land \neg
\bel \) is consistent. If the former, then \( \most \land \neg \nmost \land
\bel \land \neg \gamma \) is also consistent. Let \( \beta \) be that which is
consistent. Note that \( \neg \beta \land (\bel \land (\neg \most \lor (\most
\land \gamma))) \) is consistent, and hence part of all \( S' \in S_a \). This
means that \( {\eakdash}, \{\}, \wstar \models \aknow(\neg \beta) \). But
since \( \beta \) itself is consistent, there is a \( k' \)-structure such
that \( \{\}, e_{(b,2k)}^\bullet, \wstars \models \beta \). And this \( k'
\)-structure can not be in \( {\eakdash} \). This means that \( {\eakdash},
\{\}, \wstar \models \neg \naknow \nmost \). Thus, all the \( a \)-subjective
formulas in the normal form above are satisfiable \wrt \( {\eakdash} \).
\qed \end{proof}   

%{\onlplus}{\onl^+}
%{\axiomsplus}{\axioms^+}
%{\onlk}{\onl^k}
%{\onlkplus}{\onl^{k+1}}

\no  Now, observe that, although \( \aknow \bot \supset \neg \naknow \neg \boknow \neg \aoknow p \)~(\( \zeta \)) from Theorem \ref{lem:iset_validity} is valid, yet it is not derivable from \( \axioms \). In fact, the soundness result is easily extended to the full language \( \onl \). Then, the proof theory cannot be complete for the full language since there is \( \zeta \in \onl \) such that \( \not\vdash \zeta \) and \( \models \zeta \). Similarly, the validity of non-provable formulas \( \neg \aoknow \neg \boknow p \) and \( \zeta \) \emph{wrt.}~the canonical model and the \( i \)-set approach respectively, show that although \( \axioms \) is also sound for the full language in these approaches, it cannot be compelete. Mainly, axiom \( \afive \) has to somehow go beyond basic formulas. As Halpern and Lakemeyer (\citeyear{1029713}) discuss, the problem is one of circularity. We would like the axiom to hold for any \( \alpha \) such that it is a consistent \( i \)-objective formula, but to deal with consistency we have to clarify what the axiom system looks like. 

The approach taken by Halpern and Lakemeyer is to introduce \emph{validity} (and its dual satisfiability) directly into the language. Formulas in the new language, \( \onlplus \), are shown to be provably equivalent to \( \onl \). Some new axioms involving validity and satisfiability are added to the axiom system, and the resultant proof theory \( \axiomsplus \) is shown to be sound and complete for formulas in \( \onlplus \), \emph{wrt.}~an \emph{extended} canonical model. (An extended canonical model follows the spirit of the canonical model construction but by considering maximally \( \axiomsplus \)-consistent sets, and treat \( \know \) and \( \nknow \) as two independent modal operators.) So, one approach is to show that for formulas in the extended language the set of valid formulas overlap in the extended canonical model and \( k \)-structures. But then, as we argued, axiomatizing validity is not natural. Also, the proof theory is difficult to use. And in the end, we would still understand the axioms to characterize a semantics bridged on proof-theoretic elements. 

Again, what is desired is a generalization of Levesque's axiom \( \mathbf{A5} \), and nothing more. To this end, we propose a new axiom system, that is subtly related to the structure of formulas as are parameters \( k \) and \( j \). The axiom system has an additional \( t \)-axioms, and is to correspond to a sequence of languages \( 
\onlk \).\footnote{The idea was also suggested by a reviewer in \cite{1029713} for an axiomatic characterization of the extended canonical model, although its completeness was left open.} 

% So, one approach would be to somehow show that validity wrt an extended canonical model is equivalent to validity wrt \kjmodel s; and if not which formulas fail. Instead, we propose a new axiom system that has an additional \( t \)-axioms, that is to correspond to a sequence of languages \( \onlk \). 

\begin{definition} Let \( \onl^1 = \onlmin \). Let \( \onlkplus \) be all Boolean combinations of formulas of \( \onlk \) and formulas of the form \( \know \alpha \) and \( \nknow \alpha \) for \( \alpha \in \onlk \). 

\end{definition}

%{\axiomsk}{\axioms^k}
%{\kmodels}{\models^{k}}
\no It is not hard to see that \( \onlkplus \supseteq \onlk \). Note that \( t \)
here does not correspond to the depth of formulas. Indeed, a formula of the
form \( (\bknow\aknow)^{k+1}p \) is already in \( \onlmin \). Let \( \axiomskplus \)
be an axiom system consisting of \( \mathbf{A1}_n-\mathbf{A4}_n \), \( \mathbf{MP} \), \( \mathbf{NEC} \) and \( \afive^1-\afive^{t+1} \) defined inductively as: \begin{itemize}
	\item[]  \( \afive^1. ~\nknow \alpha	 \supset \neg \know \alpha	 \), if \( \neg \alpha \) is a \( \kffn \)-consistent \\ 
	\qquad \mbox{}\qquad \mbox{}\( i \)-objective basic formula. 

	\item[] \(  \afive^{t+1}. ~\nknow \alpha \supset \neg \know \alpha	 \), if \( \neg \alpha \in \onlk \), is \( i \)-objective,\\
	\qquad \mbox{}\qquad \mbox{} and consistent \emph{wrt.}~\( \mathbf{A1}_n-\mathbf{A4}_n \), $\afive^1-\afive^t$ . 
\end{itemize}

\begin{theorem}\label{thm:soundness_full_lang} For all \( \alpha \in \onlk \), if \( \axiomsk \vdash \alpha \) then \( \models \alpha \). 

\end{theorem}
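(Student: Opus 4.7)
The plan is to proceed by induction on $t$. The base case $t=1$ is exactly Theorem \ref{thm:soundness_onlmin}, since $\axioms^1_n = \axioms$ and $\onl^1 = \onlmin$. For the inductive step, assume $\axiomsk$ is sound for $\onlk$ and show soundness of $\axiomskplus$ for $\onlkplus$. The axioms $\mathbf{A1}_n$--$\mathbf{A4}_n$, $\mathbf{MP}$, $\mathbf{NEC}$, and $\afive^1, \ldots, \afive^t$ remain sound by the induction hypothesis; their validity arguments only require models of at least the formula's depth, and Lemma \ref{lem:satisfiability_higher_structures} ensures this is preserved when moving to the richer language $\onlkplus$. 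The only genuinely new ingredient to check is $\afive^{t+1}$.

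For $\afive^{t+1}$, suppose $\neg \alpha \in \onlk$ is $i$-objective and consistent wrt $\axiomsk$. Reasoning as in Theorem \ref{thm:soundness_onlmin}, given any $(k,j)$-model $\sit$, if $\neg \alpha$ is satisfiable at some structure $\lan w^{\ast}, e^{\ast} \ran$ (wlog take the $\aknow$-case, with $e^{\ast}$ an appropriate $(k{-}1)$-structure for Bob), then either this witness lies in $\eak$, making $\aknow \alpha$ fail, or it does not, making $\naknow \alpha$ fail; either way $\nknow \alpha \supset \neg \know \alpha$ holds at $\sit$. So everything reduces to establishing satisfiability of $\neg \alpha$.

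The main obstacle, and the real crux of the proof, is therefore the auxiliary lemma that \emph{every} $\axiomsk$-consistent $i$-objective formula of $\onlk$ is satisfiable in some $(k,j)$-model. This generalizes Lemma \ref{lem:every_kffn_consistent_is_sat} beyond basic formulas and is precisely the circularity issue the stratified axiomatization is designed to break. I would prove it by a parallel induction on $t$, running alongside soundness: at level $1$, satisfiability of $\kffn$-consistent basic formulas gives the base case; at level $t{+}1$, I would adapt the model construction of Theorem \ref{thm:completeness_onlmin}, first normalizing $\neg\alpha$ via the normal-form lemma, and then performing a Lindenbaum-style extension over $i$-objective subformulas drawn from $\onlk$ (of the appropriate $j$-depth), together with an independent-formula witness as in Lemmas \ref{lem:existence_independent}--\ref{lem:depth_of_independent_formula}.

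The delicate point is that the Lindenbaum sets at this step contain non-basic formulas from $\onlk$, and their satisfiability cannot be furnished by Lemma \ref{lem:every_kffn_consistent_is_sat} alone; instead it must be invoked from the induction hypothesis at level $t$, which applies precisely because each component formula appearing in the Lindenbaum construction lies in $\onlk$ and is $\axiomsk$-consistent. This alignment between the language strata $\onl^t$ and the axiom strata $\afive^t$ is exactly what avoids circularity: at each inductive step the soundness of $\afive^{t+1}$ appeals only to satisfiability of consistent formulas at the strictly lower level, which the parallel induction has already delivered. Once the auxiliary lemma is in hand, the soundness argument for $\afive^{t+1}$ sketched above completes the inductive step, and hence the theorem.
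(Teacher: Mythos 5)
Your proposal is correct and follows essentially the same route as the paper: induction on \( t \), with the soundness of \( \afive^{t+1} \) reduced to the satisfiability of every \( \axiomsk \)-consistent \( i \)-objective formula of \( \onlk \), followed by the same witness argument (the witnessing structure either lies in the epistemic state, refuting \( \know\alpha \), or does not, refuting \( \nknow\alpha \)). The paper's own proof obtains that key satisfiability claim in one line by asserting that \( \axiomsk \)-consistency of \( \neg\alpha \) implies \( \not\models\alpha \) --- i.e., by invoking completeness at level \( t \) (Theorem \ref{thm:completeness_onlk}) --- whereas you spell out the required parallel induction on soundness and on satisfiability of consistent formulas; this is a presentational difference, not a different proof.
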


\begin{proof} 
	We prove by induction on \( t \). The case of \( \axioms^1 \) is
identical to Theorem \ref{thm:soundness_onlmin}. So, for the induction hypothesis,
let us assume that \wrt \( \axiomsk \), if \( \axiomsk \vdash \beta \) for \(
\beta \in \onlk \) then \( \models\beta \). Now, suppose that \( \neg \alpha
\) is consistent \wrt \( \axiomsk \) and is \( a \)-objective. This implies
that \( \not \models \alpha \). Thus, there is some \( k \)-structure \( \lan
\wstar, \estarbk \ran \) such that \( \{\}, \estarbk, \wstar \models \neg
\alpha \). Suppose now \( \lan \wstar, \estarbk \ran \in {\eakplus} \) then \(
{\eakplus}, \{\}, w' \models \neg \aknow \alpha \) and if not then \( {\eakplus},
\{\}, w' \models \neg \naknow \alpha \). Thus, \( {\eakplus}, \{\}, w' \models
\naknow \alpha \supset \neg \aknow \alpha \), demonstrating the soundness of
\( \axiomskplus \).  
\qed \end{proof}

\no We establish completeness in a manner identical to Theorem \ref{thm:completeness_onlmin}, and thus it necessary to ensure that Lemma  \ref{lem:existence_independent}, \ref{lem:depth_of_independent_formula} and \ref{lem:phi_or_psi_is_valid_basic} hold for non-basic formulas. 

\begin{lemma} If \( \phi_1, \ldots, \phi_m \) are \( \axiomsk \)-consistent \( i \)-objective formulas, then there is a basic formula \( \psi \) of the form \( \jknow \psi \) (\( j \neq i \)) that is independent of \( \phi_1, \ldots, \phi_m \) \emph{\wrt }\( \axiomsk \).

\end{lemma}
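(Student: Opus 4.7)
My plan is to proceed by induction on $t$, using the soundness of $\axiomsk$ (Theorem~\ref{thm:soundness_full_lang}) to convert the syntactic independence requirement into a semantic satisfiability statement. The base case $t=1$ is immediate: $\onl^1=\onlmin$ and $\axioms^1=\axioms$, and Theorem~\ref{thm:completeness_onlmin} tells us that $\axioms$-consistency and $\kffn$-consistency coincide on basic $i$-objective formulas, so the claim collapses to Lemma~\ref{lem:existence_independent}.

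For the inductive step, assume the lemma holds for $\axiomsk$, and let $\phi_1,\ldots,\phi_m\in\onlkplus$ be $\axiomskplus$-consistent $i$-objective formulas. By soundness of $\axiomskplus$, each $\phi_r$ is satisfied in some $(k,j)$-model $M_r=(e_{a,r},e_{b,r},w_r)$. I would then pick a proposition $q\in\Phi$ that does not occur in any of the $\phi_r$ (available because $\Phi$ is infinite while each $\phi_r$ is finite), let $d$ be an upper bound on the $j$-depths of the $\phi_r$, and set $\psi=(\jknow)^{d+1}q$. This $\psi$ is basic, $i$-objective, of the required form $\jknow\psi'$, and of $j$-depth $d+1$, matching the bound from Lemma~\ref{lem:depth_of_independent_formula}.

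The crux is to show that for each $r$ both $\phi_r\wedge\psi$ and $\phi_r\wedge\neg\psi$ are satisfiable; soundness of $\axiomskplus$ then yields $\axiomskplus\not\vdash\phi_r\supset\psi$ and $\axiomskplus\not\vdash\phi_r\supset\neg\psi$, which is independence. From $M_r$ I would build two variants: $M_r^+$, obtained by rebuilding each layer of the nested $k$-structure so that $q$ is true at every world appearing anywhere in the model, and $M_r^-$, built analogously but forcing $q$ to be false at some world reached by $d+1$ nested $\jknow$-steps. Since $q$ does not occur in $\phi_r$ and the alterations lie strictly below its modal reach, a routine induction on $\phi_r$ (in the style of Lemma~\ref{lem:satisfiability_higher_structures}) shows $M_r^\pm\models\phi_r$, while $M_r^+\models\psi$ and $M_r^-\models\neg\psi$ hold by construction.

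The main obstacle is the model-surgery itself: a $k$-structure is an inductively layered tuple of worlds paired with lower-depth structures, so toggling a single propositional atom at one level forces a coherent rewrite of every enclosing layer to keep the result well-typed as a $(k,j)$-model, and in parallel one must verify that every $j$-objective subformula of $\phi_r$ remains unaffected. This is bookkeeping rather than new theory---it closely mirrors the $\edak$-from-$\eakdash$ construction used in Lemma~\ref{lem:satisfiability_higher_structures}---but it is the step where genuine care is needed to ensure both well-definedness and $\phi_r$-preservation.
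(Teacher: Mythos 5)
Your approach is genuinely different from the paper's: the paper argues \emph{syntactically}, taking $\psi=(\bknow\aknow)^{k+1}p$ and tracking which axioms ($\mathbf{A4}_n$ and $\afive^t$) could ever place a formula derived from the $\phi_r$ inside the scope of modal operators, concluding that nothing of the required shape at modal depth $2k+2$ is provable from them. You instead argue semantically with a fresh atom, and that route breaks down at its central step. The claim that the surgery producing $M_r^+$ (forcing $q$ true throughout agent $j$'s state) preserves $\phi_r$ because ``$q$ does not occur in $\phi_r$'' is false once $\phi_r$ contains $\njknow$: that operator quantifies over the structures \emph{not} in $e_j$, i.e., over the fixed complement inside the universe of all structures, so altering which structures belong to $e_j$ changes its truth value even for $q$-free formulas. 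Concretely, $\phi_1=\jknow p\land\njknow\neg p$ is an $\axiomsk$-consistent $i$-objective formula that forces $e_j$ to be exactly the set of structures satisfying $p$; since some of these falsify $q$, $\phi_1\land\jknow q$ is unsatisfiable for \emph{any} atom $q$ not entailed by $p$ --- freshness does not help. So no model $M_1^+$ exists, and the semantic notion of independence you are really proving simply fails here; this is precisely why the paper's argument is about what the \emph{axioms} can derive rather than about models. (Your $(\jknow)^{d+1}q$ is also provably equivalent to $\jknow q$ by $\mathbf{A4}_n$-style introspection, and under the paper's depth definition iterating $\jknow$ does not raise the $j$-depth --- only alternation with $\know$ does --- so the depth bookkeeping against Lemma~\ref{lem:depth_of_independent_formula} is off as well.)

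A second, independent gap: you invoke ``soundness of $\axiomskplus$'' to pass from $\axiomskplus$-consistency of $\phi_r$ to satisfiability in some $(k,j)$-model. Soundness gives the opposite direction (satisfiable implies consistent); consistent implies satisfiable is completeness, i.e., Theorem~\ref{thm:completeness_onlk} --- the very result this lemma is a stepping stone toward. The dependency could in principle be untangled by an interleaved induction (completeness at level $t$ uses the lemma only at level $t-1$), but as written the argument assumes a form of what is being proved, and the later correct use of soundness to convert satisfiability of $\phi_r\land\neg\psi$ into non-provability of $\phi_r\supset\psi$ does not rescue this first step.
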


%{{\eak}min}{e_{(a,k-1)}}

\begin{proof} 
	Suppose that \( \phi_i \) are \( a \)-objective and of maximal \( b \)-depth \( k \). A formula \( \psi \) of the form \( (\bknow \aknow)^{k+1}p \) (where \( p \in \Phi \) is in the scope of \( k+1 \) \( \bknow \aknow \)) is shown to be independent of \( \phi_1, \ldots, \phi_m \). Let us suppose we can derive a \( \gamma \) of the form \( \bknow \aknow \bknow \aknow \ldots p \) of maximal depth \( k \), to show that neither \( \vdash \gamma \supset \psi \) nor \(  \vdash \gamma \supset \neg \psi \). Given any formula, the only axioms in \( \axiomsk \) that can introduce \( \gamma \) in the scope of modal operators is \( \mathbf{A4}_n \) and \( \afive^{t} \). Applying \( \mathbf{A4}_n \) gives \(\bknow \gamma \) or \(  \nbknow \gamma \), and then using the axiom again we have \( \bknow \bknow \gamma \) or \( \bknow \nbknow \gamma \). It is easy to see that the resulting formulas are clearly independent from \( \psi \). Applying \( \afive^t \) on the other hand, allows us to derive \( \vdash \gamma \supset \naknow \gamma \) or \(  \vdash \gamma \supset \neg \aknow \gamma \) (\( \gamma \) is consistent \emph{wrt.~}\( \axiomsk \) and hence also \emph{wrt.~}\( \afive^{t-1} \)). Again, we could show \( \vdash \gamma \supset \neg \bknow \neg \aknow \gamma \). Continuing this way, it might only be possible to derive \( \neg \bknow \neg \aknow \ldots \bknow \aknow \ldots p \) of depth \( 2k+2 \), that is indeed independent of \( \psi \).    
	\qed \end{proof}
	
	% 
	% if the formulas are satisfiable wrt some \( \lan \{\}, {\ebk}, w \ran \) then we can construct a \( 2k+2 \) structure from this \( k \)-structure that has precisely its elements. Inductively, if \( \lan w', \eakmin \ran \in {\ebk} \) then we let \( \lan w', e_a^{2k-1} \ran \in {e^*}_b^{2k+2}\), where the \( e_a^{2k-1} \) is built analogously from \( {\eakmin} \). Notice that, after \( k \) levels, we are left with \( (k+2) \)-structures; and they can be arbitrary as far as satisfaction for \( a \)-objective formulas of maximal depth \( k \) are concerned. That is, if \( \{\}, {\ebk}, w \models \phi_i \) then \( \{\}, {e^*}_b^{2k+2}, w \models \phi_i \), for all \( a \)-objective formulas of maximal depth \( k \). By allowing all the \( 1 \)-structures after the higher \( 2k \) levels of \( {e^*}_b^{2k+2} \) to be subsets of only worlds where \( p \) holds will ensure that \( \{\}, e^*_{(b,2k+2)}, w' \models (\bknow\aknow)^{k+1} p \). 

\begin{lemma}\label{lem:phi_or_psi_is_valid} If \( \phi \) and \( \psi \) are \( i \)-objective formulas, \( \phi, \psi \in \onlk \) and \( \know \phi \land \nknow \psi \) is  \( \axiomskplus \)-consistent then \( \models \phi \lor \psi \).

\end{lemma}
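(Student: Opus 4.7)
The plan is to argue by contradiction and reduce to a single application of axiom $\afive^{t+1}$, exactly generalizing the strategy that proves Lemma \ref{lem:phi_or_psi_is_valid_basic}. So I would assume $\know\phi\land\nknow\psi$ is $\axiomskplus$-consistent and, towards a contradiction, that $\not\models\phi\lor\psi$.

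The first step is to verify the side conditions of $\afive^{t+1}$ for $\alpha = \phi\lor\psi$. Since $\phi$ and $\psi$ are both $i$-objective, so is $\phi\lor\psi$; and $\phi\lor\psi$ lies in $\onlk$ because $\onlk$ is closed under Boolean combinations (by the inductive definition of $\onlkplus$, checked by a trivial induction on $t$ starting from $\onlmin$). The remaining precondition is $\axiomsk$-consistency of $\neg(\phi\lor\psi)$. Here I would appeal to soundness of $\axiomsk$ for $\onlk$ (Theorem \ref{thm:soundness_full_lang}): from $\not\models\phi\lor\psi$ we get $\axiomsk\not\vdash\phi\lor\psi$, i.e.\ $\neg\phi\land\neg\psi$ is $\axiomsk$-consistent. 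Instantiating $\afive^{t+1}$ with $\alpha = \phi\lor\psi$ then yields $\axiomskplus\vdash \nknow(\phi\lor\psi)\supset\neg\know(\phi\lor\psi)$.

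The rest is standard normal-modal bookkeeping inside $\axiomskplus$: from the propositional tautologies $\phi\supset(\phi\lor\psi)$ and $\psi\supset(\phi\lor\psi)$, together with $\mathbf{NEC}$, $\mathbf{A2}_n$ and $\mathbf{A3}_n$, one derives $\know\phi\supset\know(\phi\lor\psi)$ and $\nknow\psi\supset\nknow(\phi\lor\psi)$. Combining these with the above instance of $\afive^{t+1}$ gives $\axiomskplus\vdash \know\phi\land\nknow\psi\supset\bot$, contradicting the assumed consistency. The only delicate point, and what I view as the main obstacle, is matching the two levels: the consistency hypothesis is with respect to $\axiomskplus$, whereas the precondition of $\afive^{t+1}$ demands $\axiomsk$-consistency of $\neg(\phi\lor\psi)$. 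Soundness of $\axiomsk$ at the lower level is exactly what bridges this gap, converting the semantic assumption $\not\models\phi\lor\psi$ into the syntactic side condition required to fire $\afive^{t+1}$.
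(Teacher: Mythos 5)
Your proof is correct and follows essentially the same route as the paper's: assume $\not\models\phi\lor\psi$, use soundness of $\axiomsk$ to obtain $\axiomsk$-consistency of $\neg\phi\land\neg\psi$, fire $\afive^{t+1}$ on $\phi\lor\psi$, and contradict the consistency of $\know\phi\land\nknow\psi$ via the monotonicity of $\know$ and $\nknow$. You merely spell out the soundness bridge and the side-condition checks that the paper leaves implicit.
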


\begin{proof} 
	Suppose not. Then \( \neg \phi \land \neg \psi \) is \( \axiomsk \)-consistent, and by \( \afive^{t+1} \) we prove \( \naknow (\phi \lor \psi) \supset \neg \aknow (\phi \lor \psi) \), and thus, \( \naknow \psi \supset \neg \aknow \phi \), and this is not \( \axiomskplus \)-consistent with \( \aknow \phi \land \naknow \psi \). 
	\qed \end{proof}

\begin{theorem}\label{thm:completeness_onlk} For all \( \alpha \in \onlk \), if \( \models \alpha \) then \( \axiomsk \vdash \alpha \). 

\end{theorem}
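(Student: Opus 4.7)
The plan is to prove this by induction on $t$, following the template of Theorem \ref{thm:completeness_onlmin} essentially verbatim, with the new axiom $\afive^{t+1}$ and the generalized supporting lemmas (the non-basic independence lemma just before the theorem, and Lemma \ref{lem:phi_or_psi_is_valid}) playing the roles that $\kffn$-consistency and Lemma \ref{lem:every_kffn_consistent_is_sat} played in the basic case. The base case $t=1$ coincides with Theorem \ref{thm:completeness_onlmin}. For the inductive step, I would assume completeness of $\axiomsk$ for $\onlk$, take a $\axiomskplus$-consistent $\xi \in \onlkplus$, reduce it to normal form, and pick a consistent disjunct of the form $\sigma \land \aknow\bel \land \neg\aknow\nbelone \land \ldots \land \naknow\most \land \neg\naknow\psi_{a{n_1}} \land \ldots$, where now each $\varphi_{im}$ and $\psi_{in}$ lies in $\onlk$ rather than being basic.

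Next I would choose a propositional world $\wstar$ satisfying $\sigma$, apply the non-basic independence lemma to obtain a $\gamma$ of the form $(\bknow\aknow)^{k+1}p$ independent (\wrt~$\axiomsk$) of all the relevant $a$-objective formulas, and build $S_a$ as the collection of all maximally $\axiomsk$-consistent sets of $a$-objective formulas in $\onlk$ of the appropriate bounded $b$-depth that contain $\bel \land (\neg\most \lor (\most \land \gamma))$. Where Theorem \ref{thm:completeness_onlmin} appeals to Lemma \ref{lem:every_kffn_consistent_is_sat} to produce a $k$-structure satisfying each $S' \in S_a$, I would instead invoke the induction hypothesis, applied to finite conjunctions drawn from $S'$, to obtain a witnessing structure $\lan w, e_b^{k-1} \ran$. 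Collecting these structures yields Alice's $k'$-structure $\eakdash$ for $k' = 2k+1$; Bob's $j'$-structure $\ebjdash$ is built symmetrically, giving a candidate $(k',j')$-model $\lan \eakdash, \ebjdash, \wstar \ran$.

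Verification that every $a$-subjective conjunct of the chosen disjunct holds at this model then mirrors Theorem \ref{thm:completeness_onlmin}: $\aknow\bel$ is immediate; $\neg\aknow\nbelone$ follows because consistency of $\aknow\bel \land \neg\aknow\nbelone$ forces $\bel \land \neg\nbelone$ to be $\axiomsk$-consistent, putting $\neg\nbelone$ into some $S' \in S_a$; $\naknow\most$ holds after a case split on external $k'$-structures, the degenerate case $\neg\bel \land \neg\most$ being blocked by Lemma \ref{lem:phi_or_psi_is_valid} applied to the consistency of $\aknow\bel \land \naknow\most$; and $\neg\naknow\psi_{a{n_1}}$ is witnessed by a model of a consistent refinement of the form $\most \land \neg\psi_{a{n_1}} \land (\neg\bel \lor (\bel \land \neg\gamma))$, which by independence of $\gamma$ cannot lie in $\eakdash$.

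The main obstacle I anticipate is the lifting from the induction hypothesis, which supplies satisfiability one formula at a time for $\onlk$, to the existence of a single $k$-structure satisfying an entire maximally $\axiomsk$-consistent set of bounded-depth $i$-objective formulas in $\onlk$. In the basic case this lifting came for free from the $\kffn$ canonical model via Lemma \ref{lem:every_kffn_consistent_is_sat} and the correspondence-model construction; here I expect to need either a compactness-style argument restricted to the bounded-depth fragment of $\onlk$, or an explicit canonical-style construction of $(k,j)$-models indexed by maximally $\axiomsk$-consistent sets. Once this set-level satisfiability is secured, the rest of the argument is a faithful replay of Theorem \ref{thm:completeness_onlmin}, with $\afive^{t+1}$ in place of $\afive$ and the non-basic generalizations of the supporting lemmas doing all the heavy lifting.
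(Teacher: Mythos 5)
Your proposal follows essentially the same route as the paper: induction on \( t \), reduction to normal form, the non-basic independence lemma and Lemma~\ref{lem:phi_or_psi_is_valid} in place of their basic counterparts, and the induction hypothesis in place of Lemma~\ref{lem:every_kffn_consistent_is_sat} to obtain a witnessing structure for each maximally \( \axiomsk \)-consistent set in \( S_a \). The set-level satisfiability step you flag as the main obstacle is exactly the point the paper handles only by a direct appeal to the induction hypothesis (asserting each \( S' \in S_a \) is ``by induction hypothesis satisfiable in some model''), so your concern is well placed but does not constitute a divergence from the paper's argument.
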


%{\kmin}{{k-1}}
%{\onlkmin}{\onl^{k-1}}
%{\axiomskmin}{\axioms^{k-1}}

\begin{proof} 
	Proof by induction on \( t \). It is sufficient to show that if a formula \( \beta \in \onlkplus \) is \( \axiomskplus \)-consistent then it is satisfiable \emph{wrt.}~some model. We already have the proof for \( \onl^1 \) (see Theorem \ref{thm:completeness_onlmin}). Let us assume the proof holds for all formulas \( \alpha \in \onlk \). Particularly, this means that any formula that is \( \axiomsk \)-consistent is satisfiable \emph{wrt.}~some \( (k',j') \)-model. Let \( \alpha \in \onlkplus \) (say of maximal \( a,b \)-depth of \( k+1,j+1 \)), and suppose that \( \alpha \) is consistent \emph{wrt.}~\( \axiomskplus \). It is sufficient to show that \( \alpha \) is satisfiable. Wlog, we take it in the normal form: \\[1ex]
		  \no \( \bigvee(\sigma \land \aknow \bel \land \neg \aknow \nbelone  \ldots \land \neg \aknow \varphi_{a{m_1}}  \land \bknow 	\varphi_{b0} \ldots \land \neg \bknow \varphi_{b{m_2}}  \land 
	 \naknow \most \ldots \land \neg \naknow \psi_{a{n_1}}   \land \nbknow \psi_{b0} \ldots \land \neg \nbknow \psi_{bn_2} ). \) \\[1ex]
\no Note that, by definition, it must be that all of \( \varphi_{im}, \psi_{in} \)~are at most in \( \onlk \) (i.e.~they may also be in \( \onl^{t-1}, \ldots
\)), and \( i \)-objective. We proceed as we did for Theorem \ref{thm:completeness_onlmin} but without restricting to basic formulas. Let \( A
\) be all \( \axiomsk \)-consistent formulas of the form \( \bel \land
\most \land \neg \nbel \) or \( \bel \land \most \land \neg \nmost \) (they are of maximal \( b \)-depth \( k \)). Let \(
\gamma \) be independent of all formulas in \( A \). Let \( S_a \) be the set of all (\(
\axiomsk \)-) maximally consistent sets of formulas, constructed from formulas of maximal \( b \)-depth \( k \), and containing \( \bel \land (\neg
\most \lor (\most \land \gamma)) \), and hence by induction hypothesis they are satisfiable in some model. Note that all formulas in \( S_a \) are
in \( \onlk \). The \( b \)-depth is maximally \( 2k+2 \). Letting \( k''=2k+2 \),
we have that for all \( S' \in S_a \), there is a \( \lan w, \ebkdashdash \ran
\) such that \( \{\}, \ebkdashdash, w \models S' \). Let \( k'=k''+1 \).
Letting \( {\eakdash} \) be all such \( k' \)-structures \( \lan w, \ebkdashdash \ran \) for each \( S' \in S_a \) makes \(
\aObj({\eakdash}) = S_a \) (in contrast, for Thereom \ref{thm:completeness_onlmin} we dealt with \( \canaObj \)). We claim that this \( k' \)-structure for
Alice, a \( j' \)-structure for Bob constructed similarly, and a world where
\( \sigma \) holds (there is such a world since \( \sigma \) is propositional
and consistent) is a model where \( \alpha \) is satisfied. The proof proceeds
as in Theorem \ref{thm:completeness_onlmin}. We show the case of \(
\neg \aknow \nbel \).

Since \( \aknow \bel \land \neg \aknow \nbel \) is consistent \emph{wrt.}~\( \axiomskplus
\), it must be that \( \bel \land \neg \nbel \) is consistent \emph{wrt.}~\(
\axiomskplus \). Further, since \( \bel, \nbel \in \onlk \), they must consistent be \emph{wrt.}~\( \axiomsk \) (for if not, they cannot by definition be consistent \emph{wrt.}~\( \axiomskplus \)). This means that either \( \bel \land \neg \nbel \land \most \)
or \( \bel \land \neg \nbel \land \neg \most \) is consistent. If the former
is, then so is \( \bel \land \neg \nbel \land \most \land \gamma \). Since \(
S_a \) consist of all \( \axiomsk \)-consistent formulas containing \(
\bel \land (\neg \most \land (\most \land \gamma)) \), there is clearly a \(
S' \in S_a \) such that \( \neg \nbel \in S' \). Consequently, it can not be
that \( {\eakdash}, \{\}, w' \models \aknow \nbel \). Thus, \( \eakdash, \{\},
w' \models \neg \aknow \nbel \).
% 
% Arguments as previously shown, show that all
% \( a \)-subjective formulas are satisfiable \emph{wrt.}~\( \eakdash \). Repeating
% steps for \( {\ebjdash} \) completes the proof. 
\qed \end{proof}

% To show the soundness for \( \axioms^{t+1} \), we use induction. The proof is easily shown for \( \mathbf{A1}-\mathbf{A4}, \afive^1 \). Then, assuming soundness holds for \( \afive^t \) and that \( \neg\alpha \) is consistent with \( \axioms^t \), it is also satisfiable allowing us to obtain the soundness on \( \axioms^{t+1} \). Induction is again used on a completeness result. We build on the completeness result for \( \onlmin \), and assume that it holds for \( t \). Then we consider a \( \axioms^{t+1} \)-consistent formula in the normal form, repeat analogous steps on a logical formula involving \( \varphi_{lm} \) and \( \psi_{lm} \) to show that the consistent formula is satisfiable. 

% \begin{theorem}\label{cor:sound_complete} For all \( \alpha \in \onlk \), \( \axiomsk \vdash \alpha \) iff \( \models \alpha \). 
% 
% \end{theorem}
%                

\no Thus, we have a sound and complete axiomatization for the propositional
fragment of \( \onl \). In comparison to Lakemeyer (\citeyear{Lakemeyer1993}), the
axiomatization goes beyond a language that restricts the nesting of \( \nknow
\). In contrast to Halpern and Lakemeyer (\citeyear{1029713}), the axiomatization does not
necessitate the use of semantic notions in the proof theory. A third
axiomatization by \cite{DBLP:conf/aiml/Waaler04,DBLP:conf/tark/WaalerS05}
proposes an interesting alternative to deal with the circularity in a generalized \(
\mathbf{A5} \). The idea is to first define consistency by formulating a
fragment of the axiom system in the sequent calculus. Quite analogous to having
\( t \)-axioms, they allow us to apply \( \afive \) on \( i \)-objective
formulas of a lower depth, thus avoiding circularity without the need to
appeal to satisfiability as in \cite{1029713}.
Waaler and Solhaug~(\citeyear{DBLP:conf/tark/WaalerS05}) also define a
semantics for multi-agent only-knowing which does not appeal to
canonical models. Instead, they define a class of Kripke structures which need
to satisfy certain constraints. Unfortunately, these constraints are quite
involved and, as the authors admit, the nature of these models ``is complex
and hard to penetrate.'' 

 % generalizes \( \mathbf{A5} \) to allow all \emph{consistent} \( i \)-objective formulas. This consistency is established syntactically by formulating the axiom system in the sequent calculus. The semantical account is first established for all axioms excluding \( \afive \) with \emph{extended} Kripke structures (not canonical). This account to extended for \( \afive \) by forcing certain conditions on the extended structures. Unfortunately, these  features make defining a model for the logic non-trivial.    

To get a feel of the axiomatization, let us consider a well studied example from \cite{1029713} to see where we differ. Suppose Alice assumes the following default: unless I know that Bob knows my secret then he does not know it. If the default is all that she knows, then she \emph{nonmonotonically} comes to believe that Bob does not know her secret. Let \( 
\gamma \) be a proposition that denotes Alice's secret, and we want to show that \( \vdash \aoknow(\delta) \supset \aknow \neg \bknow \gamma \), where \( \delta = \neg \aknow \bknow \gamma \supset \neg \bknow \gamma \). We write (Def.)~to mean \( \aoknow \alpha \equiv \aknow \alpha   \land \naknow \neg \alpha \), and we freely reason with propositional logic (PL) or \( \kffn \).
 \begin{enumerate}%[topsep=2pt, partopsep=2pt, itemsep=0pt]
	\item \( \aoknow(\delta) \supset \aknow \neg \aknow \bknow \gamma \supset \aknow \neg \bknow \gamma  \) \hfill Def.,PL,\( \mathbf{A2}_n \) 

	\item \( \aoknow(\delta)\supset \naknow\neg \aknow \bknow \gamma \land \naknow\bknow \gamma \) \hfill Def.,PL,\( \kffn \) 

	\item \( \naknow \bknow \gamma \supset \neg \aknow \bknow \gamma \) \hfill \( \afive^1 \) 

	\item  \( \neg \aknow \bknow \gamma \supset \aknow \neg \aknow \bknow \gamma \) \hfill \( \mathbf{A4}_n \)    

	\item \( \aoknow(\delta) \supset \aknow \neg \aknow \bknow \gamma \) \hfill 2,3,4,PL        

	\item \( \aoknow(\delta) \supset \aknow \neg \bknow \gamma \) \hfill 1,5,PL
\end{enumerate}

\no We use \( \afive^1 \), and it is applicable because \( \neg \bknow \gamma \) is \( a \)-objective and \( \kffn \)-consistent. Now, suppose Alice is cautious. She changes her default to assume that if she does not believe Bob to only-know some set of facts \( \theta \in \Phi \), then \( \theta \) is not all that he knows. We would like to show
 \[ \vdash \aoknow(\neg \aknow \boknow \theta \supset \neg \boknow \theta) \supset \aknow \neg \boknow \theta \]         %\\[1ex]
Of course, this default is different from \( \delta \) in containing \( \boknow \theta \) rather than \( \bknow \gamma \). The proof is identical, except that we use \( \afive^2 \), since \( \neg \boknow \theta \in \onl^1 \) is \( a \)-objective and \( \axioms^1 \)-consistent. The latter proof requires reasoning with the \emph{satisfiability} modal operator in Halpern and Lakemeyer (\citeyear{1029713}), and is not provable with the axioms of Lakemeyer (\citeyear{Lakemeyer1993}).

\section{Autoepistemic Logic} % (fold)
\label{sec:autoepistemic_logic}

% section autoepistemic_logic (end)

% \subsection{\qquad \mbox{}\qquad \mbox{}\qquad \mbox{}Autoepistemic Logic} % (fold)
% \label{sub:ael}   

Having examined the properties of multi-agent only-knowing, in terms of a semantics for both the first-order and propositional case, and an axiomatization for the propositional case, in the current section we discuss how the semantics also captures autoepistemic logic (AEL). AEL, as originally developed by Moore (\citeyear{2781}), intends to allow agents to draw conclusions, by making observations of their own epistemic states. For instance, Alice concludes that she has no brother because if she did have one then she would have known about it, and she does not know about it \cite{2781}. The characterization of such beliefs are defined using fixpoints called \emph{stable expansions}. In the single agent case, Levesque (\citeyear{77758}) showed that the beliefs of an agent who only-knows \( \alpha \) is \emph{precisely} the stable expansion of \( \alpha \). Of course, the leverage with the former is that it is specified using regular entailments. In Lakemeyer (\citeyear{Lakemeyer1993}), and Halpern and Lakemeyer (\citeyear{1029713}), a many agent generalization of AEL is considered in the sense of a stable expansion for every agent, and relating this to what the agent only-knows. But their generalizations are only for the propositional fragment, while Levesque's definitions involved first-order entailments. In contrast, we obtain the corresponding quantificational multi-agent generalization of AEL. We state the main theorems below. The proofs are omitted since they follow very closely from the ideas for the single agent case~\cite{levesque2001logic}. 

% We obtain an analogous result in the many agent case. The proofs follow very closely from the ideas for the single-agent case from \cite{levesque2001logic}. We only state the main theorems.  

\begin{definition} Let \( A \) be a set of formulas, and \( \Gamma \) is the \( i \)-stable expansion of \( A \) iff it the set of first-order implications of \( A \cup \{\know \beta \mid \beta \in \Gamma \} \cup \{\neg\know\beta \mid \beta \not\in\Gamma \} \). 

\end{definition}         

\newcommand{\eplusak}{\boldsymbol e_a^+}

\begin{definition}[Maximal structure] If \( \eak \) is a \( k \)-structure, let \( \eplusak \) be a \( k \)-structure with the addition of all \( \lan w', \ebkmin \ran \not \in \eak \) such that for every \( \alpha  \in \onlmin	\) of maximal \( a,b \)-depth \( k,k-1 \), if \( \eak, \{\}, w \models \aknow \alpha \) for any world \( w \) then \( \eak, \ebkmin, w' \models \alpha \). Define \( \Gamma = \{ \beta \mid \beta ~\textrm{is basic and}~ \eplusak, \{\}, w \models \aknow \beta \}  \) as the belief set of \( \eplusak \).

\end{definition}

\begin{theorem}\label{thm:onlyknowing_is_stable} Let \( M = \lan \eplusak, \ebj, w \ran \) be a model, where \( \eplusak \) is a maximal structure for \( a \). Let \( \Gamma \) be the belief set of \( \eplusak \), and suppose \( \alpha \in \onlmin \) is of maximal \( a,b \)-depth \( k,k-1 \). Then, 
	 \( M \models \aoknow\alpha \) iff \( \Gamma \) is the \( a \)-stable expansion of \( \alpha \).

\end{theorem}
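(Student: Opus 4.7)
The plan is to reduce the statement to the single-agent first-order result of Levesque by exploiting that $\aoknow \alpha$ is $a$-subjective, so the Bob-structure $\ebj$ and the world $w$ drop out. Unpacking definitions, $M \models \aoknow \alpha$ means that for every world $w'$ and every $(k{-}1)$-structure $\ebkmin$ for Bob, $\langle w', \ebkmin \rangle \in \eplusak$ iff $\eplusak, \ebkmin, w' \models \alpha$. As a preliminary, I would verify that for any maximal $\eplusak$, the belief set $\Gamma$ is always $a$-stable: closure under first-order consequence, together with both positive and negative introspection, follow from the weak S5 properties established earlier, because $\eplusak$ has by construction been closed under all $\aknow$-consequences.

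For the forward direction, suppose $M \models \aoknow \alpha$. Then $\eplusak, \{\}, w \models \aknow \alpha$, so $\alpha \in \Gamma$. To identify $\Gamma$ with the $a$-stable expansion of $\alpha$, it suffices to show $\Gamma$ equals the set of first-order consequences of $\{\alpha\} \cup \{\aknow \beta \mid \beta \in \Gamma\} \cup \{\neg \aknow \beta \mid \beta \notin \Gamma\}$. The containment $\supseteq$ is immediate from the semantic validity of the introspection and distribution axioms. For $\subseteq$, the crucial use of only-knowing is that any basic $\beta \notin \Gamma$ must fail at some $\langle w', \ebkmin \rangle \in \eplusak$, and the only-knowing condition makes $\alpha$ hold at precisely those pairs, so no first-order derivation from the right-hand side can force $\beta$ in.

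For the reverse direction, suppose $\Gamma$ is the $a$-stable expansion of $\alpha$. Since $\alpha$ is among the generating premises, $\alpha \in \Gamma$, and hence $\eplusak, \{\}, w \models \aknow \alpha$. To establish $\naknow \neg \alpha$, take any $\langle w', \ebkmin \rangle \notin \eplusak$; by maximality of $\eplusak$ there must be some basic $a$-objective $\beta$ of appropriate depth with $\eplusak, \{\}, w \models \aknow \beta$ yet $\eplusak, \ebkmin, w' \not\models \beta$. Stability of $\Gamma$, together with the fact that $\alpha$ generates it, then lets one argue that $\neg \alpha$ must hold at this pair; hence every pair outside $\eplusak$ satisfies $\neg \alpha$, giving $\naknow \neg \alpha$ and therefore $\aoknow \alpha$.

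The main obstacle is keeping the depth and quantification bookkeeping consistent: the belief set $\Gamma$ is restricted to basic formulas of bounded $a,b$-depth $k,k-1$, while the stable expansion is closed under arbitrary first-order consequence, and one must invoke Lemma~\ref{lem:satisfiability_higher_structures} to justify that truncating or lifting $b$-structures preserves the relevant equivalences. The first-order ingredient, namely that standard names suffice to witness all relevant objective possibilities, is inherited from Levesque's single-agent argument and propagates through the $k$-structure framework without essential change, which is precisely why the proofs follow closely from the single-agent case.
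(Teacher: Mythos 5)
The paper does not actually give a proof of Theorem~\ref{thm:onlyknowing_is_stable}; it explicitly omits it, saying only that the argument ``follows very closely from the ideas for the single agent case.'' Your overall plan --- exploit that \( \aoknow\alpha \) is \( a \)-subjective so that \( \ebj \) and \( w \) drop out, and then lift Levesque's single-agent stable-expansion argument through the \( k \)-structure machinery --- is exactly the intended route, and your reverse direction (a pair outside \( \eplusak \) violates some believed \( \beta \); since \( \beta \) is a first-order consequence of \( \alpha \) together with the \( a \)-subjective introspective facts, which hold at that pair regardless, \( \alpha \) must fail there) is the standard argument, correctly compressed.

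There is, however, a concrete gap in your forward direction. You must show \( \Gamma = \mathrm{Cn}\bigl(\{\alpha\} \cup \{\aknow\beta \mid \beta \in \Gamma\} \cup \{\neg\aknow\beta \mid \beta \notin \Gamma\}\bigr) \), i.e.\ two inclusions. Your ``\( \supseteq \)'' argument (introspection plus distribution plus closure of belief under first-order consequence) establishes \( \mathrm{Cn}(\cdot) \subseteq \Gamma \). But your ``\( \subseteq \)'' argument --- any basic \( \beta \notin \Gamma \) fails at some pair of \( \eplusak \), that pair satisfies \( \alpha \) and the introspective facts, hence no first-order derivation forces \( \beta \) --- is the contrapositive of the \emph{same} inclusion \( \mathrm{Cn}(\cdot) \subseteq \Gamma \). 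The inclusion \( \Gamma \subseteq \mathrm{Cn}(\cdot) \), namely that every basic formula believed in \( \eplusak \) really is a first-order consequence of \( \alpha \) and the introspective facts, is never argued. That is the direction where \( \naknow\neg\alpha \) and maximality do real work: if \( \beta \) were not such a consequence, one must take a first-order countermodel over standard names (treating the \( \aknow \)- and \( \bknow \)-subformulas as atoms, with the \( \aknow \)-atoms valued according to \( \Gamma \)) and \emph{realize} it as an actual pair \( \lan w', \ebkmin \ran \) --- e.g.\ via the maximally satisfiable \( a \)-objective sets of Theorem~\ref{thm:iset-theorem} --- which the only-knowing condition then forces into \( \eplusak \), contradicting \( \beta \in \Gamma \). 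This countermodel-realization step is the genuinely first-order, multi-agent content of the theorem and is missing from your sketch; without it you have only shown that \( \Gamma \) \emph{contains} a stable expansion-like closure, not that it coincides with one.
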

                                               
\no Theorem \ref{thm:onlyknowing_is_stable} essentially says that the complete set of basic beliefs at a \emph{maximal} epistemic state where \( \alpha \) is all that \( i \) knows, precisely coincides with the \( i \)-stable expansion of \( \alpha \). 

% section appendix (end)

\section{Axiomatizing Validity} % (fold)
\label{sec:axiomatizing_validity}

Extending the work in \cite{Lakemeyer1993} and \cite{DBLP:conf/aaai/Halpern93}, which was only restricted to formulas in \( \onlmin \), Halpern and Lakemeyer~(\citeyear{1029713}) proposed a multi-agent only-knowing logic that handles the nesting of \( \nknow \) operators. But as discussed, there are two undesirable features. The first is a semantics based on canonical models, and the second is a proof theory that axiomatizes validity. Although such a construction is far from natural,  we show in this section that they do indeed capture the desired properties of only-knowing. This also instructs us that our axiomatization avoids such problems in a reasonable manner.   
%
% So far, we showed that \( k \)-structures seems to have the right properties. As far as the propositional fragment is concerned, we also discussed other accounts, and concluded that both \cite{Lakemeyer1993} and
% \cite{DBLP:conf/aaai/Halpern93} have problems with the nesting of \( \nknow \)
% operators. They are, in other words, only applicable for formulas in \(
% \onlmin \). In contrast, the proposal by Halpern and Lakemeyer~(\citeyear{1029713}) does not have this limitation. But as
% argued, besides being based on canonical models, they need to axiomatize
% validity. Although such a construction is not natural, we show in this section that they do indeed capture the desired properties of only-knowing. 
%
% , we revisit the model proposed by Halpern and Lakemeyer~\cite{1029713}. Earlier, we discussed some other accounts of
% only-knowing, and concluded that both \cite{Lakemeyer1993} and
% \cite{DBLP:conf/aaai/Halpern93} have problems with the nesting of \( \nknow \)
% operators. They are, in other words, only applicable for formulas in \(
% \onlmin \). In contrast, the proposal by Halpern and Lakemeyer~(\citeyear{1029713}) does not have this limitation. But as
% argued, besides being based on canonical models, they need to axiomatize
% validity. Although such a construction is not natural, we show in this section that they do indeed capture the desired properties of only-knowing. 
%

Recall that the language of \cite{1029713} is \( \onlp \), which is \( \onl \) and a modal operator for validity, \( \val \). A modal operator \( \sat \), for satisfiability, is used freely such that \( \val(\alpha) \) is syntactically equivalent to \( \neg \sat(\neg \alpha) \). To enable comparisons, we present a variant of our logic, that has all its main features, but has additional notions to handle the extended language. We then show that this logic and \cite{1029713} agree on the set of valid sentences from \( \onlp \) (and also \( \onl \)).

The main feature of \cite{1029713} is the proof theory \( \axioms' \), and a semantics that is sound and complete for \( \axioms' \) via the extended canonical model. \( \axioms' \) consists of \( \mathbf{A1}_n-\mathbf{A4}_n \), \( \mathbf{MP} \), \( \mathbf{NEC} \) and the following: 

\begin{itemize} 

\item[] \( \afive'. \) \( \sat(\neg \alpha) \supset (\nknow \alpha \supset \neg \aknow \alpha) \), if \( \alpha \) is \( i \)-objective. 

\item[]  \( \mathbf{V1}. \) \( \val (\alpha) \land \val(\alpha \supset \beta) \supset \val(\beta) \).  

\item[]  \( \mathbf{V2}. \) \( \sat(p_1 \land \ldots p_n) \), if \( p_i \)'s are literals and \( p_1 \land \ldots p_n \) is \\\qquad \mbox{}\qquad \mbox{}  propositionally consistent. 

\item[] \( \mathbf{V3}. \) \( \sat(\alpha \land \beta_1) \land \ldots \sat(\alpha \land \beta_k) \land \sat(\gamma \land \delta_1) \ldots \land \\\qquad \mbox{}\qquad \mbox{} \sat(\gamma \land \delta_m) \land \val(\alpha \lor \gamma) \supset \sat(\know \alpha \land \neg \know \neg \beta_1 \ldots \land \\\qquad \mbox{}\qquad \mbox{}\nknow \gamma \land  \neg \nknow \neg \delta_1 \ldots) \), if \( \alpha, \beta_i, \gamma, \delta_i \) are \( i \)-objective. 

\item[]  \( \mathbf{V4}. \) \( \sat(\alpha) \land \sat(\beta) \supset \sat(\alpha \land \beta) \), if \( \alpha \) is \( i \)-objective \\\qquad \mbox{}\qquad \mbox{}and \( \beta \) is \( i \)-subjective.

\item[]       \( \mathbf{NEC}_\val. \) From \( \alpha \) infer \( \val(\alpha) \).

\end{itemize}

% 
% \no \( \afive'. \) \( \sat(\neg \alpha) \supset (\nknow \alpha \supset \neg \aknow \alpha) \), if \( \alpha \) is \( i \)-objective. \\
% \( \mathbf{V1}. \) \( \val (\alpha) \land \val(\alpha \supset \beta) \supset \val(\beta) \) \\ 
% \( \mathbf{V2}. \) \( \sat(p_1 \land \ldots p_n) \), if \( p_i \)'s are literals and \( p_1 \land \ldots p_n \) is \\\qquad \mbox{}\qquad \mbox{}  propositionally consistent. \\
% 
% \no \( \mathbf{V3}. \) \( \sat(\alpha \land \beta_1) \land \ldots \sat(\alpha \land \beta_k) \land \sat(\gamma \land \delta_1) \ldots \land \\\qquad \mbox{}\qquad \mbox{} \sat(\gamma \land \delta_m) \land \val(\alpha \lor \gamma) \supset \sat(\know \alpha \land \neg \know \neg \beta_1 \ldots \land \\\qquad \mbox{}\qquad \mbox{}\nknow \gamma \land  \neg \nknow \neg \delta_1 \ldots) \), if \( \alpha, \beta_i, \gamma, \delta_i \) are \( i \)-objective. \\
% 
% \no \( \mathbf{V4}. \) \( \sat(\alpha) \land \sat(\beta) \supset \sat(\alpha \land \beta) \), if \( \alpha \) is \( i \)-objective \\\qquad \mbox{}\qquad \mbox{}and \( \beta \) is \( i \)-subjective. \\[1ex] 
% \( \mathbf{NEC}_\val. \) From \( \alpha \) infer \( \val(\alpha) \). \\   

% \no Let \( \onlp ^1 \), \( \onlp ^2 \), \( \ldots \) \( \onlp^t \) be defined analogously as before from \( \onlp \). Note that \( \val \) is not a belief operator, and \( \val(\alpha) \) is both \( a \)- and \( b \)-objective, just as any propositional formula. 

\no The essence of our new logic, in terms of a notion of depth (with \( |\val(\alpha)|_i = |\alpha|_i \)) and a semantical account over possible worlds, is as before. The complete semantic definition for formulas in \( \onlp \) of maximal \( a,b \)-depth of \( k,j \) is: \begin{itemize}
	\item[1.] -8.~as before, 

	\item[9.] \( \eak, \ebj, w \models \val(\alpha) \) if \( \eak, \ebj, w\models \alpha \) for all \( \eak, \ebj,w \). 
\end{itemize}

\no {Satisfiability} and {validity} (\( \models \)) are understood analogously.\footnote{Note that \( \val \) corresponds precisely to how validity is defined.} Let \( \onlp ^1 \),  \( \ldots \) \( \onlp^t \) be also defined analogously. Further, let axioms \( \mathbf{A1}_n-\afive^{t} \) be defined for \( \onlp^{t} \). For instance, \( \afive^{t} \) is defined for any \( i \)-objective \( \neg \alpha \in \onlp^{t-1} \) that is consistent with \( \mathbf{A1}_n-\afive^{t-1} \). Then, the semantics above is characterized by the proof theory \( {\axioms^+}^t \) defined (inductively) for \( \onlp^t \), 
consisting of \( \axioms^t \) (\( \mathbf{A1}_n-\afive^t \), \( \mathbf{MP} \), \( \mathbf{NEC} \)) with \( \mathbf{NEC}_\val \) as an additional inference rule.

\begin{lemma}\label{lem:completeness_onlpk} For all \( \alpha \in \onlpk \), \( {\axioms^+}^t \vdash \alpha \) iff \( \models \alpha \). 

\end{lemma}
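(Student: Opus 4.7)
The plan is an induction on $t$, extending the soundness/completeness of $\axioms^t$ over $\onlk$ (Theorems \ref{thm:soundness_full_lang} and \ref{thm:completeness_onlk}) to accommodate the new validity modality $\val$. The base case follows the $\axioms^1$ argument almost verbatim once the $\mathbf{NEC}_\val$ rule is handled, so my focus is on the induction step.

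For soundness, I would proceed by induction on the length of an ${\axioms^+}^t$-derivation of $\alpha$. All axioms and rules of $\axioms^t$ remain sound by the straightforward adaptation of Theorem \ref{thm:soundness_full_lang} to $\onlpk$, since clauses 1--8 of the semantics are unchanged. The new rule $\mathbf{NEC}_\val$ is sound: if $\vdash\alpha$, then by IH $\models\alpha$, so $\alpha$ holds at every $(k',j')$-model; hence by clause 9, $\val(\alpha)$ holds at any model, i.e. $\models\val(\alpha)$.

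For completeness, the central idea is to eliminate $\val$ by showing that every $\alpha\in\onlpk$ is provably equivalent in ${\axioms^+}^t$ to some $\alpha^\dagger\in\onlk$; once this is achieved, if $\models\alpha$ then $\models\alpha^\dagger$, whence Theorem \ref{thm:completeness_onlk} yields $\axioms^t\vdash\alpha^\dagger$ and thus ${\axioms^+}^t\vdash\alpha$. I would establish the reduction by a secondary induction on the number of $\val$ occurrences in $\alpha$. Pick an innermost subformula $\val(\beta)$, so that $\beta\in\onlk$. Because clause 9 forces $\val(\beta)$ to be a \emph{global constant} (true at every model, or false at every model), Theorems \ref{thm:soundness_full_lang} and \ref{thm:completeness_onlk} together yield a dichotomy: either $\axioms^t\vdash\beta$, in which case $\mathbf{NEC}_\val$ gives $\vdash\val(\beta)$ and we may replace $\val(\beta)$ by $\top$; or $\axioms^t\not\vdash\beta$, in which case $\val(\beta)$ is refuted at every model.

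The principal obstacle is the second horn: converting a semantic refutation of $\val(\beta)$ into an actual ${\axioms^+}^t$-proof of $\neg\val(\beta)$, since the system offers no direct axiom ruling out $\val$. My plan is to circumvent this by threading the reduction through the Lindenbaum construction used in Theorem \ref{thm:completeness_onlk}: rather than reducing in isolation, I would treat each $\val(\beta)$ occurring in the normal form of $\alpha$ as a fresh Boolean parameter, fix its value by whether $\axioms^t\vdash\beta$, and then run the Alice/Bob $k'$-structure construction from Theorem \ref{thm:completeness_onlk} in the parameterised language. This sidesteps circularity because the truth value of each $\val(\beta)$ is settled entirely within $\axioms^t$ over $\onlk$, for which completeness is available by the outer induction hypothesis on $t$. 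The resulting $(k',j')$-model satisfies the original $\alpha$, closing the contrapositive completeness argument.
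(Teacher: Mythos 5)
Your soundness half coincides with the paper's: it likewise takes Theorem~\ref{thm:soundness_full_lang} as the base case and disposes of \( \mathbf{NEC}_\val \) by noting that \( \axioms^t \vdash \alpha \) gives \( \models \alpha \) by the inductive hypothesis, hence \( \models \val(\alpha) \) by clause~9. No issue there.

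On completeness the two arguments diverge, and you have in fact probed deeper than the paper does: its entire completeness step is ``if \( \models \val(\alpha) \) then \( \alpha \) holds at all models, so \( \axioms^t \vdash \alpha \) by Theorem~\ref{thm:completeness_onlk}, so \( {\axioms^+}^t \vdash \val(\alpha) \) by \( \mathbf{NEC}_\val \)''; it is silent on negative or Boolean-embedded occurrences of \( \val \), which is exactly the case you isolate as the principal obstacle. Unfortunately your circumvention does not remove that obstacle, it relocates it. To replace an innermost \( \val(\beta) \) by a truth value and still transfer \( {\axioms^+}^t \vdash \alpha^\dagger \) back to \( {\axioms^+}^t \vdash \alpha \), you need the \emph{provable} equivalence \( \vdash \val(\beta) \equiv \top \) or \( \vdash \val(\beta) \equiv \bot \); in the second horn that is precisely \( \vdash \neg\val(\beta) \), and \( {\axioms^+}^t \) as defined (\( \axioms^t \) plus the rule \( \mathbf{NEC}_\val \), with no axiom in which \( \val \) occurs negatively) provides no way to derive any formula of the form \( \neg\val(\beta) \). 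Fixing the value of each \( \val(\beta) \) ``by whether \( \axioms^t \vdash \beta \)'' and running the Lindenbaum/\( k' \)-structure construction in the parameterised language amounts to adjoining \( \neg\val(\beta) \) as an extra hypothesis: the maximally consistent sets you build are consistent in a strengthened system, not in \( {\axioms^+}^t \), so you have not shown that every \( {\axioms^+}^t \)-consistent formula of \( \onlpk \) is satisfiable. Concretely, \( \val(p) \) for an atom \( p \) is unsatisfiable under clause~9 yet appears to be \( {\axioms^+}^t \)-consistent, and neither your reduction nor the paper's argument produces a derivation of \( \neg\val(p) \). Closing the gap would require either an additional axiom in the spirit of \( \afive^{t+1} \) (e.g.\ \( \neg\val(\alpha) \) whenever \( \neg\alpha \) is \( \axioms^t \)-consistent) or a restriction to positive occurrences of \( \val \). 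So: your diagnosis of the difficulty is correct and sharper than the paper's own treatment, but the patch you propose does not repair it.
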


\no The proof of this lemma, and those of the following theorems are given in the appendix. 
We proceed to show that \( \sat(\alpha) \) is provable from \( \axioms' \) iff \( \alpha \) is \( {\axioms^+}^t \)-consistent. 
\begin{theorem}\label{thm:sat_means_consistent} For all \( \alpha \in \onlpk \), \( \axioms '\vdash \sat(\alpha) \) iff \( \alpha \) is \( {\axioms^+}^t \)-consistent. 

\end{theorem}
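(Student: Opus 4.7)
The plan is to induct on $t$ with a strengthened inductive hypothesis at every level $s$: (i) the theorem's biconditional holds for $\onlp^s$, and (ii) every ${\axioms^+}^s$-theorem is an $\axioms'$-theorem. The two claims interlock, but the induction is well-founded because (i)--(ii) at level $t$ only invoke (i)--(ii) at level $t{-}1$.

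The simulation (ii) at level $t$ reduces to showing each $\afive^s$-instance is $\axioms'$-derivable for $s\le t$. Its side-condition supplies a $\neg\beta\in\onlp^{s-1}$ that is ${\axioms^+}^{s-1}$-consistent, and (i) at $s{-}1$ then yields $\axioms'\vdash\sat(\neg\beta)$; $\afive'$ and $\mathbf{MP}$ deliver $\axioms'\vdash\nknow\beta\supset\neg\know\beta$. The $(\Rightarrow)$-half of (i) at $t$ is then immediate: if $\alpha$ were ${\axioms^+}^t$-inconsistent, then by (ii) $\axioms'\vdash\neg\alpha$, and $\mathbf{NEC}_\val$ gives $\axioms'\vdash\neg\sat(\alpha)$, contradicting $\axioms'\vdash\sat(\alpha)$ by the consistency of $\axioms'$.

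The $(\Leftarrow)$-half at level $t$ uses the Halpern--Lakemeyer normal form. Pick a ${\axioms^+}^t$-consistent disjunct
\[
D \;=\; \sigma \wedge \aknow\varphi_0 \wedge \bigwedge_j\neg\aknow\varphi_j \wedge \naknow\psi_0 \wedge \bigwedge_j\neg\naknow\psi_j \wedge \cdots
\]
(omitting the symmetric $b$-conjuncts), with $\sigma$ propositional and each $\varphi_j,\psi_j\in\onlp^{t-1}$ by the structure of $\onlp^t$. Arguing as in Theorem~\ref{thm:completeness_onlk}, each $\varphi_0\wedge\neg\varphi_j$ and each $\psi_0\wedge\neg\psi_j$ is ${\axioms^+}^{t-1}$-consistent; hence by (i) at $t{-}1$, $\axioms'\vdash\sat(\varphi_0\wedge\neg\varphi_j)$ and $\axioms'\vdash\sat(\psi_0\wedge\neg\psi_j)$. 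Lemma~\ref{lem:phi_or_psi_is_valid} gives $\models\varphi_0\vee\psi_0$, so by Lemma~\ref{lem:completeness_onlpk}, ${\axioms^+}^{t-1}\vdash\varphi_0\vee\psi_0$; (ii) at $t{-}1$ lifts this to $\axioms'$, and $\mathbf{NEC}_\val$ produces $\axioms'\vdash\val(\varphi_0\vee\psi_0)$. Axiom $\mathbf{V3}$ then yields $\axioms'\vdash\sat$ of the $a$-modal part of $D$, and symmetrically for the $b$-part; $\mathbf{V2}$ handles $\sigma$, and $\mathbf{V4}$ glues the three pieces into $\axioms'\vdash\sat(D)$, hence $\axioms'\vdash\sat(\alpha)$. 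The base case $t=1$ is the same argument with the subformulas now basic: $\axioms'\vdash\sat(\gamma)$ for every $\kffn$-consistent basic $i$-objective $\gamma$ follows by the standard build-up from $\mathbf{V2}$ and $\mathbf{V3}$.

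The main obstacle is orchestrating the mutual dependence of (i) and (ii) across levels, so that neither is invoked circularly, and, at the base, synthesizing $\axioms'\vdash\sat(\gamma)$ for arbitrary consistent basic $i$-objective $\gamma$ from $\mathbf{V2},\mathbf{V3},\mathbf{V4}$ alone. A secondary wrinkle is that $\mathbf{V3}$ as stated mentions only one agent's modalities, so applying it to the two-agent normal form requires the analogous instance for $b$ together with $\mathbf{V4}$ to stitch the $a$- and $b$-modal conjuncts.
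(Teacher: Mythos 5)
Your proposal is correct, but it reaches the forward direction by a genuinely different route. The paper proves ``\( \axioms'\vdash\sat(\alpha) \Rightarrow \alpha \) is \( {\axioms^+}^t \)-consistent'' by induction on the length of the \( \axioms' \)-derivation of \( \sat(\alpha) \), inspecting the axioms \( \mathbf{V2} \)--\( \mathbf{V4} \) that introduce \( \sat \) and certifying consistency of each conclusion semantically, via Lemma~\ref{lem:val_sat_phi_or_psi} and Corollary~\ref{cor:sat_ln} (a converse of Lemma~\ref{lem:phi_or_psi_is_valid}) together with Lemma~\ref{lem:completeness_onlpk}. You instead argue by contraposition through your simulation claim (ii), \( \mathbf{NEC}_\val \), and the consistency of \( \axioms' \); in effect you fold one half of the paper's Theorem~\ref{thm:provable_sentences} into the induction, whereas the paper derives that theorem \emph{afterwards} from Theorem~\ref{thm:sat_means_consistent}. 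Your interleaving is well-founded as you note, and it buys uniformity: it covers every \( \axioms' \)-derivation of \( \sat(\alpha) \) at once, whereas the paper's case analysis only treats \( \sat \)-formulas obtained directly from \( \mathbf{V2} \)--\( \mathbf{V4} \) and is silent about derivations routed through \( \mathbf{V1} \) or propositional reasoning. The price is an appeal to the consistency of \( \axioms' \) (from Halpern and Lakemeyer's soundness theorem), an external fact the paper itself leans on in Lemma~\ref{cor:final}. Your reverse direction (normal form plus \( \mathbf{V2} \)--\( \mathbf{V4} \)) matches what the paper compresses into ``the other direction is symmetric,'' and your two flagged obstacles are real: the base case \( \axioms'\vdash\sat(\gamma) \) for arbitrary \( \kffn \)-consistent basic \( i \)-objective \( \gamma \), and the need to pair the single-agent \( \mathbf{V3} \) with \( \mathbf{V4} \) to stitch the \( a \)- and \( b \)-modal blocks. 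One step to make explicit: identifying the \( \axioms^{s-1} \)-consistency in the side condition of \( \afive^{s} \) with \( {\axioms^+}^{s-1} \)-consistency requires that both systems are sound and complete for the same semantics (Theorems~\ref{thm:soundness_full_lang} and \ref{thm:completeness_onlk} together with Lemma~\ref{lem:completeness_onlpk}); the paper makes the same identification tacitly in the proof of Theorem~\ref{thm:provable_sentences}.
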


\no This allows us to show that \( \axioms' \)  and \( {\axioms^+}^t \) agree on provable sentences. 

\begin{theorem}\label{thm:provable_sentences} For all \( \alpha \in \onlp^t \), \( \axioms' \vdash \alpha \) iff \( {\axioms^+}^t \vdash \alpha \). 

\end{theorem}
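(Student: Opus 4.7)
The plan is to reduce this theorem to Theorem \ref{thm:sat_means_consistent} by bridging the two axiom systems via the internalization of consistency through the $\sat$ operator. The key auxiliary fact I will appeal to is a meta-theorem about Halpern and Lakemeyer's axiomatization: $\axioms' \vdash \sat(\alpha)$ iff $\alpha$ is $\axioms'$-consistent. This is a standard property of $\axioms'$ that falls out of its completeness with respect to the extended canonical model (together with the $\mathbf{V}$-axioms, which force $\sat(\alpha)$ to be derivable whenever $\alpha$ is realized at some maximally consistent set). Once this meta-theorem is in hand, the argument becomes a short chain of equivalences that uses classical propositional reasoning to pivot between provability and consistency.

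Concretely, for the forward direction I will assume $\axioms' \vdash \alpha$, so that $\neg \alpha$ is $\axioms'$-inconsistent, and then use the contrapositive of the meta-theorem to conclude $\axioms' \not\vdash \sat(\neg \alpha)$. Theorem \ref{thm:sat_means_consistent} applied in its ``only if'' form then yields that $\neg \alpha$ is not ${\axioms^+}^t$-consistent, i.e.\ ${\axioms^+}^t \vdash \alpha$. The converse direction runs the same chain in reverse: assuming ${\axioms^+}^t \vdash \alpha$, so that $\neg \alpha$ is not ${\axioms^+}^t$-consistent, Theorem \ref{thm:sat_means_consistent} gives $\axioms' \not\vdash \sat(\neg \alpha)$, which by the meta-theorem means $\neg \alpha$ is $\axioms'$-inconsistent, whence $\axioms' \vdash \alpha$. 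Since both directions are symmetric, the whole result reduces to establishing the meta-theorem together with Theorem~\ref{thm:sat_means_consistent}.

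The main obstacle will be justifying the meta-theorem uniformly for every formula in $\onlp^t$, including those with nested $\val$ operators and alternations of $\know, \nknow$. I expect the cleanest route is induction on $t$: at the base level $\onlp^1$, one reuses the HL-style argument that $\sat(\alpha)$ is provable exactly when $\alpha$ is consistent with $\mathbf{A1}_n$--$\afive^1$, leveraging $\mathbf{V2}$--$\mathbf{V4}$ to build up $\sat$ claims for arbitrary consistent formulas from literal-level seeds. At the inductive step, one must show that the additional proof power of $\afive^{t+1}$ is already reflected in $\axioms'$'s handling of $\sat$ via $\afive'$ combined with $\mathbf{NEC}_\val$; here the depth-parameter structure of $\onlp^t$ is crucial, since it lets us apply the inductive hypothesis to strictly simpler subformulas before invoking $\afive^{t+1}$. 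Once the meta-theorem is established, plugging it into the chain of equivalences above closes the argument.
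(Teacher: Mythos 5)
Your argument is logically sound and reaches the theorem, but it takes a genuinely different route from the paper. The paper proves the equivalence by a direct, axiom-by-axiom simulation of derivations: the shared axioms and rules ($\mathbf{A1}_n$--$\mathbf{A4}_n$, $\mathbf{MP}$, $\mathbf{NEC}$, $\mathbf{NEC}_\val$) transfer trivially; each instance of $\afive'$ is recovered in ${\axioms^+}^t$ by noting (via Theorem~\ref{thm:sat_means_consistent}) that provability of $\sat(\neg\alpha)$ yields ${\axioms^+}^{t-1}$-consistency of $\neg\alpha$ and hence applicability of $\afive^t$; the $\mathbf{V}$-axioms are likewise discharged through Theorem~\ref{thm:sat_means_consistent} and Lemma~\ref{lem:completeness_onlpk}; and conversely each instance of $\afive^t$ is recovered in $\axioms'$ by deriving $\sat(\neg\alpha)$ from consistency. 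You instead work globally at the level of consistency, chaining Theorem~\ref{thm:sat_means_consistent} with an additional meta-theorem --- $\axioms' \vdash \sat(\alpha)$ iff $\alpha$ is $\axioms'$-consistent --- to conclude that the two systems have the same consistent formulas, hence the same theorems. This is a legitimate shortcut, and the meta-theorem is indeed available: it follows from the soundness and completeness of $\axioms'$ with respect to the extended canonical model (which the paper itself imports in Lemma~\ref{cor:final}), since $\sat$ is interpreted globally there. What your route buys is brevity once the meta-theorem is granted; what it costs is that the meta-theorem is essentially Halpern and Lakemeyer's completeness theorem in disguise, so your proposed inductive re-derivation of it (building $\sat$-claims up from $\mathbf{V2}$--$\mathbf{V4}$ through the normal form, including formulas with nested $\val$) would reproduce a substantial portion of their argument --- precisely the machinery the paper's derivation-level simulation is designed to avoid carrying out explicitly. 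If you cite the meta-theorem from \cite{1029713} rather than reproving it, your proof is complete; if you intend to prove it yourself, that step is where the real work lies and would need to be spelled out with the same care as Theorem~\ref{thm:sat_means_consistent}.
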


% is given in the appendix. In the appendix, we also show with some lemmas, and a theorem that, for all \( \alpha \in \onlp^t \), \( \alpha \) is derivable from \( \axioms' \) (Halpern and Lakemeyer's proof theory) iff \( \alpha \) is derivable from the above proof theory, \( {\axioms^+}^t \). Then we obtain the following corollary. 

\begin{lemma}\label{cor:final} For all \( \alpha \in \onlp ^t \), \( \models \alpha \) iff \( \alpha  \) is valid in\\~\cite{1029713}.
	
\end{lemma}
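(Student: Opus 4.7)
The plan is a short chain of equivalences that simply composes results already established in the paper, so the proof will be almost entirely bookkeeping rather than new semantic work. I would first note that Halpern and Lakemeyer's own soundness and completeness theorem tells us that, for all $\alpha \in \onlp^t$, $\alpha$ is valid in their extended-canonical-model semantics iff $\axioms' \vdash \alpha$. This is the cited input from \cite{1029713} and I would invoke it directly rather than reprove it.

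Next, I would stitch together the two bridges already proved in this section. By Theorem \ref{thm:provable_sentences}, $\axioms' \vdash \alpha$ iff ${\axioms^+}^t \vdash \alpha$, so provability in the Halpern--Lakemeyer calculus and in our $t$-stratified calculus coincide on $\onlp^t$. Then by Lemma \ref{lem:completeness_onlpk}, ${\axioms^+}^t \vdash \alpha$ iff $\models \alpha$ in the $k$-structures semantics (extended with the clause for $\val$). Composing the three equivalences yields: $\alpha$ is valid in \cite{1029713} iff $\axioms' \vdash \alpha$ iff ${\axioms^+}^t \vdash \alpha$ iff $\models \alpha$, which is exactly the statement of the lemma.

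The only point that requires mild care, rather than any real difficulty, is making sure the stratification by $t$ lines up on both sides. Specifically, for a given $\alpha \in \onlp^t$ I would check that the $t$ used when applying Theorem \ref{thm:provable_sentences} and Lemma \ref{lem:completeness_onlpk} is the same, so that the provability and validity notions quantified over are matched; this is immediate from how $\onlp^t$ and ${\axioms^+}^t$ are defined inductively in tandem. I anticipate no other obstacle: the heavy lifting (constructing witnessing $k$-structures, handling the $\afive^{t+1}$ circularity, and relating $\sat$ to consistency) has already been done in Theorems \ref{thm:soundness_full_lang}, \ref{thm:completeness_onlk}, \ref{thm:sat_means_consistent}, and \ref{thm:provable_sentences}, so the lemma is really a corollary packaged as a lemma.
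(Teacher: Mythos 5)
Your proof is correct and follows exactly the paper's own argument: chaining Halpern--Lakemeyer's soundness and completeness of \( \axioms' \) with Theorem \ref{thm:provable_sentences} and Lemma \ref{lem:completeness_onlpk}. The paper's proof is just a one-line compression of the same chain of equivalences.
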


\begin{proof} 
	\( \axioms' \) is sound and complete for \cite{1029713}, and \( {\axioms^+}^t \) is sound and complete for \( \models \). 
	\qed \end{proof}

\no Since it can be shown that every \( \alpha \in \onlp \) is provably equivalent to some \( \alpha' \in \onl \) \cite{1029713}, we also obtain the following corollary. 

\begin{corollary}For all \( \alpha \in \onl ^t \), \( \models \alpha \) iff \( \alpha  \) is valid in ~\cite{1029713}.

\end{corollary}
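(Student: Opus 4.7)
The plan is to derive this corollary as a near-immediate consequence of Lemma~\ref{cor:final}, exploiting the fact that $\onl^t$ sits inside $\onlp^t$ as a sublanguage. First, I would record the observation that by the definition of the language extension, $\onl^t \subseteq \onlp^t$: the enriched language $\onlp$ is obtained by adding the $\val$ modality (with $\sat$ as its dual) to $\onl$, and this extension is applied uniformly at each level $t$. So any $\alpha \in \onl^t$ is in particular a formula of $\onlp^t$.

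Next, I would apply Lemma~\ref{cor:final} directly to such an $\alpha$: it gives $\models \alpha$ iff $\alpha$ is valid in \cite{1029713}. Since the semantic clauses 1--8 for formulas not mentioning $\val$ coincide with the original semantics for $k$-structures, and since \cite{1029713} reduces to its $\onl$-fragment when no $\val$-operator appears, the biconditional in the lemma specializes without modification. This establishes the corollary.

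The supporting role of the provable equivalence $\onlp \equiv \onl$ cited from \cite{1029713}, which I would mention explicitly, is to make the restriction to $\onl^t$ meaningful rather than a loss of generality: every formula $\varphi \in \onlp$ is provably equivalent to some $\varphi' \in \onl$, so agreement of the two logics on $\onl^t$ is tantamount to agreement on the full $\onlp^t$. In particular, this translation respects depth in the relevant sense (it does not increase $t$ in a way that would escape the inductive framework), so the appeal to Lemma~\ref{cor:final} at the appropriate level is legitimate.

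There is no real obstacle here, since the corollary is a restriction of the lemma to a subclass of formulas. The only thing to double-check is that the two validity notions—ours, defined over $(k,j)$-models, and that of \cite{1029713}, defined over extended canonical models—coincide on $\val$-free formulas, which is already guaranteed by Lemma~\ref{cor:final}. Hence the proof is essentially a one-line specialization, with the provable-equivalence remark serving only to justify why the corollary is not weaker than the lemma in any substantive sense.
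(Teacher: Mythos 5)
Your proposal is correct and matches the paper's (implicit) argument: the paper likewise obtains the corollary immediately from Lemma~\ref{cor:final}, citing the provable equivalence of \( \onlp \) and \( \onl \) formulas from \cite{1029713}, and your observation that \( \onl^t \subseteq \onlp^t \) together with the coincidence of the two validity notions on \( \val \)-free formulas is exactly the specialization the paper intends. Your extra remark that the equivalence only serves to show the corollary loses no generality is a fair and accurate gloss on the paper's one-sentence justification.
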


% The semantics is characterized by a proof theory \( \axioms'' \) 
% 
% We make the following modifications to our logic. The axioms \( \mathbf{A1}_n-\afive^{t+1} \) are understood to be defined on the new language. For instance, \( \afive^{t+1} \) is defined for an \( i \)-objective \( \neg \alpha \in \onlpk \) that is consistent with \( \mathbf{A1}_n-\afive^t \). 
% We let the proof theory be \( \axioms '' \) as \( \axiomskplus \) and a new inference rule, \( \mathbf{NEC}_\val \) from above. We then modify the  semantics with the following addition: 
% 
% \[ \eak, \ebj, w \models \val(\alpha) ~~~\textrm{iff}~~~  \models \alpha. \]
% 
% \no The notion of satisfiability and validity is retrofitted for the above addition.

% section axiomatizing_validity (end)

\section{Conclusions} % (fold)
\label{sec:conclusions}

This paper has the following new results. We have a first-order modal logic for multi-agent only-knowing that we show, for the first time, generalizes Levesque's semantics. 
Unlike all attempts so far, we neither make use of proof-theoretic notions of maximal consistency nor Kripke structures \cite{DBLP:conf/tark/WaalerS05}. 
The benefit is that the semantic proofs are straightforward, and we understand possible worlds precisely as Levesque meant. We then analyzed a propositional subset, and showed first that the axiom system from Lakemeyer (\citeyear{Lakemeyer1993}) is sound and complete for a restricted language. We used this result to devise a new proof theory that does not require us axiomatize any semantic notions \cite{1029713}. 
Our axiomatization was shown to be sound and complete for the semantics, and its use is straightforward on formulas involving the nesting of \emph{at most} operators.
In the process, we revisited the features of only-knowing and compared the semantical framework to other approaches. Its behavior seems to coincide with our intuitions, and it also captures a multi-agent generalization of Moore's AEL. Finally, although the axiomatization of Halpern and Lakemeyer~(\citeyear{1029713}) is not natural, we showed that they essentially  capture the desired properties of multi-agent only-knowing, but at much expense.

\section{Acknowledgements} % (fold)
\label{sec:acknowledgements}

The authors would like to thank the reviewers for helpful suggestions and comments. The first author is supported by a DFG scholarship from the graduate school GK 643. 

% section acknowledgements (end)
                 
%\nocite{}
\bibliographystyle{jas99}
\bibliography{main}
         
%\vspace{5cm} \vfill
%\clearpage

%\clearpage  

\section{Appendix} % (fold)
\label{sec:appendix_2_}

	\no \textbf{Lemma~\ref{lem:completeness_onlpk}.} \emph{For all \( \alpha \in \onlpk \), \( {\axioms^+}^t \vdash \alpha \) iff \( \models \alpha \).}

\begin{proof} The proof is via induction. Using Theorems \ref{thm:soundness_full_lang} and \ref{thm:completeness_onlk} as the base cases in the induction, there is one additional step on the structure of formulas. 
	
	\emph{Soundness:}~The base case holds for formulas \( \alpha \in \onlk \) for \( \axiomsk \). Suppose now if \( \axiomsk \vdash \alpha \), then \( {\axioms^+}^t \vdash \val(\alpha) \). But if \( \axioms^t\vdash \alpha \) then (by induction hypothesis) at all models \(\eak, \ebj, w \models \alpha \), and so by the definition at all models \( \eak, \ebj, w\models \val(\alpha) \) or \( \models \val(\alpha) \). 
	
	\emph{Completeness.}~For the base case, we know that if for all models  \(\eak, \ebj, w \models \alpha \) then \( \axiomsk \vdash \alpha \). Suppose \( \models \val(\alpha) \), then by definition, for all models  \(\eak, \ebj, w  \models \alpha \) iff (by  hypothesis) \( \axiomsk \vdash \alpha \). So, \( {\axioms^+}^t \vdash \val(\alpha) \). 
\qed \end{proof}

\no \textbf{Theorem \ref{thm:sat_means_consistent}.}~\emph{For all \( \alpha \in \onlpk \), \( \axioms '\vdash \sat(\alpha) \) iff \( \alpha \) is \( {\axioms^+}^t \)-consistent. }

%\end{theorem} 

\begin{proof} 
It is helpful to have the following variant of Lemma \ref{lem:phi_or_psi_is_valid} at hand, and a corollary thereof. 
	
	\begin{lemma}\label{lem:val_sat_phi_or_psi} Suppose \( \phi, \psi \in \onlp ^{t-1} \) are \( i \)-objective \( {\axioms^+}^t \)-consistent formulas, and \( \models \phi \lor \psi \). Then \( \know \phi \land \nknow \psi \) is \( {\axioms^+}^t \)-consistent. 

	\end{lemma}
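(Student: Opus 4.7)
The plan is to establish this converse semantically and then transport the conclusion through completeness. Since $\know\phi \land \nknow\psi$ is a Boolean combination of modalised $\onlp^{t-1}$-formulas, it lies in $\onlp^t$, so by Lemma~\ref{lem:completeness_onlpk} it is ${\axioms^+}^t$-consistent iff it is satisfiable. It therefore suffices to exhibit a model. Without loss of generality take $i = a$; the $i = b$ case is symmetric.

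The key structural observation is that, because $\phi$ and $\psi$ are $a$-objective, their truth at any model is indifferent to the $a$-component of the epistemic state: for any $k$-structure $\eak$, any $(k-1)$-structure $e$ for $b$, and any world $w'$, one has $\eak, e, w' \models \phi$ iff $\{\}, e, w' \models \phi$, and similarly for $\psi$. This is a routine induction on formula structure, using that the only way $\eak$ can enter the evaluation is via a top-level $\aknow$ or $\naknow$, of which an $a$-objective formula has none. Using this, I would fix $k, j$ large enough that $\aknow\phi \land \naknow\psi$ has maximal $a,b$-depth at most $k, j$, and set
$$\eak \;=\; \{\,\langle w', e \rangle \in \worlds \times \mathbb{E}^{k-1} \;:\; \{\}, e, w' \models \phi\,\}.$$
Any $\ebj$ and any world $w$ complete the model; the choices are immaterial because the whole formula is $a$-subjective. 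By construction every pair in $\eak$ satisfies $\phi$, so $\aknow\phi$ holds. For $\naknow\psi$, any $\langle w', e \rangle \notin \eak$ satisfies $\{\}, e, w' \not\models \phi$, hence $\eak, e, w' \not\models \phi$ by the indifference observation; the assumption $\models \phi \lor \psi$ applied to the $(k,j)$-model $(\eak, e, w')$ then forces $\eak, e, w' \models \psi$, which is precisely what $\naknow\psi$ demands.

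This exhibits a model of $\aknow\phi \land \naknow\psi$, and Lemma~\ref{lem:completeness_onlpk} delivers the desired ${\axioms^+}^t$-consistency. I expect the only real difficulty to be bookkeeping: one must track the depth parameters carefully so that $\eak$ is a legitimate $k$-structure at which $\phi$ and $\psi$ can be interpreted at their actual depths, and one must spell out the indifference-to-the-$a$-component lemma cleanly. Notably, the individual consistency hypotheses on $\phi$ and $\psi$ are not actually used by the construction --- if $\phi$ happens to be unsatisfiable then $\eak = \{\}$, and $\models \psi$ then drops out of $\models \phi \lor \psi$ --- but they are natural to include for how the lemma gets used inside Theorem~\ref{thm:sat_means_consistent}.
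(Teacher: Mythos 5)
Your proof is correct and is essentially the paper's argument run in the forward direction: the paper argues by contradiction (assuming inconsistency, deriving $\models \neg\aknow\phi \lor \neg\naknow\psi$ via Lemma~\ref{lem:completeness_onlpk}, and refuting it with the epistemic state $\worlds_\phi \times \mathbb{E}^{k-1}$), whereas you directly exhibit the same kind of model and invoke the same lemma to convert satisfiability into consistency. If anything your version of the construction, $\eak = \{\langle w', e\rangle : \{\}, e, w' \models \phi\}$, is slightly more careful than the paper's $\worlds_\phi \times \mathbb{E}^{k-1}$, since an $a$-objective $\phi$ may mention $\bknow$ and so its truth can depend on the $b$-component of the pair, not just on the world.
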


	\begin{proof} 
		Suppose not. Then \( {\axioms^+}^t\vdash \neg(\aknow \phi \land \naknow \psi) \), that is \( {\axioms^+}^t \vdash \neg \aknow \phi \lor \neg \naknow \psi \). Then, by Lemma \ref{lem:completeness_onlpk}, \( \models \neg \aknow \phi \lor \neg \naknow \psi \). Let \( \worlds_\phi = \{ w \mid w \models \phi \} \). Let \( \eak = \worlds_\phi \times \mathbb{E}^{k-1} \) be a \( e^k \) for Alice. Then clearly, \( \eak, \{\}, w\not\models \neg \aknow \phi \). It must be then that \( \eak, \{\}, w \models \neg \naknow \psi \). Then there is some \( \lan w', \ebkmin \ran \not\in\eak \) such that \( \eak, \ebkmin, w' \models \neg \psi \). And clearly, for all \( \lan w', \ebkmin \ran \not\in \eak \), \( \eak, \ebkmin, w' \models \neg \phi \) (by construction). It follows that there is a \( \lan w', \ebkmin \ran \not \in \eak \) where \( \eak,\\ \ebkmin, w' \models \neg (\phi \lor \psi) \), contradicting the validity of \( \phi \lor \psi \). \qed \end{proof}

	% 
	% \begin{corollary}\label{cor:sat_ln} Suppose \( \alpha, \beta_1, \ldots \beta_k, \gamma, \delta_1, \ldots \delta_m \in \onlp ^{t-1} \), are \( {\axioms^+}^t \)-consistent and \( i \)-objective, and \( \models \alpha \lor \gamma \). Then \( \know \alpha \land \neg \know \neg \beta_1  \ldots \land \nknow \gamma \land \neg \nknow \delta_1  \ldots \land \neg \nknow \neg \delta_m \) is \( {\axioms^+}^t \)-consistent. \end{corollary}

	\begin{corollary}\label{cor:sat_ln} Suppose \( \alpha, \beta_1, \ldots \beta_k, \gamma, \delta_1, \ldots \delta_m \in \onlp ^{t-1} \), are \( i \)-objective \( {\axioms^+}^t \)-consistent formulas, and \( \models \alpha \lor \gamma \). Then \( \know \alpha \land \neg \know \neg \beta_1  \ldots \land \neg \know \neg \beta_k \land \nknow \gamma \land \neg \nknow \delta_1  \ldots \land \neg \nknow \neg \delta_m \) is \( {\axioms^+}^t \)-consistent. \end{corollary}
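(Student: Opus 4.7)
The plan is to argue semantically, paralleling the construction in Lemma \ref{lem:val_sat_phi_or_psi}. By Lemma \ref{lem:completeness_onlpk}, ${\axioms^+}^t$-consistency coincides with satisfiability, so it suffices to build a $(k',j')$-model satisfying the displayed conjunction. I read the hypothesis in the spirit of axiom $\mathbf{V3}$: each $\alpha \land \beta_j$ and each $\gamma \land \delta_{j'}$ is ${\axioms^+}^t$-consistent (if not, $\aknow \alpha \land \neg \aknow \neg \beta_j$ is itself inconsistent via $\mathbf{A2}_n$ and $\mathbf{NEC}$, so the claim would be vacuous without this). By Lemma \ref{lem:completeness_onlpk} each such conjunction is satisfiable, producing a $k$-structure $N_j$ satisfying $\alpha \land \beta_j$ and a $k$-structure $M_{j'}$ satisfying $\gamma \land \delta_{j'}$.

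Without loss of generality let $i=a$. Since $\alpha$ is $a$-objective, its truth at a model depends only on the second and third components, so the set $\worlds_\alpha = \{\langle w, e_b^{k-1}\rangle \mid \{\}, e_b^{k-1}, w \models \alpha\}$ is well-defined, and by $\models \alpha \lor \gamma$ its complement in $\mathbb{E}^k$ lies entirely in the analogously defined $\worlds_\gamma$. I would then set
\[
\eak \;=\; \bigl(\worlds_\alpha \cup \{N_1,\ldots,N_k\}\bigr)\setminus \{M_1,\ldots,M_m\}
\]
and verify each conjunct: $\aknow \alpha$ holds because $\eak \subseteq \worlds_\alpha$ (each $N_j$ satisfies $\alpha$, and we only delete structures); $\naknow \gamma$ holds because every structure outside $\eak$ is either a non-$\alpha$ structure (hence in $\worlds_\gamma$ by the valid disjunction) or one of the removed $M_{j'}$ (also in $\worlds_\gamma$); $\neg \aknow \neg \beta_j$ is witnessed by $N_j \in \eak$; and $\neg \naknow \neg \delta_{j'}$ is witnessed by $M_{j'} \notin \eak$. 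A symmetric construction yields the $b$-structure, and any propositional valuation serves as the world~$w$.

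The fragile point I expect to carry the most care is a potential collision between some $N_j$ and some $M_{j'}$, in which case $\eak$ as written loses its $\beta_j$-witness. The remedy is to pick witnesses in sequence: first fix all the $M_{j'}$, then for each $\beta_j$ choose a satisfying $k$-structure for $\alpha \land \beta_j$ \emph{outside} the finite exclusion set $\{M_1,\ldots,M_m\}$. Such a structure must exist because $\alpha \land \beta_j$ is satisfiable and $\Phi$ is infinite, so one can perturb any model on propositions not mentioned in the formula to obtain infinitely many distinct satisfying $k$-structures and thus avoid any finite collision set. Once this witness-disjointness step is discharged, the model-construction and verification are routine, and the desired ${\axioms^+}^t$-consistency follows by Lemma \ref{lem:completeness_onlpk}.
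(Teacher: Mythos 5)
Your proof is correct, and it supplies an argument the paper itself omits: the corollary is stated as an immediate consequence of Lemma \ref{lem:val_sat_phi_or_psi} with no proof, and your direct model construction (take \( \eak \) to be essentially all pairs \( \lan w, \ebkmin \ran \) satisfying \( \alpha \), adjusted by finitely many witnesses) is the natural generalization of the construction \( \worlds_\phi \times \mathbb{E}^{k-1} \) used in that lemma's proof, run forwards via soundness rather than by contraposition via completeness. Two of your refinements deserve emphasis because the paper glosses over both. First, you correctly observe that the hypothesis must be read as consistency of the conjunctions \( \alpha \land \beta_j \) and \( \gamma \land \delta_{j'} \), not merely of the individual formulas; as literally stated the corollary is false (take \( \beta_1 \) consistent but inconsistent with \( \alpha \): then \( \know\alpha \land \neg\know\neg\beta_1 \) is already refutable by \( \mathbf{NEC} \) and \( \mathbf{A2}_n \)), and the stronger reading is exactly what is available at the point of application in Theorem \ref{thm:sat_means_consistent}, where the premises are \( \sat(\alpha\land\beta_j) \) and \( \sat(\gamma\land\delta_{j'}) \). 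Second, the witness-collision issue between the \( N_j \) and \( M_{j'} \) is real (nothing prevents a single structure from satisfying both \( \alpha\land\beta_j \) and \( \gamma\land\delta_{j'} \)), and your remedy --- fixing the \( M_{j'} \) first and then choosing each \( N_j \) outside that finite set, which is possible because perturbing the world component on atoms not occurring in \( \alpha\land\beta_j \) yields infinitely many distinct satisfying pairs --- is sound. The only cosmetic remark is that \( \worlds_\alpha \cup \{N_1,\ldots,N_k\} = \worlds_\alpha \) since each \( N_j \) already satisfies \( \alpha \), so the union is redundant; the substance of the construction is the exclusion of the \( M_{j'} \) and the guarantee that each \( \beta_j \)-witness survives it.
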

	
\no	Returning to Theorem \ref{thm:sat_means_consistent}:~Proof on the  \emph{length} of the derivative, using induction on \( t \). Let \( \alpha \) be a consistent  propositional formula. Then, by \( \mathbf{V2} \), \( \axioms' \vdash \sat(\alpha) \). Since it is a consistent propositional formula, it is also \({\axioms^+}^t \)-consistent. Assume theorem holds for \( \alpha \in \onlp^{t-1} \). Suppose we have   \( \sat(\alpha \land \beta_k), \sat(\gamma \land \delta_m), \neg \sat(\neg (\alpha \lor \gamma)) \) \( \in \onlp^{t-1} \) then by \( \mathbf{V3} \), \( \axioms' \vdash \sat(\know \alpha \land \neg \know \neg \beta_k \land \nknow \gamma \land \neg \nknow \neg \delta_m) \). By  hypothesis \( \alpha \land \beta_k \), \( \gamma \land \delta_m \) are \( {\axioms^+}^t \)-consistent. And \( \neg (\alpha \lor \gamma) \) is not \( {\axioms^+}^t \)-consistent, and so \( {\axioms^+}^t \vdash \alpha \lor \gamma \). By Lemma \ref{lem:completeness_onlpk}, \( \models \alpha \lor \gamma \). Clearly, by Corollary \ref{cor:sat_ln}, \( \know \alpha \land \neg \know \neg \beta_k \land \nknow \gamma \land \neg \nknow \neg \delta_m \) is \( {\axioms^+}^t \)-consistent. Finally, suppose that you have \( \sat(\alpha) \) for some \( i \)-objective \( \alpha \) and \( \sat(\beta) \) for some \( i \)-subjective \( \beta \), then by \( \mathbf{V4} \), \( \axioms' \vdash \sat(\alpha \land \beta) \). By induction hypothesis, \( \alpha  \) and \( \beta \) are \( {\axioms^+}^t \)-consistent. By Lemma \ref{lem:completeness_onlpk}, \( \alpha \) is satisfiable and \( \beta \) is satisfiable, and so is \( \alpha \land \beta \). By Lemma \ref{lem:completeness_onlpk}, \( \alpha \land \beta \) is \( {\axioms^+}^t \)-consistent. The other direction is symmetric. 
	\qed \end{proof}

%\begin{theorem} 
	
\no \textbf{Theorem \ref{thm:provable_sentences}.}~\emph{\( \axioms' \vdash \alpha \) iff \( {\axioms^+}^t \vdash \alpha \), for \( \alpha \in \onlp^t. \) }

%\end{theorem}

\begin{proof} Since axioms \( \mathbf{A1}_n-\mathbf{A4}_n,\) $\mathbf{MP},$ $\mathbf{NEC},$ $\mathbf{NEC}_\val$ are
common to both, their use is not discussed. To show that \( \axioms' \vdash
\alpha \Rightarrow {\axioms^+}^t \vdash \alpha \), suppose you had \( \sat(\neg
\alpha) \) for some \( i \)-objective \( \alpha \in \onlp ^{t-1} \) then using \( \afive' \),
one could show that \( \nknow \alpha \supset \neg \know \alpha\). From Theorem \ref{thm:sat_means_consistent}, we also know \( \neg\alpha \) is
\( {\axioms^+}^{t-1} \)-consistent. Then, we can show \( \nknow
\alpha \supset \neg \know \alpha \) as well using \( \afive^t \). \( \mathbf{V2}, \mathbf{V3},
\mathbf{V4} \) follow immediately from Theorem \ref{thm:sat_means_consistent}.
Assuming now that the proof holds for base cases, using \textbf{V1}, if \( \axioms' \vdash \val(\alpha) \) and \( \axioms' \vdash \val(\alpha \supset \beta) \) then \( \axioms' \vdash \val(\beta) \). Now, by induction hypothesis, \( {\axioms^+}^t \vdash \val(\alpha) \) iff by Lemma \ref{lem:completeness_onlpk} \( \models \val(\alpha) \), and so \( \models \alpha \). Similarly, \( \models \alpha \supset \beta \), and thus, \( \models \beta \)  and \( \models 
\val(\beta) \) by the semantics. By Lemma~\ref{lem:completeness_onlpk}, \({\axioms^+}^t \vdash \val(\beta) \). 
	
	To show that \( {\axioms^+}^t \vdash \alpha \Rightarrow \axioms' \vdash \alpha \), suppose \( \neg \alpha \in \onlp ^{t-1} \) is \( i \)-objective and \( \axioms ^{t-1} \)-consistent, then one can prove \( \nknow 
\alpha \supset \neg \know \alpha \). Now,  \( \neg\alpha \) is also \( {\axioms^+}^t \)-consistent and by Theorem \ref{thm:sat_means_consistent}, \( \axioms' \vdash \sat(\neg\alpha) \). 
% Again by Theorem \ref{thm:sat_means_consistent}, \( \neg\alpha \) is also \( {\axioms^+}^t \)-consistent and therefore \( \axioms' \vdash \sat(\neg\alpha) \). 
Then we can prove \( \nknow \alpha \supset \neg \know \alpha \), as desired.  
\qed \end{proof}

% 
% \no \textbf{Corollary~\ref{cor:final}.} For all \( \alpha \in \onlpk \), \( k \)-structures and \cite{1029713} agree on the set of valid sentences.
% 

% subsection halpern_and_lakemeyer_s_proposal (end)

\end{document}